\documentclass[10pt,]{article} 
\usepackage[numbers]{natbib}
\usepackage[english]{babel}
\usepackage{paralist}
\usepackage{amsmath,amssymb,amsthm,amsfonts,color,bbm}
\usepackage{thmtools,thm-restate}
\usepackage[left=1in,right=1in,top=1in,bottom=1in]{geometry}
\usepackage{tikz}
\usetikzlibrary{shapes}
\usepackage{float}
\usepackage{url}

\newtheorem{definition}{Definition}
\newtheorem{theorem}{Theorem}
\newtheorem{lemma}{Lemma}

\newtheorem{proposition}{Proposition}




\renewcommand{\c}{\mathcal}
\renewcommand{\b}{\mathbb}
\renewcommand{\P}{\text{\normalfont P}}
\newcommand{\bP}{\mathbb{P}}

\newcommand{\argmax}[1]{\underset{#1}{\text{argmax}}}
\newcommand{\argmin}[1]{\underset{#1}{\text{argmin}}}
\DeclareMathOperator*{\altargmin}{arg\,min}

\renewcommand{\hat}{\widehat}
\renewcommand{\tilde}{\widetilde}
\renewcommand{\nu}{\vartheta}
\newcommand{\abs}[1]{\left| #1 \right|}
\newcommand{\sset}[1]{\left\{#1\right\}}

\newcommand{\Da}{\sqrt{\frac{VC(\c{H})+\log{1/\delta}}{n}}}
\newcommand{\epsa}{\max_{ya}\sqrt{\frac{VC(\c{H})+\log{1/\delta}}{n\P_{ya}}}}

\newcommand{\Db}{\max_{ya}\sqrt{\frac{VC(\c{H})+\log{1/{\delta}}}{n\P_{ya}}}}
\newcommand{\epsb}{\sqrt{\frac{\log{1/\delta}}{n\P_{ya}}}}
\newcommand{\Yfair}{Y^*}

\newcommand\independent{\protect\mathpalette{\protect\independenT}{\perp}}
\def\independenT#1#2{\mathrel{\rlap{$#1#2$}\mkern2mu{#1#2}}}

\newcommand{\removed}[1]{}






\title{\vspace{-15mm}\rule{6.5in}{2pt}\\Learning Non-Discriminatory Predictors\\\rule[2mm]{6.5in}{0.5pt}}
\author{}
\date{}
\begin{document}
\maketitle
\vspace{-21mm}
\noindent
\textbf{Blake Woodworth} \hfill \url{blake@ttic.edu}\\
\textbf{Suriya Gunasekar}\hfill \url{suirya@ttic.edu}\\
\textbf{Mesrob I. Ohannessian}\hfill \url{mesrob@ttic.edu}\\
\textbf{Nathan Srebro} \hfill \url{nati@ttic.edu}\\
{\small Toyota Technological Institute at Chicago, Chicago, IL 60637, USA}

\vspace{5mm}

\begin{abstract}
  We consider learning a predictor which is non-discriminatory with
  respect to a ``protected attribute'' according to the notion of
  ``equalized odds'' proposed by  \cite{hardt2016equality}.  We study 
  the problem of learning such a
  non-discriminatory predictor from a finite training set, both
  statistically and computationally.  We show that a post-hoc
  correction approach, as suggested by Hardt et al, can be highly
  suboptimal, present a nearly-optimal statistical procedure,
  argue that the associated computational problem is intractable, and suggest
  a second moment relaxation of the non-discrimination definition for
  which learning is tractable.
\end{abstract}

\section{Introduction}
Machine learning algorithms are increasingly deployed in important
decision making tasks that affect people's lives significantly. These
tools already appear in domains such as lending, policing,
criminal sentencing, and targeted service offerings.  In many of these
domains, it is morally and legally undesirable to
discriminate based on certain ``protected attributes'' such as race
and gender.  Even in seemingly innocent applications, such as ad
placement and product recommendations, such discrimination might be
illegal or detrimental.  Consequently, there has been abundant public, academic
and technical interest in notions of non-discrimination and fairness,
and achieving ``equal opportunity by design'' is a major United
States national Big Data challenge, \cite{WhiteHouse16}.


We consider non-discrimination in supervised learning where
the goal is to learn a (potentially randomized) predictor $h(X)$
or\footnote{See \cite{hardt2016equality} for a discussion on why it
  might be necessary for a non-discriminatory predictor to use $A$
 } $h(X,A)$ for a target quantity $Y$ using features
$X$ and a protected attribute $A$, while
ensuring non-discrimination with respect to $A$. As an illustrative
example, consider a financial institution that wants to predict
whether a particular individual will pay back a loan or not,
corresponding to $Y = 1$ and $Y = 0$, respectively.  The features $X$ could include
financial as well as other information, e.g.~about education, driving,
and housing history, languages spoken, and the number of members
in the household, all of which have a potential of being
used inappropriately as a surrogate for a protected attribute $A$,
such as gender or race.  It is important that the predictor for loan
repayment not be even implicitly discriminatory with respect to $A$.


Recent work has addressed the issue of defining what it means to be non-discriminatory---both in the context of supervised learning e.g.~\cite{dwork2012fairness,pedreshi2008discrimination,feldman2015certifying}, and otherwise e.g.~\cite{joseph2016fairness,joseph2016rawlsian}. The particular notion of non-discrimination we consider here is
``equalized odds'', recently presented and studied by
\citet{hardt2016equality}:
\begin{definition}[Equalized odds] \label{def:non-discrimination}
A possibly randomized predictor $\hat{Y}\!\! = h(X,A)$ for target  
$Y$ is non-discriminatory with respect to a protected attribute $A$ if 
$\hat{Y}$ is independent of $A$ conditioned on $Y$.
\end{definition}
Informally, we require that even when the correct label $Y$ provides
information about the protected attribute $A$, if we already know $Y$,
the prediction $\hat{Y}$ does not provide any {\em additional} information about
$A$.  The definition can also be motivated in terms incentive
structure and of moving the burden of uncertainty from the protected
population to the decision maker.  See \citet{hardt2016equality} for
further discussion of the definition, its implications, and comparisons
to alternative notions.

In a binary prediction task with binary protected attribute, i.e.  $\hat{Y},A,Y\in\{0,1\}$, Definition
\ref{def:non-discrimination} can be qualified in terms of true and
false positive rates. Denote the group-conditional true and false
positive rates as,
\begin{equation} \label{eq:gammapop}
\gamma_{ya}(\hat{Y}) := \mathbb{P}( \hat{Y} = 1\ |\ Y = y, A = a ),
\end{equation}
Then Definition \ref{def:non-discrimination} is equivalent to
requiring that the class conditional true and false positive rates
agree across different groups (different values of $A$):
\begin{equation}
\gamma_{00}(\hat{Y}) = \gamma_{01}(\hat{Y}) \qquad \textrm{and}\qquad \gamma_{10}(\hat{Y}) = \gamma_{11}(\hat{Y})
\end{equation}
Returning to the loan example, this definition requires that the
percentage of men who are wrongly denied loans even though they would
have paid it back must match the corresponding percentage for
women, 
and similarly the percentage of men who are wrongly given loans that
they will not pay back must also match the corresponding percentage of women. 
This does
\emph{not} however require that the same percentage of male and female
applicants will receive loans. For instance, if women pay back loans
with \textit{truly} higher frequency than men, then the predictor
would be allowed to deny loans to men more often than women.

While \citeauthor{hardt2016equality}~focused on the notion itself and how it
behaves on the population, in this work we tackle the problem of how to learn
a good non-discriminatory predictor (i.e.~satisfying the equalized
odds) from a finite training
set.  We examine this both from a statistical perspective of how to
best obtain a predictor from finite data that would be as accurate and
non-discriminatory as possible on the population, and from a
computational perspective.  

One possible approach to learning a non-discriminative predictor is the
{\em post hoc correction} proposed by \citet{hardt2016equality}: first learn a
good, possibly discriminatory predictor. Afterwards, this predictor is ``corrected'' by taking into account $A$
 in order to make the predictor non-discriminatory.  When
$Y$ is binary and the predictors $\hat{Y}$ are real-valued, they show that the unconstrained Bayes
optimal least-square regressor can be post hoc corrected to the
optimal predictor with respect to the 0-1 loss.  In Section
\ref{sec:suboptimalityposthoc}, we consider more carefully the
limitations of such a post hoc procedure. In particular, we show that
this approach can fail for the 0-1 and hinge losses, even if the Bayes
optimal predictor with respect to those losses is learned in the
first step.  We also show that even when minimizing the squared loss,
the approach can fail once the hypothesis
class is constrained, as is essential when learning from finite data.  From this, we
conclude that post hoc correction is not sufficient, and that it is
necessary to directly incorporate non-discrimination into the learning
process.

Turning to learning from finite data, we cannot hope to ensure exact
non-discrimination on the population.  To this end, in Section
\ref{sec:detection} we define a notion of approximate
non-discrimination, motivate it, and explore its limits by
analyzing the statistical problem of detecting whether or not a
predictor is at least $\alpha$-discriminatory.

We then turn to the main statistical question: given a finite training
set, how can we best learn a predictor that is ensured to be as
non-discriminatory as possible (on the population) and competes (in
terms of its population loss) with the best non-discriminatory
predictor in some given hypothesis class (this is essentially an
extension of the notion of agnostic PAC learning with a
non-discrimination constraint).  In Section \ref{sec:binary} we show
that an ERM-type procedure, minimizing the training error subject
to an empirical non-discrimination constraint, is statistically
sub-optimal, and instead we present a statistically optimal (up to
constant factors) two-step learning procedure for non-discriminatory
binary classification.

Unfortunately, learning a non-discriminatory binary classifier is
computationally hard, which we prove in Section \ref{sec:hardness}.
In order to allow tractable training, in Section \ref{sec:2ndorder},
we present a relaxation of equalized odds, based only on a
second-moment condition instead of full conditional independence. We
show that under this second moment notion of non-discrimination it is
computationally tractable to learn a nearly optimal non-discriminatory
linear predictor with respect to a convex loss.

\section{Sub-optimality of post hoc correction} \label{sec:suboptimalityposthoc}
When the protected attribute $A$ and the target $Y$ are both binary, the post hoc correction algorithm proposed by \cite{hardt2016equality} can be applied to a binary or real-valued predictor $\hat{Y} \in \mathcal{H}$, deriving a randomized binary predictor that is non-discriminatory. The algorithm is convenient because it requires access only to the joint distribution over $(\hat{Y}, A, Y)$ and does not use the features $X$, thus it can be applied retroactively to an already trained predictor. Such predictors are formulated using the notion of a derived predictor:
\begin{definition} [Definition 4.1 in \cite{hardt2016equality}]\label{def:derivedpredictor}
A predictor $\tilde{Y}$ is \textbf{derived} from a random variable $R$ and protected attribute $A$ if it is a possibly randomized function of $(R,A)$ alone. In particular, $\tilde{Y}$ is independent of $X$ conditioned on $(R,A)$.
\end{definition}
For binary classification, the optimal post hoc correction $\tilde{Y}$ for a binary or real valued predictor $\hat{Y}\in \b{R}$ is simply the non-discriminatory, derived, binary predictor that minimizes the expectation of a loss $\ell$ over binary variables \citep{hardt2016equality}:
\begin{equation}
\begin{aligned} \label{eq:post-hocalgorithm}
\tilde{Y} = \argmin{f:\mathbb{R}\times\sset{0,1}\mapsto \sset{0,1}}&\ \mathbb{E}\,\ell\left( f(\hat{Y}, A), Y \right)\\
\textrm{s.t.}&\quad \gamma_{y0}(f) = \gamma_{y1}(f)\qquad \forall y = \sset{0,1}
\end{aligned}
\end{equation}
Two notable features of the corrected predictor $\tilde{Y}$ are that \textbf{a)} it is not constrained to any particular hypothesis class, and \textbf{b)} it may be a random function of $\hat{Y}$ and $A$; indeed for many distributions and hypothesis classes there may not even exist a non-constant, deterministic, non-discriminatory predictor. Nevertheless, $\tilde{Y}$  does indirectly depend on the hypothesis class $\c{H}$ from which $\hat{Y}$ was learned and  possibly a different loss over real valued variables used in training of $\hat{Y}$. 

We are interested in comparing the optimality of $\tilde{Y}$ from post hoc correction to the following $\Yfair$ which is the optimal non-discriminatory predictor in a hypothesis class $\c{H}$ under consideration:
\begin{equation}
\begin{aligned}
\Yfair = \argmin{h \in \mathcal{H}}\ \mathbb{E}\,\ell\left( h(X, A), Y \right)\qquad \textrm{s.t.}\quad \ \gamma_{y0}(h) = \gamma_{y1}(h)\qquad \forall y = \sset{0,1}
\end{aligned}
\end{equation}
Ideally, the expected loss of $\tilde{Y}$ would compare favorably against that of $Y^*$.
Indeed, \cite{hardt2016equality} show that when the target $Y$ is binary, if we can first find a predictor  $\hat{Y}$ that is exactly or nearly Bayes optimal for the squared loss over an unconstrained hypothesis class, then applying the post hoc correction \eqref{eq:post-hocalgorithm} using the 0-1 loss (i.e.~with $\ell = \ell^{01}$ in \eqref{eq:post-hocalgorithm}) to  $\hat{Y}$ will yield a predictor $\tilde{Y}$ that is non-discriminatory and has loss no worse than $Y^*$. This statement can be extended to the case of first finding the optimal \emph{unconstrained} predictor with resepect to any \emph{strictly convex} loss, and then using the post hoc correction \eqref{eq:post-hocalgorithm} with the {0-1 loss}. 

Nevertheless, from a practical perspective this approach is very unsatisfying. First, for general distributions, it is impossible to learn the Bayes optimal predictor from finite samples of data. Also, as we will show, the post hoc correction of even the optimal unconstrained predictor with respect to the 0-1 (non-convex) or even hinge (non-strict but convex) losses can have much worse performance than the best non-discriminatory predictor. Moreover, if the hypothesis class is restricted there can also be a gap between the post hoc correction of the optimal predictor in the hypothesis class and the best non-discriminatory predictor, even when optimizing a strictly convex loss function.

In the following example, we see that when the loss function is not strictly convex, the post hoc correction of even the unconstrained Bayes optimal predictor can have poor accuracy:
\begin{restatable}{example}{exampleone}\label{theorem:Step2LowerBound01}
When the hypothesis class is unconstrained, for any $\epsilon \in (0,1/4)$ there exists a distribution $\mathcal{D}_\epsilon$ such that \textbf{a)} the  optimal non-discriminatory predictor  $Y^*$  with respect to the 0-1 loss has loss at most $2\epsilon$ but \textbf{b)} for unrestricted Bayes optimal predictor  $\hat{Y}$ trained on  0-1 loss, the post hoc correction of $\hat{Y}$ with 0-1 loss returns a predictor $\tilde{Y}$ with loss at least $0.5$. 

A similar statement can also be made about predictors trained on hinge loss. For an unconstrained hypothesis class, for any $\epsilon \in (0,1/4)$ and the same distribution $\mathcal{D}_\epsilon$, \textbf{a)} the  optimal non-discriminatory predictor $Y^*$ with respect to the hinge loss has loss at most $4\epsilon$ but \textbf{b)} the post hoc correction of the Bayes optimal unrestricted predictor trained on hinge loss has loss $1$.
\end{restatable}
\noindent
We construct $\mathcal{D}_\epsilon$ as follows:\\
\noindent
\begin{minipage}{.2\textwidth}
\vspace{2mm}
\begin{tikzpicture}
\node[shape=circle,draw=black] (X) at (2.5,0) {$X$};
\node[shape=circle,draw=black] (Y) at (1,0.6) {$Y$};
\node[shape=circle,draw=black] (A) at (2.5,1.2) {$A$};

\path [->] (Y) edge node[left] {} (X);
\path [->] (Y) edge node[left] {} (A);
\end{tikzpicture} 
\vspace{2mm}
\end{minipage}
\begin{minipage}{.8\textwidth}
\vspace{-5mm}
\begin{equation} \label{eq:Depsilon}
\begin{aligned}
&X,A,Y \in \sset{0,1} \qquad\qquad &&\mathbb{P}_{\mathcal{D}_\epsilon}(Y = 1) = \frac{1}{2}\\
&\mathbb{P}_{\mathcal{D}_\epsilon}(A = y\ |\ Y = y) = 1 - \epsilon  \qquad &&\mathbb{P}_{\mathcal{D}_\epsilon}(X = y\ |\ Y = y) = 1 - 2\epsilon
\end{aligned}
\end{equation}
\end{minipage}

\noindent
Both $X$ and $A$ are highly predictive of $Y$, but $A$ is slightly more so. Therefore, minimizing either the 0-1 or the hinge loss, without regard for non-discrimination, returns $\hat{Y}=A$ and ignores $X$ entirely. Consequently, $\gamma_{y1}(\hat{Y}) = 1$ and $\gamma_{y0}(\hat{Y})=0\neq \gamma_{y1}(\hat{Y})$ so the Bayes optimal predictor is discriminatory, and the post hoc correction, which is required to be non-discriminatory and derived from $\hat{Y}=A$, is forced to return a constant predictor even though returning $Y^*=X$ would be accurate and non-discriminatory. A more detailed proof is included in Appendix \ref{appendix:sec2-ex1}

In the second example, we show that when the hypothesis class is restricted, the correction of the optimal regressor in the class can yield a suboptimal classifier, even with squared loss. 
\begin{restatable}{example}{exampletwo}\label{theorem:Step2LowerBoundConvex}
Let $\mathcal{H}$ be the class of linear predictors with $L^1$ norm at most $\frac{1}{2} - 2\epsilon$, for some $\epsilon \in (2/25,1/4)$. There exists a distribution $\mathcal{D}_\epsilon$ such that \textbf{a)} the optimal non-discriminatory predictor  in  $\mathcal{H}$ with respect to the squared loss has square loss at most $\frac{1}{16} +\frac{3\epsilon}{2} + 3\epsilon^2$, but \textbf{b)} the post hoc correction of the Bayes optimal square loss regressor in $\mathcal{H}$ returns a constant predictor which has (trivial) square loss of $1/4$.

Similarly, for the class $\mathcal{H}$ of sparse linear predictors, for any $\epsilon \in (0,1/4)$, there exists a distribution $\mathcal{D}_\epsilon$ such that \textbf{a)} the optimal non-discriminatory predictor in $\c{H}$ with respect to the squared loss has square loss at most $2\epsilon - 4\epsilon^2$, but \textbf{b)} the post hoc correction of the Bayes optimal squared loss regressor  in $\c{H}$ again returns a constant predictor which has (trivial) square loss of $1/4$.
\end{restatable} 

The distribution $\mathcal{D}_\epsilon$ is the same as was defined in \eqref{eq:Depsilon}. Again, $A$ is slightly more predictive of $Y$ than $X$, and since the sparsity or the sparsity surrogate $L^1$ norm of the predictor is constrained by the hypothesis class, the Bayes optimal predictor chooses to use just the feature $A$ and ignore $X$. Consequently, the optimal predictor is extremely discriminatory, and the post hoc correction algorithm will return a highly sub-optimal constant predictor which performs no better than chance. Details of the proof are deferred to Appendix \ref{appendix:sec2-ex2}

From these examples, it is clear that simply finding the optimal predictor with respect to a particular loss function and hypothesis class and correcting it post hoc can perfom very poorly. We conclude that in order to learn a predictor that is simultaneously accurate \emph{and} non-discriminatory in the general case, it is essential to account for non-discrimination during the learning process. 

\section{Detecting Discrimination in Binary Predictors} \label{sec:detection}
 In the following sections, we look at tools for integrating non-discrimination  into the supervised learning framework. In formulating algorithms for learning non-discriminatory predictors, it is important to consider the non-asymptotic behavior under finite samples.
Towards this, one of the first issues to be addressed is that, using finite samples it is not feasible to ensure, or even verify if  a predictor $\hat{Y}$ satisfies the non-discrimination criterion in Definition~\ref{def:non-discrimination}.  
This necessitates defining a notion of approximate non-discrimination which can be computed using finite samples and which asymptotically generalizes to  the equalized odds criteria in the population. 

Let us consider the task of binary classification, where both $A,Y \in \sset{0,1}$ and the predictors $\hat{Y}$ output values in $\sset{0,1}$. Recall the definition of the population group-conditional true and false positive rates $\gamma_{ya}(\hat{Y}) = \mathbb{P}(\hat{Y} = 1\ |\ Y = y, A = a)$ and the fact that non-discrimination is equivalent to satisfying $\gamma_{00} = \gamma_{01}$ and $\gamma_{10} = \gamma_{11}$. 

For a set of of $n$ i.i.d.~samples,  $S = \sset{(x_i,a_i,y_i)}_{i=1}^n\sim\b{P}^n(X,A,Y)$, the sample analogue of $\gamma_{ya}$ is defined as follows, 
\begin{equation}
\gamma^S_{ya}(\hat{Y})=\frac{1}{n^S_{ya}}\sum_{i=1}^n\hat{Y}(x_i, a_i)\mathbf{1}(y_i = y, a_i = a)\text{, where }n^S_{ya} = \sum_{i=1}^n \mathbf{1}(y_i = y, a_i = a).
\label{eq:gamma_sample}
\end{equation}
To ensure non-discrimination, we could possibly require $\gamma^S_{y0}=\gamma^S_{y1}$ on a large enough sample $S$, however this not ideal for two reasons.  First, even when $\gamma^S_{y0} = \gamma^S_{y1}$ on $S$, this almost certainly does not ensure that $\gamma_{y0} = \gamma_{y1}$ on the population. For this same reason, it is impossible to be certain that a given predictor is  non-discriminatory on the population. Moreover, if $n^S_{y0}\neq n^S_{y1}$, it is typically not feasible  to match $\gamma^S_{y0} = \gamma^S_{y1}$ for non-trivial predictors, e.g.~if~$n^S_{y0} = 2$ and $n^S_{y1} = 3$, then $\gamma^S_{y0} \in \sset{0, \frac{1}{2}, 1}$ but $\gamma^S_{y1} \in \sset{0, \frac{1}{3}, \frac{2}{3}, 1}$, thus the only predictors with $\gamma^S_{y1}=\gamma^S_{y0}$ would be the ones which are constant conditioned on $Y$, i.e.  the constant predictors $\hat{Y}=0$ and $\hat{Y}=1$,  or the perfect predictor $\hat{Y}=Y$.

For these reasons, we define the following notion of approximate non-discrimination, which \emph{is} possible to ensure on a sample and, when it holds on a sample, generalizes to the population.
\begin{definition}[$\alpha$-discrimination]\label{def:approxnondiscrimination}\!\!
A possibly randomized binary predictor $\hat{Y}$ is {$\alpha$-discriminatory} with respect to a binary protected attribute $A$ on the population or on a sample $S$ if, respectively, 
\begin{equation}
\Gamma(\hat{Y}) := \max_{y \in \sset{0,1}} \abs{\gamma_{y0}(\hat{Y}) - \gamma_{y1}(\hat{Y})} \leq \alpha \quad \textrm{or}\quad\Gamma^S(\hat{Y}) := \max_{y \in \sset{0,1}} \abs{\gamma^S_{y0}(\hat{Y}) - \gamma^S_{y1}(\hat{Y})} \leq \alpha.\end{equation}
\end{definition}
The decision to define approximate non-discrimination in terms of \emph{conditional} rather than joint probabilities is important, particularly in the case that the $a,y$ pairs occur with widely varying frequencies. For example, if approximate non-discrimination were defined in terms of the joint probabilities $P(\hat{Y} = \hat{y}, A = a, Y = y)$ and if $P(A = 0, Y = 1) = \alpha / 10$, then a predictor could be ``$\alpha$-discriminatory'' all while being arbitrarily unfair towards the $A = 0, Y = 1$ population. This issue does not arise when using Definition \ref{def:approxnondiscrimination} and it incentivizes  collection of sufficient data for minority groups to ensure non-discrimination.

For Definition~\ref{def:approxnondiscrimination}, we propose a simple statistical test to test the hypothesis that a given predictor $\hat{Y}$ is at most $\alpha$-discriminatory on the population for some $\alpha > 0$. 
Let $S = \sset{(x_i,a_i,y_i)}_{i=1}^n\sim\b{P}^n(X,Y,A)$ denote a set of $n$ i.i.d. samples, and for $y,a\in\{0,1\}$, let $\P_{ya}=\b{P}(Y=y,A=a)$. 
We propose the following test for detecting $\alpha$-discrimination:
\begin{equation}
T\left(\hat{Y}, S,\alpha\right) = \mathbf{1}\left( \Gamma^S(\hat{Y}) > \alpha \right)
\end{equation}
\begin{restatable}{lemma}{detectiontest}\label{lem:detectiontest}
Given $n$ i.i.d.~samples $S$, $\forall \alpha\in(0,1),\delta\in(0,1/2)$, if $n > \frac{16\log{32/\delta}}{\alpha^2\min_{ya}\P_{ya}}$, then  with probability greater than $1-\delta$, $T$ satisfies,
\[T\left(\hat{Y}, S,\frac{\alpha}{2}\right) = \begin{cases} 0 & \textrm{if } \hat{Y} \textrm{ is 0-discriminatory on population} \\ 1 & \textrm{if } \hat{Y} \textrm{ is at least } \alpha \textrm{-discriminatory on population.} \end{cases}
\]
\end{restatable}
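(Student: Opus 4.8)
The plan is to reduce the whole statement to a single uniform concentration claim: under the hypothesis $n > 16\log(32/\delta)/(\alpha^2\min_{ya}\P_{ya})$, with probability at least $1-\delta$ one has $\abs{\gamma^S_{ya}(\hat{Y}) - \gamma_{ya}(\hat{Y})} < \alpha/4$ simultaneously for all four pairs $(y,a)\in\sset{0,1}^2$. Granting this, for each $y$ the triangle inequality gives $\big|\,\abs{\gamma^S_{y0}-\gamma^S_{y1}} - \abs{\gamma_{y0}-\gamma_{y1}}\,\big| \le \abs{\gamma^S_{y0}-\gamma_{y0}} + \abs{\gamma^S_{y1}-\gamma_{y1}} < \alpha/2$, and since the maximum over $y$ is $1$-Lipschitz in the sup norm this yields $\abs{\Gamma^S(\hat{Y}) - \Gamma(\hat{Y})} < \alpha/2$. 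Both cases of the lemma then follow: if $\hat{Y}$ is $0$-discriminatory then $\Gamma(\hat{Y})=0$ forces $\Gamma^S(\hat{Y}) < \alpha/2$, so $T(\hat{Y},S,\alpha/2) = \mathbf{1}(\Gamma^S(\hat{Y}) > \alpha/2) = 0$; and if $\hat{Y}$ is at least $\alpha$-discriminatory then $\Gamma(\hat{Y})\ge\alpha$ forces $\Gamma^S(\hat{Y}) > \Gamma(\hat{Y}) - \alpha/2 \ge \alpha/2$, so $T(\hat{Y},S,\alpha/2) = 1$.

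To establish the concentration claim I would argue cell by cell and union-bound over the four cells, splitting each cell into two sub-events. First, $n^S_{ya} = \sum_i \mathbf{1}(y_i=y, a_i=a) \sim \mathrm{Binomial}(n,\P_{ya})$, so a multiplicative Chernoff bound gives $\bP(n^S_{ya} < n\P_{ya}/2) \le e^{-n\P_{ya}/8}$; in particular $n^S_{ya}>0$ off this event, so $\gamma^S_{ya}$ is well defined there (this also tacitly uses $\min_{ya}\P_{ya}>0$, which is implicit since it sits in the denominator of the sample-size bound). Second, conditionally on the cell assignment $(a_i,y_i)_{i=1}^n$, the $m := n^S_{ya}$ values $\hat{Y}(x_i,a_i)$ over the indices in cell $(y,a)$ are i.i.d.\ $[0,1]$-valued with mean exactly $\gamma_{ya}$ by \eqref{eq:gammapop}, so Hoeffding gives $\bP(\abs{\gamma^S_{ya}-\gamma_{ya}} \ge \alpha/4 \mid n^S_{ya}=m) \le 2e^{-m\alpha^2/8}$. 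Combining the two bounds, using $m \ge n\P_{ya}/2$ off the first sub-event and $\alpha<1$ (so $\alpha^2/16 \le 1/8$ and hence $e^{-n\P_{ya}/8}\le e^{-n\P_{ya}\alpha^2/16}$), one gets $\bP(\abs{\gamma^S_{ya}-\gamma_{ya}} \ge \alpha/4) \le e^{-n\P_{ya}/8} + 2e^{-n\P_{ya}\alpha^2/16} \le 3e^{-n\P_{ya}\alpha^2/16}$. A union bound over the four cells gives total failure probability at most $12\,e^{-n\alpha^2\min_{ya}\P_{ya}/16}$, which is at most $\delta$ whenever $n \ge 16\log(12/\delta)/(\alpha^2\min_{ya}\P_{ya})$, hence under the stated (looser) hypothesis.

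The step I expect to need the most care is the conditioning argument behind the Hoeffding bound: for a possibly \emph{randomized} predictor one must verify that conditioning on the cell counts genuinely leaves the within-cell predictions as i.i.d.\ bounded variables with mean $\gamma_{ya}$, and that the conditional tail bound integrates back to an unconditional one. The clean way to package this is: conditionally on $(a_i,y_i)_{i=1}^n$, the pairs $(x_i,\hat{Y}(x_i,a_i))$ with $(y_i,a_i)=(y,a)$ are i.i.d.\ copies of $(X,\hat{Y}(X,a))$ under $\bP(\,\cdot\mid Y=y, A=a)$ — the $x_i$ are independent draws from $\bP(X\mid Y=y,A=a)$ and $\hat{Y}$'s internal randomization is fresh per sample — so $\gamma^S_{ya}$ is the empirical mean of $n^S_{ya}$ i.i.d.\ $[0,1]$-valued terms with mean $\gamma_{ya}$, to which Hoeffding applies verbatim, and the bound then holds unconditionally by taking expectation over the cell assignment. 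Everything else — the Chernoff count bound, the triangle-inequality reduction, and checking that the constant $32$ in the hypothesis leaves enough slack over the $12$ the argument actually requires — is routine bookkeeping.
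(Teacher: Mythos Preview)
Your proposal is correct and follows essentially the same route as the paper: a Chernoff bound on the cell counts $n^S_{ya}$, a Hoeffding bound on the within-cell empirical rates $\gamma^S_{ya}$ conditioned on the counts, the triangle inequality to pass from per-cell deviations to $|\Gamma^S-\Gamma|$, and then the two cases of the test. The only cosmetic difference is packaging --- the paper factors the concentration step through its Lemma~\ref{lemma:bin_step1} (stated for deterministic binary $h$) and then plugs in $\alpha_n=\alpha/2$, whereas you target the deviation level $\alpha/4$ directly and track the constants in one pass; your handling of the randomized-predictor case is in fact a touch more careful than the paper's Lemma~\ref{lemma:bin_step1}, which silently treats $\gamma^S_{ya}\mid S_{ya}$ as Binomial.
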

\noindent The proof is based on  the following concentration results  for $\Gamma^S$ and is provided in Appendix \ref{appendix:detection}.
\begin{lemma} \label{lemma:bin_step1} For $\delta\in(0,1/2)$ and a binary predictor $h$, if $n>\frac{8\log{8/\delta}}{\min_{ya}\P_{ya}}$, then
\begin{equation*}
\begin{split}
&\b{P}\left(\left|\Gamma(h)-\Gamma^S(h)\right|> 2\max_{ya}\sqrt{\frac{\log{16/\delta}}{n\P_{ya}}}\right)\le \delta.
\end{split}
\end{equation*}
\end{lemma}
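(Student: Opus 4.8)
The plan is to reduce the statement about $\Gamma$ to a uniform (over the four cells $(y,a)$) concentration statement about the individual conditional rates $\gamma_{ya}(h)$, and then deal with the randomness of the cell counts $n^S_{ya}$ by first peeling off the event that some count is atypically small (via a Chernoff bound) and then applying Hoeffding's inequality conditionally.

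First I would record the purely deterministic inequality
\[
\abs{\Gamma(h) - \Gamma^S(h)} \;\le\; \max_{y}\Big(\abs{\gamma_{y0}(h) - \gamma^S_{y0}(h)} + \abs{\gamma_{y1}(h) - \gamma^S_{y1}(h)}\Big) \;\le\; 2\max_{ya}\abs{\gamma_{ya}(h) - \gamma^S_{ya}(h)},
\]
which follows from $\abs{\max_y u_y - \max_y v_y} \le \max_y \abs{u_y - v_y}$ together with two applications of the triangle inequality. Hence it suffices to prove that, with probability at least $1-\delta$, one has $\abs{\gamma_{ya}(h) - \gamma^S_{ya}(h)} \le \sqrt{\log(16/\delta)/(n\P_{ya})}$ simultaneously for all four $(y,a)$.

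Next, fix a cell $(y,a)$ and note $n^S_{ya}\sim\mathrm{Binomial}(n,\P_{ya})$. A multiplicative Chernoff bound gives $\b{P}\big(n^S_{ya} < \tfrac12 n\P_{ya}\big) \le \exp(-n\P_{ya}/8)$, and the hypothesis $n > 8\log(8/\delta)/\min_{ya}\P_{ya}$ forces $n\P_{ya} > 8\log(8/\delta)$, so this probability is at most $\delta/8$. Conditioned on $n^S_{ya} = m$, the quantity $\gamma^S_{ya}(h)$ is an average of $m$ i.i.d.\ $[0,1]$-valued random variables (the values $h(x_i,a_i)$ over the samples falling in cell $(y,a)$; for a randomized $h$ one additionally averages out its internal randomness) with common mean $\gamma_{ya}(h)$, so Hoeffding's inequality yields $\b{P}\big(\abs{\gamma^S_{ya}(h) - \gamma_{ya}(h)} > t \mid n^S_{ya} = m\big) \le 2\exp(-2mt^2)$. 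Since this bound decreases in $m$, summing against the law of $n^S_{ya}$ over $m \ge \tfrac12 n\P_{ya}$ gives $\b{P}\big(\abs{\gamma^S_{ya}(h) - \gamma_{ya}(h)} > t_{ya},\ n^S_{ya}\ge\tfrac12 n\P_{ya}\big) \le 2\exp(-n\P_{ya}t_{ya}^2)$; with $t_{ya} = \sqrt{\log(16/\delta)/(n\P_{ya})}$ the exponent is exactly $\log(16/\delta)$, so this is at most $\delta/8$.

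Finally I would combine these two estimates: for each cell, the bad event $\{\abs{\gamma^S_{ya}(h)-\gamma_{ya}(h)} > t_{ya}\}$ is contained in $\{n^S_{ya}<\tfrac12 n\P_{ya}\} \cup \{\abs{\gamma^S_{ya}(h)-\gamma_{ya}(h)} > t_{ya},\ n^S_{ya}\ge\tfrac12 n\P_{ya}\}$, so it has probability at most $\delta/8+\delta/8=\delta/4$; a union bound over the four cells bounds the total failure probability by $\delta$, and on the complementary event the first display gives $\abs{\Gamma(h)-\Gamma^S(h)} \le 2\max_{ya}\sqrt{\log(16/\delta)/(n\P_{ya})}$. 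I do not expect a serious obstacle here; the only point needing care is the dependence on the random counts $n^S_{ya}$, which is precisely why one separates out the low-count event (and this is also where the lower bound on $n$ is used) before invoking Hoeffding conditionally — the rest is bookkeeping of the numerical constants so that everything fits inside the allotted $\delta$.
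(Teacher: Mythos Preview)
Your proposal is correct and follows essentially the same approach as the paper: peel off the low-count event via a multiplicative Chernoff bound on $n^S_{ya}$, apply Hoeffding conditionally on the cell count, and then combine via triangle inequalities and a union bound over the cells. The only cosmetic difference is that the paper organizes the union bound per $y$ first (bounding $\bigl||\gamma^S_{y0}-\gamma^S_{y1}|-|\gamma_{y0}-\gamma_{y1}|\bigr|$ for each $y$) and then collects over $y\in\{0,1\}$, whereas you reduce directly to $\max_{ya}|\gamma_{ya}-\gamma^S_{ya}|$ and union-bound over all four cells at once; the arithmetic and constants match.
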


\section{Learning Optimal Non-discriminatory Binary Predictors}\label{sec:binary}
In Example \ref{theorem:Step2LowerBound01}, we saw that even though an almost perfect non-discriminatory predictor exists within the hypothesis class, if we ignore non-discrimination in training with  $0$-$1$ loss, then optimal post hoc correction of even the Bayes optimal predictor using \eqref{eq:post-hocalgorithm} yields a poor predictor with no better than chance accuracy.
Thus, to find a predictor that is both  nearly non-discriminatory \emph{and} has nearly optimal loss for general hypothesis classes, it is necessary to incorporate non-discrimination into the learning process. For a hypothesis class $\c{H}$, we would ideally like to find the optimal non-discriminatory predictor:
\begin{equation}
\begin{aligned}
\Yfair = \argmin{h \in \mathcal{H}}\ \mathbb{E}\ell\left( h(X, A), Y \right) \qquad \textrm{s.t.}\quad \ \gamma_{y0}(h) = \gamma_{y1}(h)\qquad \forall y = \sset{0,1}.
\end{aligned}
\end{equation}
However, as discussed in Section~\ref{sec:detection}, it is impossible to learn $0$-discriminatory predictors from finite samples. 
In this section, we address following question: {\emph{given $n$ i.i.d~samples $S$, what level of approximate non-discrimination and accuracy is it possible to ensure in a learned predictor?}}

We propose a two step framework for learning a non-discriminatory binary predictor that  minimizes the expected 0-1 loss $\c{L}(\hat{Y})=\b{E}\,\ell^{01}(\hat{Y},Y) =\b{E}\,\mathbf{1}(\hat{Y}\ne Y)$ over a binary hypothesis class $\c{H}=\{h:\c{X}\to\{0,1\}\}$. 
Broadly, the two-step framework is as follows:
\begin{compactenum}
\item \textbf{Non-discrimination in training}: Estimate an almost non-discriminatory empirical risk minimizer $\hat{Y}$ by incorporating approximate non-discrimination constraints on the samples.
\item \textbf{Post-training correction}: With additional samples from $(\hat{Y},Y,A)$, derive a randomized predictor $\tilde{Y}$ to further reduce discrimination.
\end{compactenum}

\subsection{Two step framework for binary predictors} 
We partition the training data consisting of $n$ independent samples ${S}=\{(x_i,a_i,y_i)\sim\bP(X,A,Y)\}_{i=1}^n$ into two subsets $S_1$ and $S_2$ to be used in Step~$1$ and Step~$2$, respectively\footnote{We occasionally overload the $S$ to also denote the indices $[n]$, e.g. $i\in S$ to denote the $i^\text{th}$ sample $(x_i,a_i,y_i)\in S$.}. 
For a predictor $h\in\c{H}$ and  $y,a\in\{0,1\}$,  recall notation for the population and sample group-conditional true and false positive rates $\gamma_{ya}(h)$ and $\gamma_{ya}^S(h)$ from \eqref{eq:gammapop} and \eqref{eq:gamma_sample}, respectively. Additionally, for $S_1$ and $S_2$, let  $n^{S_k}_{ya}=\sum_{i\in S_k}\mathbf{1}(y_i=y,a_i=a)$. In general the subsets \textit{need not} be of equal size, but for simplicity, let $\abs{S_1} = \abs{S_2}=n/2$.

\subsection*{Step 1: Non-discrimination in training}
For the first step, we estimate an empirical risk minimizing predictor  $\hat{Y} \in \mathcal{H}$, subject to the constraint that $\hat{Y}$ be $\alpha_{n}$-discriminatory on $S_1$, where $\alpha_{n}$ is a tunable hyperparameter:
\begin{equation} \label{eq:step1}
\begin{aligned}
\hat{Y}=\argmin{h \in \mathcal{H}}&\ \frac{2}{n}\sum_{i \in S_1} \ell^{01}(h(x_i),y_i) \\
\textrm{s.t.}&\quad\Gamma^{S_1}(h)= \max_{y\in\{0,1\}} \left|\gamma^{S_1}_{y0}(h) - \gamma^{S_1}_{y1}(h)\right| < \alpha_{n}.
\end{aligned}
\end{equation}


\subsection*{Step 2: Post-training correction}
As Step $2$, we propose a post hoc correction to $\hat{Y}$ estimated in Step $1$  for improved non-discrimination. Using samples in $S_2$, which are independent of $S_1$, we estimate the best randomized predictor $\tilde{Y}$ derived from $(\hat{Y},A)$ (Definition~\ref{def:derivedpredictor}). Let $\c{P}(\hat{Y})$ denote the set of randomized binary predictors that can be derived solely from $\bP(\hat{Y},A,Y)$ and let $\tilde{\alpha}_n$ be a tunable hyperparameter, then $\tilde{Y}$ is given by,
\begin{equation}
\begin{aligned}
\tilde{Y}=\argmin{\tilde{Y} \in \mathcal{P}(\hat{Y})}&\ \frac{2}{n}\sum_{i\in S_2} \b{E}_{\tilde{Y}}\ell^{01}(\tilde{Y}(\hat{y}_i,a_i),y_i) \\
\textrm{s.t.}&\quad \Gamma^{S_2}(\tilde{Y})= \max_{y\in\{0,1\}} \left|\gamma^{S_2}_{y0}(\tilde{Y}) - \gamma^{S_2}_{y1}(\tilde{Y})\right| < \tilde{\alpha}_n,
\end{aligned}
\label{eq:step2}
\end{equation}
where for a randomized predictor ${\tilde{Y}}$, the group-conditional probabilities on a sample is defined to be $\gamma_{ya}^{S_2}(\tilde{Y})=\frac{1}{n^{S_2}_{ya}}\sum_{i\in S_2}\b{E}_{\tilde{Y}}\mathbf{1}(\tilde{Y}_i=1,Y_i=y,A_i=a).$ The above optimization problem is a finite sample adaptation of the post hoc correction  in \eqref{eq:post-hocalgorithm} proposed by \cite{hardt2016equality}.

As with the post hoc correction on the population \eqref{eq:post-hocalgorithm}, estimating a predictor $\tilde{Y}\in\c{P}(\hat{Y})$ derived from $(\hat{Y},A)$ is a simple optimization over the following four parameters that completely  specify $\tilde{Y}$, 
\begin{equation}
\tilde{p}_{\hat{y}a}:=\tilde{p}_{\hat{y}a}(\tilde{Y})=\bP(\tilde{Y}=1\ |\ \hat{Y}=\hat{y},A=a)\text{ for }\hat{y},a\in\{0,1\}.
\end{equation}

In Section~\ref{sec:stat}, we discuss how the post hoc correction step offers statistical advantages over the one-shot approach of using all of the training data for Step $1$. Besides these statistical advantages, the post hoc correction step is also motivated from other practical considerations:
\begin{inparaenum}[(a)]
\item the derived predictors only need access to samples from $\bP(\hat{Y},Y,A)$  and can be deployed without explicit access to predictive features $X$, and 
\item the proposed correction step \eqref{eq:step2} can be easily optimized using ternary search and can be repeated multiple times as more and more samples from $\bP(\hat{Y},Y,A)$ are seen by the system, without having to retrain the classifier from scratch.
\end{inparaenum}


\subsection{Statistical guarantees}\label{sec:stat}
In this section, we discuss the statistical properties of the estimators $\hat{Y}$ and $\tilde{Y}$ from Step $1$ and Step $2$, respectively. We define the following notation for succinctly describing the quality of a predictor:
\begin{definition} $\c{Q}(L,\alpha)=\{h:\c{X}\to\{0,1\}: \Gamma(h)\le \alpha, \c{L}(h)\le L\}$ denotes the set of $\alpha$-discriminatory binary predictors with loss $L$. 
\end{definition}
The following theorem shows the statistical learnability of hypothesis classes $\c{H}$ with respect to the best non-discriminatory predictor in $\c{H}$ using the two-step framework. 
In the following results,  for $y,a\in\{0,1\}$,  recall the notation  $\P_{ya}=\b{P}(Y=y,A=a)$ for the group-outcome probabilities. Additionally, for $a\in\{0,1\}$, and let $VC(\c{H})$ denote the Vapnik-Chervonenkis dimension of a hypothesis class $\c{H}$. 

\begin{restatable}{theorem}{thmub}\label{theorem:UpperBounds} Let $n\!=\!\Omega\big(\!\!\max_{ya}\frac{\!\log{1/\delta}}{\P_{ya}}\!\big)$ and the hyperparameters satisfy $\alpha_n,\tilde{\alpha}_n\!=\!\Theta\Big(\!\!\max_{ya}\!\sqrt{\!\frac{\log{1/\delta}}{n\P_{ya}}}\Big)$. 

For a binary hypothesis class $\c{H}$, any distribution $\b{P}(X,Y,A)$, and any $\delta\in(0,1/2)$, if $\Yfair\in\c{H}$ is a non-discriminatory predictor, then  with probability greater than $1-\delta$, the output of the two step procedure $\tilde{Y}$ satisfies the following for absolute constants $C_1$ and $C_2$, 
$$\c{L}(\tilde{Y}) \leq \c{L}(Y^*)+ C_1\Db \text{, and }\Gamma(\tilde{Y}) \le C_2\max_{ya}{\epsb}.$$ 

Thus, with $n =\Omega\Big(\frac{VC(\c{H})}{\epsilon^2}+\frac{1}{\alpha^2}\Big) \frac{1}{\min_{ya}\P_{ya}}$ samples, and an appropriate choice of $\alpha_n,\tilde{\alpha}_n$,  the  two step framework returns $\tilde{Y}\in \c{Q}(\c{L}(Y^*)+\epsilon,\alpha)$ with high probability.
\end{restatable}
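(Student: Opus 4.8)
The plan is to prove the two claimed bounds separately — one for the population loss $\c{L}(\tilde{Y})$ and one for the population discrimination $\Gamma(\tilde{Y})$ — and then combine them with a rate calculation to deduce the sample-complexity corollary. Throughout I would condition on two good events: a VC-type uniform convergence event over $\c{H}$ on $S_1$ (so that empirical $0$-$1$ loss and empirical $\gamma^{S_1}_{ya}$ track their population values uniformly, at scale $\epsa$ thanks to the conditioning on each $(y,a)$ cell having $\approx n\P_{ya}$ points), and a Hoeffding/union-bound event over the four parameters $\tilde p_{\hat y a}$ on $S_2$ (so that empirical $\gamma^{S_2}_{ya}(\tilde Y)$ and empirical loss of any derived predictor track their population values at scale $\epsb$). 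The hypothesis $n=\Omega(\max_{ya}\log(1/\delta)/\P_{ya})$ is exactly what makes $n^{S_k}_{ya}\ge \tfrac14 n\P_{ya}$ hold on these events, so that ``per-cell'' concentration at the stated rates is legitimate. Lemma~\ref{lemma:bin_step1} already packages the Step-1 concentration of $\Gamma^{S_1}$ for a fixed predictor; I would want its uniform-over-$\c{H}$ strengthening (the VC version), which is routine.

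\textbf{Step 1 feasibility and guarantee.} First I would argue that with the choice $\alpha_n=\Theta(\epsa)$, the true optimal non-discriminatory predictor $\Yfair$ is feasible for the empirical program \eqref{eq:step1} with high probability: since $\Gamma(\Yfair)=0$, uniform convergence gives $\Gamma^{S_1}(\Yfair)\le C\epsa<\alpha_n$ for an appropriate constant. Hence $\tilde\cdot$ — sorry, hence $\hat Y$ exists and its empirical loss is at most that of $\Yfair$, so by uniform convergence again $\c{L}(\hat Y)\le \c{L}(\Yfair)+O(\Da)\le \c{L}(\Yfair)+O(\epsa)$. Simultaneously, feasibility on $S_1$ plus uniform convergence of the $\gamma$'s gives $\Gamma(\hat Y)\le \alpha_n + O(\epsa) = O(\epsa)$. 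So after Step~1 we already have a predictor in roughly $\c{Q}(\c{L}(\Yfair)+O(\epsa),\,O(\epsa))$.

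\textbf{Step 2 guarantee.} Now I work conditionally on $\hat Y$ (which depends only on $S_1$), treating $S_2$ as fresh i.i.d.\ samples. The key observation is that the ``copy'' predictor $\tilde Y\equiv\hat Y$ — i.e.\ $\tilde p_{1a}=1,\tilde p_{0a}=0$ — lies in $\c{P}(\hat Y)$ and has $\Gamma(\cdot)=\Gamma(\hat Y)=O(\epsa)$, hence (by $S_2$-concentration) $\Gamma^{S_2}(\hat Y)\le O(\epsa)+O(\epsb) = O(\epsb)$ since $\epsa$ dominates... wait, one must check which of $\epsa,\epsb$ is larger; here $\Da$ vs $\epsb$: $\Da$ has an extra $VC(\c{H})$ in the numerator, so $\epsb\le\epsa$ and the relevant bound on $\Gamma^{S_2}(\hat Y)$ is $O(\epsa)$. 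Choosing $\tilde\alpha_n=\Theta(\epsa)$ with a large enough constant makes $\hat Y$ itself feasible for \eqref{eq:step2}. Therefore the empirical objective of the minimizer $\tilde Y$ is at most the empirical loss of $\hat Y$ on $S_2$; by $S_2$-concentration over the $4$-parameter family this transfers to population: $\c{L}(\tilde Y)\le \c{L}(\hat Y)+O(\epsb)\le \c{L}(\Yfair)+O(\Da)$, giving the first claimed inequality. For discrimination, feasibility of $\tilde Y$ on $S_2$ gives $\Gamma^{S_2}(\tilde Y)<\tilde\alpha_n$, and $S_2$-concentration of the $\gamma^{S_2}_{ya}(\tilde Y)$ around $\gamma_{ya}(\tilde Y)$ at scale $\epsb$ yields $\Gamma(\tilde Y)\le\tilde\alpha_n+O(\epsb)=O(\epsb)=O(\max_{ya}\epsb)$, the second claimed inequality. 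Finally, for the corollary, I set $\epsilon$ equal to the loss bound $C_1\Db$ and $\alpha$ equal to the discrimination bound $C_2\max_{ya}\epsb$; solving each for $n$ and taking the max of the two requirements (plus the $\Omega(\max_{ya}\log(1/\delta)/\P_{ya})$ floor) gives $n=\Omega\big((VC(\c{H})/\epsilon^2+1/\alpha^2)/\min_{ya}\P_{ya}\big)$, and choosing $\alpha_n,\tilde\alpha_n$ at the matching $\Theta(\cdot)$ scale places $\tilde Y\in\c{Q}(\c{L}(\Yfair)+\epsilon,\alpha)$.

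\textbf{Main obstacle.} The delicate point is not any single concentration bound but the bookkeeping of \emph{which} predictor is used as the feasible witness at each step and making sure the constants in $\alpha_n,\tilde\alpha_n$ are chosen so that (i) the intended witness is feasible on the empirical constraint with high probability, yet (ii) the constraint is still tight enough that the population $\Gamma$ of the output is controlled. Getting $\Yfair$ feasible in Step~1 needs $\alpha_n$ above the $\epsa$-scale fluctuation of $\Gamma^{S_1}(\Yfair)$; getting $\hat Y$ feasible in Step~2 needs $\tilde\alpha_n$ above $\Gamma^{S_2}(\hat Y)$, which is itself only $O(\epsa)$, not $O(\epsb)$ — so one must be careful that the \emph{final} discrimination bound $O(\epsb)$ comes purely from Step~2's own constraint $\tilde\alpha_n$ and $S_2$-concentration, and is \emph{not} polluted by the larger $\epsa$-scale error carried in from Step~1 (this is exactly the statistical advantage of the correction step that Section~\ref{sec:stat} alludes to, and why $\alpha_n$ and $\tilde\alpha_n$ can be tuned separately). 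The second mild subtlety is the per-cell sample-size concentration ($n^{S_k}_{ya}\gtrsim n\P_{ya}$), which is where the $n=\Omega(\max_{ya}\log(1/\delta)/\P_{ya})$ hypothesis is consumed; this is a standard multiplicative Chernoff argument.
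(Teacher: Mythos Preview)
Your argument has a genuine gap in the discrimination bound for $\tilde Y$, and it is exactly the ``pollution'' issue you flag in your final paragraph but do not actually avoid. You take $\hat Y$ as the feasible witness for Step~2. Since $\Gamma(\hat Y)=O\big(\epsa\big)$ after Step~1, and $S_2$-concentration only moves $\Gamma^{S_2}(\hat Y)$ by $O\big(\epsb\big)$, you are forced to choose $\tilde\alpha_n=\Theta\big(\epsa\big)$ to make $\hat Y$ feasible (as you yourself write). But then the only control you have on the output is $\Gamma(\tilde Y)\le\tilde\alpha_n+O\big(\epsb\big)=O\big(\epsa\big)$, not $O\big(\epsb\big)$. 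Your line ``$\Gamma(\tilde Y)\le\tilde\alpha_n+O(\epsb)=O(\epsb)$'' is simply an arithmetic slip: with $\tilde\alpha_n=\Theta(\epsa)$ the right-hand side is $O(\epsa)$, which carries the $VC(\c{H})$ factor and does \emph{not} match the theorem's VC-free discrimination guarantee. This also violates the theorem's hypothesis that $\tilde\alpha_n=\Theta\big(\epsb\big)$.

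The missing idea is Lemma~\ref{steptwotopop}: the Step~2 witness should not be $\hat Y$ but the \emph{population-optimal $0$-discriminatory} derived predictor $\tilde Y^*(\hat Y)\in\c{P}(\hat Y)$. Because $\Gamma(\tilde Y^*(\hat Y))=0$ exactly, $S_2$-concentration gives $\Gamma^{S_2}(\tilde Y^*(\hat Y))=O\big(\epsb\big)$, so $\tilde\alpha_n=\Theta\big(\epsb\big)$ suffices for feasibility and the final discrimination bound is genuinely VC-free. The price is a possible increase in loss, and Lemma~\ref{steptwotopop} is precisely what bounds it: $\c{L}(\tilde Y^*(\hat Y))\le\c{L}(\hat Y)+\Gamma(\hat Y)\le \c{L}(\Yfair)+O\big(\epsa\big)$. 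Combining this with uniform convergence over $\c{P}(\hat Y)$ (whose VC dimension is a constant, independent of $\c{H}$) yields both bounds at the stated rates. A smaller point: for Step~1 feasibility you take $\alpha_n=\Theta(\epsa)$, but $\Yfair$ is a \emph{fixed} predictor, so Lemma~\ref{lemma:bin_step1} already gives $\Gamma^{S_1}(\Yfair)=O\big(\epsb\big)$ and $\alpha_n=\Theta\big(\epsb\big)$ is enough, matching the theorem's hypothesis.
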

The proof is based on the following two Lemmas.  The first is a statistical guarantee on the  loss and non-discrimination after training in Step $1$: 
\begin{lemma}
\label{lemma:BinaryStep1Guarantee} Under the conditions in Theorem~\ref{theorem:UpperBounds}, if $\alpha_n>2\max_{ya}\sqrt{\frac{\log{64/\delta}}{n\P_{ya}}}$, then w.p. greater than $1-\delta$, $\hat{Y}$ from Step $1$ satisfies 
$$\c{L}(\hat{Y}) \leq \c{L}(Y^*) + C_1\Da \text{, and }\Gamma(\hat{Y}) \le \alpha_n+ C_2\epsa.$$
\end{lemma}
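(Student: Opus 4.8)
The plan is a standard constrained-ERM argument, with the one wrinkle that the sizes of the four conditioning sub-populations are themselves random and must be controlled before invoking uniform convergence. Throughout write $\hat{\c{L}}_{S_1}(h)=\frac{2}{n}\sum_{i\in S_1}\ell^{01}(h(x_i),y_i)$ for the Step~1 empirical objective (note $|S_1|=n/2$, so this is genuinely an average over $S_1$) and recall that $\Gamma(Y^*)=0$ because $Y^*$ is non-discriminatory. The proof will intersect a handful of high-probability events, each failing with probability at most a fixed fraction of $\delta$.

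First I would establish the building blocks. \emph{(i) Sub-population sizes.} Each $n^{S_1}_{ya}$ is a sum of $n/2$ i.i.d.\ $\mathrm{Bernoulli}(\P_{ya})$ variables, so a multiplicative Chernoff bound together with the assumption $n=\Omega(\max_{ya}\log(1/\delta)/\P_{ya})$ and a union bound over the four $(y,a)$ gives $n^{S_1}_{ya}\ge \tfrac{n}{4}\P_{ya}$ for all $y,a$. \emph{(ii) Uniform convergence of the loss.} The classical VC bound gives $\sup_{h\in\c{H}}\bigl|\hat{\c{L}}_{S_1}(h)-\c{L}(h)\bigr|\le C\,\Da$. \emph{(iii) Uniform convergence of the conditional rates.} Condition on the label pattern $\{(y_i,a_i)\}_{i\in S_1}$; within the group $\{i:y_i=y,a_i=a\}$ the covariates $x_i$ are i.i.d.\ from $\bP(X\mid Y=y,A=a)$, the quantity $\gamma^{S_1}_{ya}(h)$ is an empirical mean of $n^{S_1}_{ya}$ such terms with population value $\gamma_{ya}(h)$, and the relevant restriction of $\c{H}$ still has VC dimension at most $VC(\c{H})$. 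A VC deviation bound in each of the four groups, combined with the size lower bound from (i) via the tower rule, yields $\sup_{h\in\c{H}}\bigl|\Gamma(h)-\Gamma^{S_1}(h)\bigr|\le C_2\,\epsa$ after a union bound over the groups and the max over $y$. \emph{(iv) Feasibility of $Y^*$.} Applying the pointwise concentration of Lemma~\ref{lemma:bin_step1} to the fixed predictor $Y^*$ on $S_1$ (equivalently, (iii) specialized to $h=Y^*$, which does not even need the VC term) and using $\Gamma(Y^*)=0$ gives $\Gamma^{S_1}(Y^*)\le 2\max_{ya}\sqrt{\log(64/\delta)/(n\P_{ya})}<\alpha_n$ by the hypothesis on $\alpha_n$; hence $Y^*$ is a feasible point of \eqref{eq:step1}.

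On the intersection of these events the conclusion follows quickly. For the loss: feasibility of $Y^*$ and optimality of $\hat{Y}$ in \eqref{eq:step1} give $\hat{\c{L}}_{S_1}(\hat{Y})\le \hat{\c{L}}_{S_1}(Y^*)$, so applying (ii) at $\hat{Y}$ and again at $Y^*$, $\c{L}(\hat{Y})\le \hat{\c{L}}_{S_1}(\hat{Y})+C\Da\le \hat{\c{L}}_{S_1}(Y^*)+C\Da\le \c{L}(Y^*)+2C\Da$, which is the first claim with $C_1=2C$. For non-discrimination: $\hat{Y}$ satisfies the empirical constraint $\Gamma^{S_1}(\hat{Y})<\alpha_n$, so (iii) gives $\Gamma(\hat{Y})\le \Gamma^{S_1}(\hat{Y})+C_2\epsa<\alpha_n+C_2\epsa$. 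Splitting $\delta$ across the events and collecting absolute constants completes the argument.

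I expect the only real obstacle to be step (iii): the conditioning must be set up carefully, since the group sizes $n^{S_1}_{ya}$ are random and dependent, so one conditions on the label pattern, applies the conditional VC bound, and then removes the conditioning by the tower rule while simultaneously intersecting with the ``good sizes'' event of (i); moreover the $\P_{ya}$-dependence of the sample sizes must be tracked through the deviation bound so that the final rate is genuinely the per-group $\epsa=\max_{ya}\sqrt{(VC(\c{H})+\log(1/\delta))/(n\P_{ya})}$ rather than something cruder governed only by the smallest group. Steps (i), (ii) and (iv) are routine, as is the bookkeeping of the failure probabilities and the absolute constants.
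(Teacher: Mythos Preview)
Your proposal is correct and follows essentially the same approach as the paper's proof: establish feasibility of $Y^*$ via the pointwise concentration of Lemma~\ref{lemma:bin_step1}, invoke uniform VC bounds for both the loss and the group-conditional rates $\gamma_{ya}$, and finish with the standard constrained-ERM comparison $\hat{\c{L}}_{S_1}(\hat{Y})\le\hat{\c{L}}_{S_1}(Y^*)$. The paper packages the feasibility step into a separate Lemma~\ref{lemma:Heps} and simply asserts the uniform deviation bound on $\Gamma$ by citing standard VC theory, whereas you spell out the conditioning-on-the-label-pattern argument for step~(iii) more explicitly; but the underlying logic and the resulting rates are the same.
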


The following second  lemma ensures that if  $\hat{Y}$ from Step $1$ is approximately non-discriminatory, the correction in the second step does not incur significant additional loss:
\begin{lemma}\label{steptwotopop}
If $h$ is an $\alpha$-discriminatory binary predictor $h\in\c{Q}(\c{L}(h),\alpha)$, then the optimal 0-discriminatory derived predictor $\tilde{Y}^*(h)$  from \eqref{eq:post-hocalgorithm} using 0-1 loss satisfies $\tilde{Y}^*(h)\in\c{Q}(\c{L}(h)+\alpha,0)$.
\end{lemma}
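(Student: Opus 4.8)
\emph{Proof plan.} The $0$-discrimination half is immediate: $\tilde Y^*(h)$ is by definition the minimizer in \eqref{eq:post-hocalgorithm}, and every feasible point of that program satisfies $\gamma_{y0}=\gamma_{y1}$ for $y\in\{0,1\}$. For the loss bound, optimality of $\tilde Y^*(h)$ means it suffices to exhibit \emph{one} $0$-discriminatory randomized predictor derived from $(h,A)$ whose expected $0$-$1$ loss is at most $\c{L}(h)+\alpha$. Recall a derived predictor is specified by the four numbers $p_{\hat y a}=\b{P}(\tilde Y=1\mid h=\hat y,A=a)\in[0,1]$, and that $\gamma_{ya}(\tilde Y)=p_{1a}\gamma_{ya}(h)+p_{0a}(1-\gamma_{ya}(h))$.

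I would first record two identities. If a derived predictor has equalized rates $\gamma_{y0}(\tilde Y)=\gamma_{y1}(\tilde Y)=:q_y$, then $\c{L}(\tilde Y)=\b{P}(Y=0)\,q_0+\b{P}(Y=1)(1-q_1)$; and $\c{L}(h)=\b{P}(Y=0)\,\bar\gamma_0+\b{P}(Y=1)(1-\bar\gamma_1)$, where $\bar\gamma_y:=\sum_a\b{P}(A=a\mid Y=y)\,\gamma_{ya}(h)$ is the group-marginalized positive rate of $h$. Since $h$ is $\alpha$-discriminatory, for each $y$ the numbers $\gamma_{y0}(h),\gamma_{y1}(h)$ differ by at most $\alpha$, so the convex combination $\bar\gamma_y$ lies within $\alpha$ of $\max_a\gamma_{ya}(h)$ and within $\alpha$ of $\min_a\gamma_{ya}(h)$. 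Hence it is enough to produce a rate pair $(q_0,q_1)$ that is achievable (by some derived predictor) \emph{for group $0$ and for group $1$ simultaneously} and that lies in the box $q_0\le\max_a\gamma_{0a}(h)$, $q_1\ge\min_a\gamma_{1a}(h)$: for such a pair the two identities give $\c{L}(\tilde Y)-\c{L}(h)\le\b{P}(Y=0)\,\alpha+\b{P}(Y=1)\,\alpha=\alpha$.

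It remains to produce this pair. As $(p_{0a},p_{1a})$ ranges over $[0,1]^2$, the achievable pairs $(\gamma_{0a}(\tilde Y),\gamma_{1a}(\tilde Y))$ fill out the parallelogram $\c{C}_a$ with vertices $(0,0)$, $P_a:=(\gamma_{0a}(h),\gamma_{1a}(h))$, $(1,1)$, $(1,1)-P_a$; in particular $\c{C}_a$ always contains the diagonal. I would split into two cases. If $\min_a\gamma_{1a}(h)\le\max_a\gamma_{0a}(h)$, then any diagonal point $(t,t)$ with $t\in[\min_a\gamma_{1a}(h),\max_a\gamma_{0a}(h)]$ — i.e.\ the predictor that outputs $1$ with probability $t$ regardless of $(h,A)$ — lies in $\c{C}_0\cap\c{C}_1$ and in the box; this case also absorbs all degenerate configurations (a group on which $h$ is uninformative, so $\gamma_{0a}(h)=\gamma_{1a}(h)$, or on which true and false positive rates are reversed). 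Otherwise $\min_a\gamma_{1a}(h)>\max_a\gamma_{0a}(h)$, which forces both $P_a$ strictly above the diagonal. Here I would cut both parallelograms by the horizontal line at height $q_1:=\min_a\gamma_{1a}(h)$ (w.l.o.g.\ $q_1=\gamma_{10}(h)$, so the line passes through $P_0$): a short computation with the parallelogram parametrization shows the $\c{C}_0$-slice is an interval starting at $\gamma_{00}(h)$ and extending rightward, while the $\c{C}_1$-slice is an interval straddling $\gamma_{10}(h)$ whose left endpoint is at most $\max_a\gamma_{0a}(h)$; the two slices therefore overlap, and the overlap point with smallest first coordinate yields $(q_0,q_1)$ in the box.

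The crux — and essentially the only nontrivial part — is this last slice estimate, in particular showing that the $\c{C}_1$-slice reaches left of $\max_a\gamma_{0a}(h)$. Naively anchoring $\tilde Y$ to $h$ on one group while matching \emph{both} the true- and false-positive rates of the other group yields correction probabilities that can leave $[0,1]$ by up to $\alpha$, so one cannot simply rescale; cutting at this particular height and exploiting the $\alpha$-closeness of $\gamma_{00}(h),\gamma_{01}(h)$ and of $\gamma_{10}(h),\gamma_{11}(h)$ (with a couple of sub-cases according to whether the relevant rates exceed $1/2$) is what makes the endpoint bound go through. Everything else is bookkeeping with the two loss identities.
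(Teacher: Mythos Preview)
Your approach is correct and follows the same strategy as the paper: exhibit a single feasible $0$-discriminatory derived predictor whose $0$-$1$ loss exceeds $\c{L}(h)$ by at most $\alpha$, then invoke optimality of $\tilde Y^*(h)$.

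The main difference is in how the feasible point is produced. The paper works under the blanket assumption ``$\gamma_{1a}(h)\ge 1/2\ge\gamma_{0a}(h)$ for all $a$'' and then directly takes the rate pair
\[
(q_0,q_1)=\big(\max_a\gamma_{0a}(h),\ \min_a\gamma_{1a}(h)\big),
\]
observing that this lies in the upper triangle $\textrm{Conv}\big((0,0),(1,1),(\gamma_{0a}(h),\gamma_{1a}(h))\big)$ for both $a$, and then bounds the loss increase exactly as you do. In your Case~2 (where $\min_a\gamma_{1a}>\max_a\gamma_{0a}$, which forces both $P_a$ above the diagonal) this same point works immediately: $q_0\ge\gamma_{0a}$, $q_1\le\gamma_{1a}$, and $q_1\ge q_0$ place it in each upper triangle, so your slice computation and the ``couple of sub-cases according to whether the relevant rates exceed $1/2$'' are unnecessary. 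Conversely, your Case~1 (taking a diagonal point when $\min_a\gamma_{1a}\le\max_a\gamma_{0a}$) cleanly handles the degenerate configurations that the paper's ``without loss of generality'' assumption brushes aside, so your treatment is arguably more complete there. In short: same idea, your version is more careful about edge cases but over-engineered in the main case.
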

This lemma, along with the examples in Section~\ref{sec:suboptimalityposthoc} further motivates an integrated learning step, such as Step $1$, where non-discrimination is explicitly encouraged in training. 

Notice that, unlike to the guarantees for $\tilde{Y}$ in Theorem~\ref{theorem:UpperBounds}, the upper bound on non-discrimination for $\hat{Y}$ in Lemma~\ref{lemma:BinaryStep1Guarantee}  scales with the complexity of the hypothesis class $VC(\c{H})$. In the Step~$2$, we search over a much  restricted space  of derived predictors $\c{P}(\hat{Y})$ which is essentially the convex hull of $|\c{A}|$ conditional predictors of $\hat{Y}$. This  allows us to obtain a guarantee on non-discrimination for $\tilde{Y}$ that does not scale with $VC(\c{H})$.  See Appendix~\ref {app:ub} for the remainder of the proof. 

Finally, although Lemma~\ref{lemma:BinaryStep1Guarantee} is only an upper bound on non-discrimination from Step $1$, the following theorem shows that the level of non-discrimination using \textit{just} the first step of the procedure  can indeed grow with the complexity of the hypothesis class. 
\begin{theorem} \label{theorem:Step1LowerBound} 
There exists a finite, binary hypothesis class $\mathcal{H}$ and a data distribution $\mathcal{D}$ such that with probability at least $1/2$, the classifier $\hat{Y}$ learned from Step 1 using $n$ samples from $\mathcal{D}$ is at least $\max_{y,a}\frac{3\log \frac{\abs{\mathcal{H}} - 1}{5}}{4n\P_{ya}}$-discriminatory on the population.
\end{theorem}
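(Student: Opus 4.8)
The plan is to build a hard instance in which the Step~1 program \eqref{eq:step1} is \emph{forced} to pick, out of many near-optimal candidates, one that accidentally satisfies the empirical constraint $\Gamma^{S_1}<\alpha_n$ while being genuinely $\alpha^*$-discriminatory on the population, with $\alpha^*\asymp\log(|\c{H}|)/(n\min_{ya}\P_{ya})$. Fix a small constant $c'>0$ and put $N:=|\c{H}|-1$. Let $Y\sim\mathrm{Bernoulli}(1/2)$, let $A\sim\mathrm{Bernoulli}(q)$ with $q<1/2$ and $A\independent(X,Y)$, let $X_0$ be an informative coordinate with $\bP(X_0=y\mid Y=y)=1-c'$, and let $X^{(1)},\dots,X^{(N)}\sim\mathrm{Unif}[0,1]$ be mutually independent and independent of $(X_0,A,Y)$. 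Take $\c{H}=\{h_0,h_1,\dots,h_N\}$ with $h_0\equiv 0$ and, for $1\le j\le N$, define $h_j(x,a)=x_0$ unless $a=1$ and $x^{(j)}\le r$, in which case $h_j(x,a)=1-x_0$, for a radius $r$ chosen below. A direct computation using the independences gives $\gamma_{y0}(h_j)=\gamma_{y0}(X_0)$ and $\gamma_{y1}(h_j)-\gamma_{y0}(h_j)=\pm r(1-2c')$, so $\Gamma(h_j)=r(1-2c')$ for every $j$, while $h_j$ disagrees with $X_0$ only on a set of mass $rq$, so $\c{L}(h_j)\le c'+rq<\tfrac12=\c{L}(h_0)$. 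Since $h_0$ is the only non-discriminatory member of $\c{H}$, we have $\Yfair=h_0$ and $\c{L}(\Yfair)=\tfrac12$. Finally set $r:=\alpha^*/(1-2c')$ where $\alpha^*:=\tfrac34\,\log\!\big(\tfrac{N}{5}\big)\big/\big(n\min_{ya}\P_{ya}\big)$, so that $\Gamma(h_j)=\alpha^*$ for every decoy. (The instance is allowed to depend on $n$, as is standard for such lower bounds.)

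The crucial point is that $h_j$ is \emph{feasible} for \eqref{eq:step1} on the event $E_j\cap G$, where $E_j:=\{$no $i\in S_1$ with $a_i=1$ has $x_i^{(j)}\le r\}$ and $G:=\{\Gamma^{S_1}(X_0)<\alpha_n\}$: on $E_j$ the decoy $h_j$ agrees with $X_0$ on \emph{all} of $S_1$, hence $\Gamma^{S_1}(h_j)=\Gamma^{S_1}(X_0)$, which is $<\alpha_n$ on $G$. First I would condition on $(y_i,a_i)_{i\in S_1}$; with probability $1-o(1)$ the cell counts $n^{S_1}_{ya}$ are all of order $n\P_{ya}$ and $n_1:=\#\{i\in S_1:a_i=1\}\le(1+o(1))\,n\min_{ya}\P_{ya}$. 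Conditionally, $E_1,\dots,E_N$ are independent with $\bP(E_j\mid\cdot)=(1-r)^{n_1}$; plugging in $r=\alpha^*/(1-2c')$ and the bound on $n_1$ gives $\bP(E_j\mid\cdot)\ge(N/5)^{-3(1+o(1))/(4(1-2c'))}$, which for $c'<1/8$ and $N$ large is $\gg 1/N$, so $\bP(\bigcup_j E_j\mid\cdot)\to 1$. The event $G$ depends only on $(x_{0,i},y_i,a_i)_i$, is conditionally independent of $\bigcup_j E_j$ given $(y_i,a_i)_i$, and has conditional probability $\ge 3/4$ since $X_0$ has population discrimination $0$ while the prescribed $\alpha_n=\Theta\big(\max_{ya}\sqrt{\log(1/\delta)/(n\P_{ya})}\big)$ dominates the $O(\max_{ya}1/\sqrt{n\P_{ya}})$ fluctuation of $\Gamma^{S_1}(X_0)$. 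Finally, since every decoy beats $h_0$ by a constant population-loss margin $\tfrac12-c'-rq$, a uniform-convergence (Hoeffding plus union over $|\c{H}|$) bound, valid since $n\gtrsim\log|\c{H}|$ in the regime where Theorem~\ref{theorem:UpperBounds} is meaningful, shows that with probability $\ge 3/4$ every decoy has strictly smaller empirical loss than $h_0$ on $S_1$. On the intersection of these events, which has probability exceeding $1/2$ for large $N$ and small $c'$, the Step~1 minimizer $\hat Y$ is some $h_j$, so $\Gamma(\hat Y)=\alpha^*=\tfrac34\log\!\big(\tfrac{|\c{H}|-1}{5}\big)\big/\big(n\min_{ya}\P_{ya}\big)$, which is the asserted bound after taking the maximum of $1/\P_{ya}$ over $y,a$.

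The hard part is the three-way calibration of constants through $r$, $c'$, $n$ and $|\c{H}|$: the strength of the conclusion is $\alpha^*=\Theta(r)$; the ``number of chances to look fair on $S_1$'' is governed by $\bP(E_j)\approx e^{-rn_1}$, which forces $rn_1\lesssim\log|\c{H}|$; and feasibility-over-$h_0$ requires the loss margin $\tfrac12-c'$ to dominate the $\Theta(\sqrt{\log|\c{H}|/n})$ uniform-convergence error. Driving $\alpha^*$ up to $\tfrac34\log(|\c{H}|)/(n\min_{ya}\P_{ya})$ while keeping $\bP(\bigcup_j E_j)$ bounded below is precisely what pins $c'$ to a small constant; this is harmless but must be tracked. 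I expect the real technical nuisance to be the bookkeeping around the conditioning on $(y_i,a_i)_i$, namely arranging that (a) concentration of the cell counts $n^{S_1}_{ya}$, (b) the event $G$, and (c) uniform convergence of the $|\c{H}|$ empirical losses all hold on one event of probability exceeding $1/2$ — each is routine on its own, and the generous constants $3/4$ and $5$ in the statement are exactly what make room for all of them simultaneously.
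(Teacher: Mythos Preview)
Your strategy is the same as the paper's --- plant $N$ independent ``decoy'' hypotheses, each $\alpha^*$-discriminatory on the population, and show that with probability at least $1/2$ one of them looks empirically clean and beats the non-discriminatory baseline --- but a concrete miscount keeps you from reaching the constant $3/4$.

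In your setup $Y\sim\mathrm{Bernoulli}(1/2)$ and $A\sim\mathrm{Bernoulli}(q)$ are independent, so $\min_{ya}\P_{ya}=q/2$ and hence
\[
n_1=\#\{i\in S_1:a_i=1\}\;\approx\;nq\;=\;2\,n\min_{ya}\P_{ya},
\]
\emph{twice} what you claim. Propagating this through $\bP(E_j\mid\cdot)=(1-r)^{n_1}$ with $r=\alpha^*/(1-2c')$ and $\alpha^*=\tfrac34\log(N/5)/(n\min_{ya}\P_{ya})$ gives $\bP(E_j\mid\cdot)\gtrsim (N/5)^{-3/(2(1-2c'))}$, and the requirement $\bP(E_j)\gg 1/N$ now forces $3/(2(1-2c'))<1$, i.e.\ $c'<-1/4$, which is impossible. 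Your construction therefore tops out near $\tfrac12\log(N/5)/(n\min_{ya}\P_{ya})$ and does not prove the stated bound.

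The paper avoids this factor of two by concentrating each decoy's deviation in a \emph{single} $(y,a)$ cell: it lets the coordinate $X_i$ itself depend on $(Y,A)$, with $X_i\neq Y$ possible only when $(Y,A)=(1,1)$. Then the event ``$h_i$ makes zero errors on the sample'' involves only the $\approx n\min_{ya}\P_{ya}$ points in that one cell, giving $\bP(\text{zero errors})=(1-p\alpha)^m$ directly and yielding the $3/4$ after the $\log(1/(1-x))\le x/(1-x)$ step with $p\le 1/4$. This design also buys a second simplification you do not have: a zero-error decoy has $\Gamma^{S}=0$ exactly, so feasibility in \eqref{eq:step1} holds for \emph{every} $\alpha_n\ge 0$, eliminating your auxiliary event $G$ and any dependence on the prescribed scale of $\alpha_n$. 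Your approach can be repaired by similarly confining the flip to one cell (for instance, by letting the law of $X^{(j)}$ depend on $(Y,A)$ rather than keeping it independent), but as written the factor-of-two slip is a genuine gap.
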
 

Theorem~\ref{theorem:Step1LowerBound} suggests that the non-discrimination guarantee provided by Lemma \ref{lemma:BinaryStep1Guarantee} has the correct dependence on the problem parameters. 
Thus, for the intermediate predictor $\hat{Y}$ from Step $1$, without the post hoc correction, the  tolerance for non-discrimination that can be guaranteed grows with the complexity of the hypothesis class (Lemma~\ref{lemma:BinaryStep1Guarantee}). 
On the other hand, when using the two step procedure, the sample complexity of ensuring that the final predictor $\tilde{Y}$ is at most $\alpha$-discriminatory is $\Omega(\max_{ya} \,1/\alpha^{2}\P_{ya})$ (Theorem~\ref{theorem:UpperBounds}) which \textit{does not} depend on the complexity of the hypothesis class, and also matches the sample complexity of merely detecting $\alpha$-discrimination from Lemma~\ref{lem:detectiontest}.

We also note that the sample complexity dependence on $\P_{ya}$ in Theorem~\ref{theorem:UpperBounds} is unavoidable for our definition of approximate non-discrimination. If there is a rare group or group-outcome combination, we still need enough samples from that group to ensure that the loss and sample conditional distributions $\gamma_{ya}^S$ generalize to the population for every $A=a,Y=y$. This is the same reason why the dependence on $\P_{ya}$ arises in Lemma~\ref{lem:detectiontest}. This bottleneck provides further incentive to actively seek samples and target labels for  minority populations, which might otherwise be disregarded if non-discrimination were not a consideration.

\section{Computational Intractability of Learning Non-discriminatory Predictors} \label{sec:hardness}
The proposed procedure for learning non-discriminatory predictors from a finite sample is statistically optimal, but it is clearly computationally intractable for almost any interesting hypothesis class since the first step \eqref{eq:step1} involves minimizing the 0-1 loss. As is typically done with intractable learning problems, we therefore look to alternative loss functions and hypothesis classes in order to find a computationally feasible procedure.

A natural choice is the hypothesis class of real valued linear predictors with a convex loss function. In this case, we would like to have an efficient algorithm for finding a non-discriminatory predictor that has convex loss that is approximately as good as the loss of the best non-discriminatory linear predictor. However, even in the case of binary $A$ and $Y$ and even with a convex loss, for real-valued predictors $h(x)\in\b{R}$, the non-discrimination constraint in Defination~\ref{def:non-discrimination} is {extremely} strong, requiring that the group-conditional true and false positive rates match at \textit{every} threshold. In fact, the mere existence of a non-trivial (i.e.~non-constant) linear predictor that is non-discriminatory requires a relatively special distribution. This is the case even when considering a real-valued analogue of $\alpha$-approximate non-discrimination.

For binary targets $Y\in\{0,1\}$, one could relax the problem one step further with a less restrictive non-discrimination requirement. Consider the class of linear predictors with a convex loss where only the sign of the predictor need be non-discriminatory. Unfortunately, using a result by \cite{daniely2015complexity} even this is computationally intractable:
\begin{theorem}\label{thm:hardness}
Let $L^*$ be the hinge loss of the optimal linear predictor whose sign is non-discriminatory. Subject to the assumption that refuting random K-XOR formulas is computationally hard,\footnote{See \cite{daniely2015complexity} for a description of the problem.} the learning problem of finding a possibly randomized function $f$ such that $\mathcal{L}^{\textrm{hinge}}(f) \leq L^* + \epsilon$ and $\textrm{sign(f)}$ is $\alpha$-discriminatory requires exponential time in the worst case for $\epsilon < \frac{1}{8}$ and $\alpha < \frac{1}{8}$.
\end{theorem}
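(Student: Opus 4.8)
The plan is to reduce from the computational hardness of improperly (agnostically) learning halfspaces with respect to the hinge loss, which \citet{daniely2015complexity} established under the random $K$-XOR refutation hypothesis. I will use it in the following form: there is an efficiently-samplable family of distributions $\mathcal{D}_0$ over $\mathcal{X}_0\times\{0,1\}$, with $\mathcal{X}_0\subseteq\mathbb{R}^d$ and the instances normalized so that some homogeneous halfspace attains hinge loss at most an arbitrarily small constant $\eta$, for which no polynomial-time algorithm can, from samples of $\mathcal{D}_0$, output a (possibly randomized) hypothesis $g$ with $\mathbb{E}\,\ell^{\mathrm{hinge}}(g(X),Y)\le\eta+\tfrac18$; obtaining exactly the constant $\tfrac18$ is a matter of tracking the scaling in \citet{daniely2015complexity}, as discussed below.

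Given such a hard instance $\mathcal{D}_0$, I build an instance of our problem on $(X,A,Y)$ by appending to each sample an independent protected attribute $A$ drawn uniformly on $\{0,1\}$, leaving the marginal over $(X,Y)$ equal to $\mathcal{D}_0$. The point is that \emph{every} predictor that is a function of $X$ alone is $0$-discriminatory: since $A\perp(X,Y)$, for $\hat Y=h(X)$ we have $\gamma_{y0}(\hat Y)=\mathbb{P}(h(X)=1\mid Y=y)=\gamma_{y1}(\hat Y)$. Hence the homogeneous halfspace witnessing hinge loss $\le\eta$ is a feasible linear predictor whose sign is (even exactly) non-discriminatory, so $L^*\le\eta$. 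Putting weight on the independent $A$ cannot decrease the hinge risk and generically destroys non-discrimination, so in fact $L^*$ equals the optimal halfspace hinge loss on $\mathcal{D}_0$; if one instead wants the non-discrimination constraint to be genuinely restrictive, one draws $A$ correlated with $Y$ but conditionally independent of $X$ given $Y$---as in the examples of Section~\ref{sec:suboptimalityposthoc}---which is where the threshold $\alpha<\tfrac18$ actually bites.

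Now suppose a polynomial-time algorithm solves our problem. Feed it samples from the constructed distribution (each $(x,y)\sim\mathcal{D}_0$ is turned into $(x,a,y)$ by drawing $a$ uniformly); it returns a possibly randomized $f(X,A)$ with $\mathcal{L}^{\mathrm{hinge}}(f)\le L^*+\epsilon$ and $\mathrm{sign}(f)$ $\alpha$-discriminatory. Let $\tilde g$ be the randomized predictor that, on input $x$, draws a fresh uniform bit $A'$ and fresh internal randomness and outputs $f(x,A')$; this is a legitimate randomized predictor of $Y$ from $X$. Since $A$ is itself uniform and independent of $(X,Y)$ under the constructed distribution, the hinge risk of $\tilde g$ under $\mathcal{D}_0$ equals $\mathcal{L}^{\mathrm{hinge}}(f)\le L^*+\epsilon\le\eta+\epsilon$. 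For $\epsilon<\tfrac18$ this contradicts the hardness statement above, so no polynomial-time algorithm solves our problem. The argument ignores $\alpha$ entirely, so with this construction the conclusion holds for all $\alpha\ge0$; the more careful correlated-$A$ construction replaces the resampling step with a convexity/averaging argument that goes through exactly when $\alpha<\tfrac18$.

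The conceptual content---that a fairness constraint already satisfied by the good comparator cannot make a learning problem easier---is immediate; the real work is in the constants, which is the main obstacle. One must instantiate Daniely's construction with a \emph{large-margin} normalization so that simultaneously (i) the comparator halfspace's hinge loss $\eta$ can be made smaller than any prescribed constant (without this, correctly-but-narrowly-classified points could push $L^*$ toward $1$ and kill the reduction), and (ii) the transported inapproximability gap is at least $\tfrac18$, matching both thresholds in the statement. This needs only the scaling in \citet{daniely2015complexity} together with the elementary bound $\ell^{\mathrm{hinge}}\ge\ell^{01}$ for predictions in $[-1,1]$; no new combinatorial gadget is required.
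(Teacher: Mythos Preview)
Your reduction has a genuine gap at its very first step. \citet{daniely2015complexity} establish hardness of agnostically learning halfspaces with respect to the \emph{0-1} loss: even when some halfspace has 0-1 error below a small constant, no polynomial-time learner can achieve 0-1 error bounded away from $\tfrac12$. They do not prove the hinge-loss statement you invoke, and the ``large-margin normalization'' you sketch does not recover it. In Daniely's hard instances the comparator halfspace has small 0-1 error but no margin control; scaling the weight vector up makes the hinge loss on the $\ell^*$-fraction of misclassified points blow up, while scaling it down drives the hinge loss toward $1$ everywhere. There is no choice of scale that makes the comparator's hinge loss $L^*$ a small constant, so your inequality $L^*\le\eta$ is unsubstantiated and the reduction collapses.

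The paper's proof is engineered precisely to bridge this 0-1/hinge gap, and it does so in a way that is not a cosmetic variant of your argument. It adds a new coordinate and places $1-\delta$ of the mass on an artificial $\tilde A=0$ population supported at $\pm e_1$, so that the linear comparator $[1;w^*]$ outputs exactly $\pm 1$ there and its hinge loss equals $2\ell^*$ on that population; this is what makes $L^*$ small (Lemma~\ref{lem:hardnessexistsgoodhinge}). The original hard distribution lives on the $\tilde A=1$ population, and the $\alpha$-non-discrimination constraint is the mechanism that transfers accuracy from the easy $\tilde A=0$ side to the hard $\tilde A=1$ side (Lemmas~\ref{lem:hardnessfixf}--\ref{lem:01lossessimilar}). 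This is why the bound $\alpha<\tfrac18$ genuinely matters in the paper's argument, whereas in your proposal it is vacuous --- a symptom of the fact that your construction never actually exercises the fairness constraint, and hence never converts hinge accuracy back into the 0-1 accuracy that Daniely's theorem requires.
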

The proof goes through a reduction from the hardness of improper, agnostic PAC learning of \textsc{Halfspaces}. 
Given a distribution $\mathcal{D}$ over $(X,Y)$ and the knowledge that there is a linear predictor which achieves 0-1 loss $\ell^*$ on $\mathcal{D}$, we construct a new distribution $\tilde{D}$ over $(\tilde{X}, \tilde{A}, \tilde{Y})$ such that an approximately non-discriminatory predictor with small hinge loss can be used to make accurate predictions on $\mathcal{D}$, even if it is not a linear function. The distribution $\tilde{\mathcal{D}}$ is identical to the original distribution $\mathcal{D}$ when conditioned on $\tilde{A} = 1$, and is supported on only two points conditioned on $\tilde{A} = 0$. The probabilities of the two points are constructed so that satisfying non-discrimination requires making accurate predictions on the $\tilde{A} = 1$ population, and thus on $\mathcal{D}$. In particular, for parameters $\epsilon,\alpha < \frac{1}{8}$, the predictor will have 0-1 loss at most $\frac{15}{16}\ell^* + \frac{47}{128}$ on $\mathcal{D}$, which is bounded away from $\frac{1}{2}$ when $\ell^* < \frac{1}{10}$. Since \citet{daniely2015complexity} proves that finding a predictor with accuracy bounded away from $\frac{1}{2}$ is hard in general, we conclude that the learning problem is computationally hard. See Appendix \ref{appendix:hardness} for a complete proof.

To summarize, learning a non-discriminatory binary predictor with the 0-1 loss is hard; learning a real-valued linear predictor with respect to a convex loss function is problematic due to the potential inexistence of a non-trivial non-discriminatory linear predictor; and even requiring only that the sign of the linear predictor be non-discriminatory is computationally hard. A more significant relaxation of non-discrimination is therefore required to arrive at a computationally tractable learning problem.

\newcommand{\E}{\mathbb{E}}
\newcommand{\tp}{^T}
\newcommand{\uv}{\mathrm{v}}
\newcommand{\XA}{\smatrix{X\\A}}
\newcommand{\Cov}[1]{\Sigma_{#1}}
\newcommand{\cov}[1]{\sigma_{#1}}
\newcommand{\var}[1]{\sigma^2_{#1}}
\newcommand{\weight}{w}
\newcommand{\editremark}[1]{\textbf{[#1]}}
\newcommand{\smatrix}[1]{\left[\begin{smallmatrix}#1\end{smallmatrix}\right]}

\section{Relaxing Non-discrimination for Tractable Estimation} \label{sec:2ndorder}

Motivated by the hardness result in Section \ref{sec:hardness}, we now proceed to relax the criterion of equalized odds. More precisely, we seek to identify a more tractable notion of non-discrimination based on \emph{second order moments}. The most similar prior work to this is given by \cite{ZafarVGG16}, who suggest a notion of non-discrimination based on \emph{first order} moments. However, this may be viewed as a constraint on second order moments provided that $A$ is binary. Namely, their notion amounts to relaxing the equalized odds constraint that $\mathbb{P}(\hat{Y} = \hat{y} \ |\ Y = y, A = 0) = \mathbb{P}(\hat{Y} = \hat{y} \ |\ Y = y, A = 1)$ to the constraint that $\mathbb{E}[\hat{Y} \ |\ Y = y, A = 0] = \mathbb{E}[\hat{Y} \ |\ Y = y, A = 1]$. They propose optimizing a convex loss subject to an approximation of this constraint. Their work is primarily applied and gives no learning or non-discrimination guarantees for their learning rule, both of which we address in this section for a different relaxation.

In particular we propose the notion of \emph{equalized correlations}. Equalized correlations is generally a weaker condition than equalized odds, but when considering the squared loss and when $X,A,Y$ are jointly Gaussian, it is in fact equivalent.

\begin{definition}[Equalized correlations] \label{def:equalized-correlations}
We say that a real-valued predictor $R$ satisfies \emph{equalized correlations} or is \emph{second-moment non-discriminatory} with respect to the protected attribute $A$ and target $Y$, if $R$, $A$, and $Y$ satisfy the following:
\begin{equation}\label{eq:equalized-correlations}
    \cov{RA}\var{Y}=\cov{RY}\cov{YA},
\end{equation}
where $\cov{\alpha\beta} = \mathbb{E}\left[ (\alpha - \mathbb{E}[\alpha])(\beta - \mathbb{E}[\beta]) \right]$ is the covariance between two random variables $\alpha$ and $\beta$.
\end{definition}

\begin{theorem}[Gaussian non-discrimination] \label{thm:gaussian}
If $X$, $A$, and $Y$ are jointly Gaussian, then the squared loss optimal (equalized odds) non-discriminatory predictor is linear. Furthermore, for all linear predictors in this setting, non-discrimination is equivalent to satisfying \eqref{eq:equalized-correlations}.
\end{theorem}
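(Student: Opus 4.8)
The plan is to prove the second assertion first, since the easy half of it feeds into the first. \emph{Equalized odds $\Leftrightarrow$ equalized correlations for linear predictors.} Fix a linear predictor $R=\innerprod{w}{X}+vA+c$ (I assume throughout that $\var{Y}>0$; when $Y$ is a.s.\ constant the equivalence fails trivially). Since $(X,A,Y)$ is jointly Gaussian, so is $(R,A,Y)$, hence the conditional law of $(R,A)$ given $Y$ is Gaussian with covariance the Schur complement of $\var{Y}$, whose off-diagonal entry is $\cov{RA}-\cov{RY}\cov{YA}/\var{Y}$. For a Gaussian pair, conditional independence is equivalent to this conditional covariance vanishing, and clearing the denominator gives exactly \eqref{eq:equalized-correlations}; this is both directions at once. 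I will additionally use the following implication, which holds for \emph{any} (possibly nonlinear, possibly randomized) predictor $\hat Y$ and needs only that $(A,Y)$ is jointly Gaussian: if $\hat Y$ satisfies equalized odds then $\E[\hat Y\mid A,Y]=\E[\hat Y\mid Y]=:g(Y)$, so $\cov{\hat Y A}=\cov{g(Y)A}=\frac{\cov{YA}}{\var{Y}}\cov{g(Y)Y}=\frac{\cov{YA}}{\var{Y}}\cov{\hat Y Y}$, i.e.\ $\hat Y$ satisfies equalized correlations.

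\emph{Optimality of a linear predictor.} Let $\hat Y$ be an arbitrary feasible (equalized-odds) predictor, possibly nonlinear and randomized, and let $L(X,A):=\E[Y\mid X,A]$, which is affine in $(X,A)$ by joint Gaussianity. Put $\bar h(X,A):=\E[\hat Y\mid X,A]$; since the internal randomness is independent of $(X,A,Y)$ we have $\hat Y\perp Y\mid(X,A)$, so $\bar h$ has the same covariances with $A$ and with $Y$ as $\hat Y$, and $\E(\bar h-Y)^2\le\E(\hat Y-Y)^2$. Let $\Pi$ be orthogonal projection in $L^2$ onto the space $\mathcal V=\operatorname{span}\{1,X_1,\dots,X_d,A\}$ of affine functions of $(X,A)$, and set $R:=\Pi\bar h$, a linear predictor. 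Because $A\in\mathcal V$ we get $\cov{RA}=\cov{\bar hA}$, and because $L\in\mathcal V$ while $Y-L$ is orthogonal to every $\sigma(X,A)$-measurable function we get $\cov{RY}=\cov{RL}=\cov{\bar hL}=\cov{\bar hY}$; thus $R$ has the same $A$- and $Y$-covariances as $\hat Y$. Chaining the pieces: $\hat Y$ feasible $\Rightarrow$ (easy half) $\hat Y$ satisfies equalized correlations $\Rightarrow$ $R$ satisfies equalized correlations $\Rightarrow$ ($R$ linear, by the equivalence) $R$ satisfies equalized odds. Finally, writing $Y=L+N$ with $N\perp\sigma(X,A)$ and applying Pythagoras, $\E(\hat Y-Y)^2=\E(\hat Y-\bar h)^2+\E(\bar h-L)^2+\E N^2\ge\E(\Pi\bar h-L)^2+\E N^2=\E(R-Y)^2$, using $\E(\bar h-L)^2\ge\E(\Pi\bar h-L)^2$ since $L\in\mathcal V$. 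Hence $R$ is a feasible \emph{linear} predictor with loss no larger than $\hat Y$'s. Since $\hat Y$ was an arbitrary feasible predictor, the optimum over all feasible predictors equals the optimum over feasible linear predictors and is attained there — minimizing the convex quadratic $\E(R-Y)^2$ in the coefficients $(w,v,c)$ over the (affine) set carved out by \eqref{eq:equalized-correlations}.

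\emph{Main obstacle.} The delicate point is the optimality claim against \emph{all} nonlinear and randomized feasible predictors: the equalized-odds feasible set is not convex, so one cannot merely project onto it. The device that makes the argument work is that $L^2$-projection onto affine functions of $(X,A)$ both (i) does not increase squared loss relative to the affine target $\E[Y\mid X,A]$, and (ii) leaves unchanged the two second moments $\cov{\cdot A}$ and $\cov{\cdot Y}$ that govern feasibility — the latter exactly because both $A$ and $\E[Y\mid X,A]$ lie in the projection subspace. The only other items needing care are the derandomization step (replacing $\hat Y$ by $\bar h$) and keeping the non-degeneracy assumption $\var{Y}>0$ in force.
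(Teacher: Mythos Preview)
Your proof is correct but takes a genuinely different route from the paper's. For the main claim, the paper works through the \emph{relaxed} problem: it writes the Lagrangian for minimizing $\E[(R-Y)^2]$ subject to the equalized-correlations constraint, observes that the pointwise (in $(X,A)$) minimizer of the Lagrangian is an affine function of $A$ and $\E[Y\mid X,A]$, and concludes that the relaxed optimum $R_\circ$ is linear; since the relaxed feasible set contains the strict (equalized-odds) feasible set, $R_\circ$ beats every strict-feasible predictor, while by the second claim $R_\circ$ is itself strict-feasible, giving equality. You instead take an arbitrary strict-feasible $\hat Y$, derandomize to $\bar h=\E[\hat Y\mid X,A]$, and $L^2$-project onto the affine span $\mathcal V$ of $(1,X,A)$; the key observation---that this projection preserves both $\cov{\cdot,A}$ and $\cov{\cdot,Y}$ because $A\in\mathcal V$ and $\E[Y\mid X,A]\in\mathcal V$---lets you transport equalized correlations (hence, by the equivalence, equalized odds) to the linear projection $R$, while Pythagoras shows the loss does not increase. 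Your approach avoids any appeal to Lagrangian duality and is arguably more elementary; it also makes the role of the affine conditional mean $\E[Y\mid X,A]$ very transparent. The paper's approach, on the other hand, identifies the optimum more explicitly and sets up the closed-form calculation that follows. Your explicit treatment of randomized predictors (assuming the internal randomness is independent of $(X,A,Y)$) and your separate verification that equalized odds implies equalized correlations for \emph{any} predictor when $(A,Y)$ is Gaussian are both points the paper leaves implicit.
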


Theorem \ref{thm:gaussian} means in particular that in the Gaussian setting under the squared loss, the optimal predictor that is non-discriminatory in the sense of Definition \ref{def:non-discrimination} is the same as the optimal predictor that is non-discriminatory in the more relaxed sense of Definition \ref{def:equalized-correlations}. We stress that this is generally not the case for non-Gaussian scenarios and with losses other than the squared loss that may result in non-linear optimal predictors.

One may also consider intermediate notions of non-discrimination between equalized odds and equalized correlations. One such option is to require that the conditional covariance $\cov{RA|Y}$ vanishes. We do not elaborate further, except to note that it is immediate that equalized odds implies this to be the case and this in turn generally implies equalized correlations.

Since linear prediction can be thought of as the hallmark of Gaussian processes, one could therefore justify the relaxation of Definition \ref{def:equalized-correlations} as being the appropriate notion of non-discrimination when we restrict our predictors to be linear. Linear predictors, especially under kernel transformations, are used in a wide array of applications. They thus form a practically relevant family of predictors where one would like to achieve non-discrimination. Therefore, in the remainder of this section, we develop a theory for non-discriminating linear predictors.

A particularly attractive feature of equalized correlations is that, for linear predictors, Equation \eqref{eq:equalized-correlations} amounts to a linear constraint. With any convex loss $\ell(\cdot,\cdot)$, finding the optimal second-moment non-discriminatory predictor can be written as (where $\Cov{\cdot,\cdot}$ are covariance matrices):
\begin{align*}
\min_{\weight}\ \E\left[\ell\left(Y,\weight\tp\XA\right)\right] \qquad\textrm{s.t.}\quad\weight\tp\Big(\Cov{\XA,A}\var{Y}-\Cov{\XA,Y}\cov{YA}\Big)=0
\end{align*}
This is a convex optimization problem with a single linear constraint, and is thus generally tractable.

In what follows, we take $X$ to be a real-valued vector, $A$ to be a scalar,  and the target to be binary $Y\in\{\pm 1\}$, unless otherwise noted. We also commit to the squared loss $(R-Y)^2$ throughout. Without loss of generality, we assume that $X$, $A$, and $Y$ all have zero mean. Recall the definition of the \emph{optimal linear predictor}:
\begin{equation} \label{eq:opt-lin}
    \widehat R=\altargmin_{r(x,a)=\weight\tp\smatrix{x\\a}} \E[(Y-r(X,A))^2]=\widehat r(X,A),
\end{equation}
where one can determine that $\widehat r(x,a)={\widehat \weight}\tp\smatrix{x\\a}$ with
\[
\widehat \weight=\E\left[\XA\smatrix{X\tp&A\tp}\right]^{-1}\E\left[\XA Y\right] = \Cov{\XA,\XA}^{-1}~\Cov{\XA,Y}.
\]

The optimal linear predictor may, in general, be discriminatory, so we define the \emph{optimal second-moment non-discriminatory linear predictor} as follows:
\begin{equation} \label{eq:opt-eq-cor-lin}
    R^\star=\altargmin_{r(x,a)=\weight\tp\smatrix{x\\a}} \E[(Y-r(X,A))^2]= r^\star(X,A) \qquad\textrm{s.t.}\quad \cov{r(X,A),A}\var{Y}=\cov{r(X,A),Y}\cov{YA}
\end{equation}
\noindent
thus the predictor is constrained to be linear and to satisfy \eqref{eq:equalized-correlations}, which is a single linear constraint. We can give a closed-form expression for $R^\star$ (see Section \ref{proof:opt-eq-cor-lin} in Appendix \ref{appendix:2ndorder} for details). We have that $r^\star(x,a)= {\weight^\star}\tp \smatrix{x\\a}$, where
\[
    \weight^\star =  \Cov{\XA,\XA}^{-1} ~ \left(\Cov{\XA,Y} -\alpha\uv \right)
\]
where $\uv$ is a vector encoding the non-discrimination constraint and $\alpha$ is a scalar defined as
\[
    \uv = \Cov{\XA,A} - \Cov{\XA,Y}\cov{YA}/\var{Y} \qquad\textrm{and}\qquad\alpha=\frac{\uv\tp\Cov{\XA,\XA}^{-1}\Cov{\XA,Y}}{\uv\tp\Cov{\XA,\XA}^{-1}\uv}
\]

Written as such, $R^\star$ is a function of $X$ and $A$. Nevertheless, it turns out that this optimal non-discriminatory linear predictor can be derived, in the sense of Definition \ref{def:derivedpredictor}, from the optimal (possibly discriminatory) linear predictor $\widehat R$ of Equation \ref{eq:opt-lin} and without access to $X$ individually.

\begin{theorem}[Derived] \label{thm:no-gap}
The second-moment non-discriminatory linear predictor minimizing the squared loss can be derived from the optimal least squares linear predictor $\widehat R$ and the joint (second moment) statistics of $(\widehat R,A,Y)$. Specifically, 
\[
    R^\star = \widehat R - \alpha\left(A-\widehat R\cov{YA}/\var{Y}\right),
\]
with
\[
        \alpha=\frac{\cov{YA} - \cov{\widehat R,Y}\cov{YA}/\var{Y}}{\var{A} -2(\cov{YA})^2/\var{Y}+\cov{\widehat R,Y}(\cov{YA})^2/(\var{Y})^2}.
\]
\end{theorem}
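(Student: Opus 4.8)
The plan is to start from the closed-form expression for the optimal second-moment non-discriminatory linear predictor already obtained above, namely $\weight^\star = \Cov{\XA,\XA}^{-1}\big(\Cov{\XA,Y}-\alpha\uv\big)$ with $\uv=\Cov{\XA,A}-\Cov{\XA,Y}\cov{YA}/\var{Y}$ and $\alpha=\uv\tp\Cov{\XA,\XA}^{-1}\Cov{\XA,Y}\big/\big(\uv\tp\Cov{\XA,\XA}^{-1}\uv\big)$, and to rewrite everything in terms of $\widehat R$ and the second moments of $(\widehat R,A,Y)$. Since $R^\star={\weight^\star}\tp\XA$ and $\widehat R=\widehat\weight\tp\XA=\Cov{Y,\XA}\Cov{\XA,\XA}^{-1}\XA$, linearity immediately gives $R^\star=\widehat R-\alpha\,\uv\tp\Cov{\XA,\XA}^{-1}\XA$, so the first task is to simplify the vector $\uv\tp\Cov{\XA,\XA}^{-1}\XA$.

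The key observation is that $A$ is itself one of the coordinates of the augmented vector $\XA$. Writing $A=e\tp\XA$ for the standard basis vector $e$ selecting the $A$-coordinate, we have $\Cov{A,\XA}=e\tp\Cov{\XA,\XA}$, hence $\Cov{A,\XA}\Cov{\XA,\XA}^{-1}\XA=e\tp\XA=A$; in words, the best linear predictor of $A$ from $(X,A)$ is $A$ itself. Together with $\Cov{Y,\XA}\Cov{\XA,\XA}^{-1}\XA=\widehat R$, the definition of $\uv$ then yields $\uv\tp\Cov{\XA,\XA}^{-1}\XA=A-\widehat R\,\cov{YA}/\var{Y}$, and therefore $R^\star=\widehat R-\alpha\big(A-\widehat R\,\cov{YA}/\var{Y}\big)$ as claimed, with the same scalar $\alpha$ as in the closed form.

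It remains to re-express $\alpha$ using only the second moments of $(\widehat R,A,Y)$. For this I would use two orthogonality facts coming from the fact that the residual $Y-\widehat R$ is orthogonal (in the inner product $\E[\,\cdot\,\cdot\,]=\Cov{\cdot,\cdot}$, since means are zero) to the span of the coordinates of $\XA$, which contains both $A$ and $\widehat R$: namely $\cov{\widehat R,A}=\cov{YA}$ and $\cov{\widehat R,Y}=\var{\widehat R}=\Cov{\widehat R,\widehat R}$. Expanding the quadratic forms $\uv\tp\Cov{\XA,\XA}^{-1}\Cov{\XA,Y}$ and $\uv\tp\Cov{\XA,\XA}^{-1}\uv$ with $\uv=\Cov{\XA,A}-\Cov{\XA,Y}\cov{YA}/\var{Y}$, each bilinear term collapses via $\Cov{A,\XA}\Cov{\XA,\XA}^{-1}\Cov{\XA,A}=\var{A}$, $\Cov{A,\XA}\Cov{\XA,\XA}^{-1}\Cov{\XA,Y}=\cov{\widehat R,A}=\cov{YA}$, and $\Cov{Y,\XA}\Cov{\XA,\XA}^{-1}\Cov{\XA,Y}=\cov{\widehat R,Y}$; substituting gives numerator $\cov{YA}-\cov{\widehat R,Y}\cov{YA}/\var{Y}$ and denominator $\var{A}-2(\cov{YA})^2/\var{Y}+\cov{\widehat R,Y}(\cov{YA})^2/(\var{Y})^2$, matching the statement exactly.

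Since every quantity in the final formula is a function of $\widehat R$, $A$, and the second-moment joint statistics of $(\widehat R,A,Y)$, this simultaneously shows that $R^\star$ is derived from $(\widehat R,A)$ in the sense of Definition~\ref{def:derivedpredictor}. The argument is a short chain of reductions of quadratic forms in $\Cov{\XA,\XA}^{-1}$; the only real content is the remark that projecting a coordinate of $\XA$ back onto $\XA$ returns that coordinate, so I do not anticipate a genuine obstacle --- the main care required is correct bookkeeping of the several bilinear terms and invoking the residual-orthogonality identities in the right places. (As throughout this section, invertibility of $\Cov{\XA,\XA}$ is assumed.)
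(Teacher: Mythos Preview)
Your proposal is correct and follows essentially the same route as the paper: both start from the closed-form $\weight^\star=\Cov{\XA,\XA}^{-1}(\Cov{\XA,Y}-\alpha\uv)$, use that $A$ is a coordinate of $\XA$ to collapse $\Cov{A,\XA}\Cov{\XA,\XA}^{-1}\XA=A$, and then reduce the bilinear forms in $\alpha$ via the residual orthogonality $\cov{\widehat R,A}=\cov{YA}$. The only cosmetic difference is that the paper first simplifies the row vector $\uv\tp\Cov{\XA,\XA}^{-1}$ and then applies it, whereas you expand the three bilinear terms $\Cov{A,\XA}\Cov{\XA,\XA}^{-1}\Cov{\XA,A}$, $\Cov{A,\XA}\Cov{\XA,\XA}^{-1}\Cov{\XA,Y}$, $\Cov{Y,\XA}\Cov{\XA,\XA}^{-1}\Cov{\XA,Y}$ directly; the content is the same.
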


Theorem \ref{thm:no-gap} shows that, as far as the equalized correlation criterion is concerned, there is no penalty for first finding an optimal linear predictor and then correcting it. Consequently, this criterion easily enforceable on existing predictors. Intuitively, one must simply ``subtract'' any potential correlation one could derive about protected attributes from the prediction score and the outcome, and this can be entirely determined from the statistics of the optimal linear predictor $\widehat R$, the protected attribute $A$, and the outcome $Y$, without the need to know the extended set of attributes $X$. We emphasize that this result does not rely on any Gaussian assumptions, but simply on the fact that we have limited ourselves to linear predictors and the relaxed notion of non-discrimination. Finally, it is worth mentioning that a two-step procedure, as in Section \ref{sec:binary}, could be developed also for learning second-moment (approximately) non-discriminatory linear predictors from samples.

\section{Conclusion}

In this work we took the first steps toward a statistical and computational theory of learning non-discriminatory (equalized odds) predictors. We saw that post hoc correction might not be optimal and devised a statistically optimal two-step procedure, after observing that a straightforward ERM-type approach is not sufficient.  Computationally, working with binary non-discrimination is essentially has hard as agnostically learning binary predictors, and so we should expect to have to resort to relaxations. We took the first step to this end in Section \ref{sec:2ndorder} where we considered a second moment relaxation of non-discrimination which leads to tractable learning.  We hope this will not be the final word on learning non-discriminatory predictors and that this work will spur interest in further understanding our relaxation, suggesting other relaxations, and studying other computationally efficient procedures with provable guarantees.

\nocite{daniely2014average}
\clearpage
\bibliographystyle{plainnat}
\bibliography{two_step_lwod}
\clearpage
\appendix
\section{Deferred Proofs from Section~\ref{sec:suboptimalityposthoc}} \label{appendix:sec2}
\subsection{Proof of Example \ref{theorem:Step2LowerBound01}} \label{appendix:sec2-ex1}
We restate the example for convenience:
\exampleone*
Consider the unconstrained hypothesis class of all (possibly randomized) functions from $(X,A)$ to $\sset{0,1}$.
Let $\c{D}_\epsilon$ be the following distribution over $(X,A,Y)$, with $X,A,Y \in \sset{0,1}$:
\begin{equation} \label{eq:lowerbound01distr}
\bP(Y = 1) = 0.5 \qquad\qquad
\bP(A = y\ |\ Y = y) = 1 - \epsilon \qquad\qquad
\bP(X = y\ |\ Y = y) = 1 - 2\epsilon
\end{equation}
The graphical model representing this distribution is
\begin{center}
\begin{tikzpicture}
\node[shape=circle,draw=black] (X) at (0,0) {$X$};
\node[shape=circle,draw=black] (Y) at (1.5,0) {$Y$};
\node[shape=circle,draw=black] (A) at (3,0) {$A$};

\path [->] (Y) edge node[left] {} (X);
\path [->] (Y) edge node[left] {} (A);
\end{tikzpicture}
\end{center}
Clearly, $X \perp A\ |\ Y$, so $Y^*=X$ is non-discriminatory and achieves a 0-1 loss of $2\epsilon$. This same predictor achieves hinge loss $4\epsilon$. This predictor, being non-discriminatory, upper bounds the loss of the optimal non-discriminatory predictor with respect to the $0$-$1$ and hinge losses.

The optimal predictor with respect to the $0$-$1$ loss, which might be discriminatory, is in the convex hull (i.e.~it might be randomized combination) of the sized mappings from $\sset{0,1}\times\sset{0,1} \mapsto \sset{0,1}$. The Bayes optimal predictor with respect to the $0$-$1$ loss is the hypothesis
\begin{equation}
\hat{h}(x,a) = \argmax{y \in \sset{0,1}}\ \mathbb{P}(Y = y\ |\ X = x, A = a)
\end{equation}
Given $a \in \sset{0,1}$, note that since $\epsilon < 1/4$
\begin{equation}
\begin{aligned}
\mathbb{P}(Y = a\ |\ X = a, A = a) 
&= \frac{1}{1 + \frac{\mathbb{P}(A = a \ |\ Y = 1-a)\mathbb{P}(X = a \ |\ Y = 1-a)}{\mathbb{P}(A = a \ |\ Y = a)\mathbb{P}(X = a \ |\ Y = a)}} \\
&= \frac{1}{1 + \frac{(\epsilon)(2\epsilon)}{(1-\epsilon)(1-2\epsilon)}} > \frac{1}{2}.
\end{aligned}
\end{equation}
Similarly
\begin{equation}
\begin{aligned}
\mathbb{P}(Y = a\ |\ X = 1-a, A = a) 
&= \frac{1}{1 + \frac{\mathbb{P}(A = a \ |\ Y = 1-a)\mathbb{P}(X = 1-a \ |\ Y = 1-a)}{\mathbb{P}(A = a \ |\ Y = a)\mathbb{P}(X = 1-a \ |\ Y = a)}} \\
&= \frac{1}{1 + \frac{\epsilon(1-2\epsilon)}{(1-\epsilon)(2\epsilon)}} > \frac{1}{2}.
\end{aligned}
\end{equation}
Therefore, the Bayes optimal predictor is $\hat{h}(X,A) = A$, which is $1$-discriminatory, as 
\begin{equation}
\mathbb{P}(\hat{h}(X,A) = 1\ |\ Y = y, A = 0) = 0 \qquad\textrm{but}\qquad \mathbb{P}(\hat{h}(X,A) = 1\ |\ Y = y, A = 1) = 1
\end{equation}

Consider now the post-hoc correction $\tilde{h}$ of $\hat{h}$. The best non-discriminatory predictor $\tilde{Y}$ derived from the joint distribution $(\hat{h},A,Y) \equiv (A,A,Y)$ is given by the following optimization problem
\begin{equation}
\begin{aligned}
\tilde{Y} = \argmin{h}&\ \mathcal{L}(h) \\
\textrm{s.t.}\ \ & \gamma_{y0}(h) = \gamma_{y1}(h) && \text{for }y = 0,1 \\
& \begin{bmatrix} \gamma_{0a}(h) \\ \gamma_{1a}(h) \end{bmatrix} \in \textrm{ConvHull}\left(
\begin{bmatrix} 0 \\ 0 \end{bmatrix}, 
\begin{bmatrix} \gamma_{0a}(\hat{h}) \\ \gamma_{1a}(\hat{h}) \end{bmatrix}, 
\begin{bmatrix} \gamma_{0a}(1-\hat{h}) \\ \gamma_{1a}(1-\hat{h}) \end{bmatrix}, 
\begin{bmatrix} 1 \\ 1 \end{bmatrix}
\right)\quad &&\text{for }a = 0,1.
\end{aligned}
\end{equation}
The first constraint requires that the resultant predictor be non-discriminatory. The second requires that the class conditional true positive and false positive rates of the predictor be in the convex hull of the constant 0 predictor, the constant 1 predictor, the predictor $\hat{h}$ and its negative. This constraint is equivalent to requiring that $\tilde{h}$ be derived from $\hat{h}$. 
Because $\gamma_{y0}(\hat{h}) = 0$ and $\gamma_{y1}(\hat{h}) = 1$, the second constraint requires the predictor have equal true and false positive rates. As $\bP(Y = 1) = 0.5$, $0.5$ is optimal true and false positive rate and $\c{L}(\tilde{h})=0.5$.

Using the same distribution, but with $X,A,Y \in \sset{-1,1}$ instead of $\sset{0,1}$, the  optimal non-discriminatory predictor with respect to the hinge loss is no worse than the predictor which returns $X$, achieving hinge loss $4\epsilon$. However, the Bayes optimal predictor with respect to the hinge loss is again that predictor which returns 
\begin{equation*}
\hat{h}(x,a) = \argmax{y \in \sset{0,1}}\ \mathbb{P}(Y = y\ |\ X = x, A = a) = a.
\end{equation*}
By the same line of reasoning, the post hoc correction of $\hat{h}$, which must have identical statistics for $A = 0$ and $A = 1$ is forced to have equal true and false positive rates, and thus the best derived predictor is identically $0$ which has hinge loss 1.

\subsection{Proof of Example \ref{theorem:Step2LowerBoundConvex}}\label{appendix:sec2-ex2}
We restate the example for convenience:
\exampletwo*
In this example, we consider the squared loss and the hypothesis class of linear predictors with $L^1$ norm at most $\frac{1}{2} - 2\epsilon$ for some $\epsilon \in (2/25,1/4)$:
\[ \mathcal{H} = \sset{w_1X + w_2A + b : \abs{w_1} + \abs{w_2} \leq \frac{1}{2} - 2\epsilon}. \]
With this parameter $\epsilon$, we use the same distribution as in the proof of Theorem \ref{theorem:Step2LowerBound01}:
\begin{equation*}
\bP(Y = 1) = 0.5 \qquad\qquad
\bP(A = y\ |\ Y = y) = 1 - \epsilon \qquad\qquad
\bP(X = y\ |\ Y = y) = 1 - 2\epsilon
\end{equation*}
Since $X \independent A \ |\ Y$, any linear function of $X$ only will be non-discriminatory and $h(X) = \left(\frac{1}{2} - 2\epsilon\right)X + \frac{1}{4} + \epsilon$ has the required $L^1$ norm and achieves squared loss $\frac{1}{16} + \frac{3\epsilon}{2} + 3\epsilon^2$. This predictor, being non-discriminatory, upper bounds the squared loss of the optimal non-discriminatory predictor.

To derive the optimal potentially discriminatory predictor, it will be useful to begin by calculating the covariances between each of the variables:
\begin{align}
\mathbb{E}\left[ X \right] = \mathbb{E}\left[ A \right] = \mathbb{E}\left[ Y \right] &= \frac{1}{2} \\
\mathbb{E}\left[ X^2 \right] = \mathbb{E}\left[ A^2 \right] = \mathbb{E}\left[ Y^2 \right] &= \frac{1}{2} \\
\mathbb{E}\left[ XA \right] &= \frac{1}{2} - \frac{3}{2}\epsilon + 2\epsilon^2 \\
\mathbb{E}\left[ XY \right] &= \frac{1-2\epsilon}{2} \\
\mathbb{E}\left[ AY \right] &= \frac{1-\epsilon}{2} 
\end{align}
The optimal predictor optimizes
\begin{equation}
\begin{aligned}
\hat{h} = \argmin{w_1,w_2,b}&\ \mathbb{E}\left[ (w_1X + w_2A + b - y)^2\right] \\
\textrm{s.t.}&\quad \abs{w_1} + \abs{w_2} \leq \frac{1}{2} - 2\epsilon.
\end{aligned}
\end{equation}
Forming the Lagrangian:
\begin{align}
\mathcal{L}(w_1,w_2,b,\lambda) &= \mathbb{E}\left[ (w_1X + w_2A + b - y)^2\right] + \lambda\left( \abs{w_1} + \abs{w_2} - \frac{1}{2} - 2\epsilon \right) \nonumber\\
&= w_1^2 \mathbb{E}\left[ X^2 \right] + w_2^2 \mathbb{E}\left[ A^2 \right] + b^2 + \mathbb{E}\left[ Y^2 \right] + 2w_1w_2\mathbb{E}\left[ XA \right] + 2w_1b\mathbb{E}\left[ X \right] \nonumber\\
&- 2w_1\mathbb{E}\left[ XY \right] + 2w_2b\mathbb{E}\left[ A \right] -2w_2\mathbb{E}\left[ AY \right] -2b\mathbb{E}\left[ Y \right] + \lambda\left( \abs{w_1} + \abs{w_2} - \frac{1}{2} - 2\epsilon \right) \nonumber \nonumber\\
&= \frac{w_1^2 + w_2^2 + 1}{2} + b^2 + w_1w_2(1 - 3\epsilon + 4\epsilon^2) + w_1b - w_1(1-2\epsilon) + w_2b \nonumber\\
&- w_2(1-\epsilon) -b + \lambda\left( \abs{w_1} + \abs{w_2} - \frac{1}{2} - 2\epsilon \right).
\end{align}
At the following values:
\begin{equation}
w_1 = 0 \qquad\qquad w_2 = \frac{1}{2} - 2\epsilon \qquad\qquad b = \frac{1}{4} + \epsilon \qquad\qquad \lambda = \frac{1}{4}
\end{equation}
the subdifferential of $\mathcal{L}$ contains $0$ for any $\epsilon \in (2/25,1/4)$. Looking term by term we have:
\begin{align}
\frac{\partial \ell}{\partial w_1} &= w_1 + w_2(1 - 3\epsilon + 4\epsilon^2) + b - (1-2\epsilon) + \lambda\textrm{sign}(w_1) \\
&= \left(\frac{1}{2} - 2\epsilon\right)(1 - 3\epsilon + 4\epsilon^2) + \frac{1}{4} + \epsilon - (1-2\epsilon) + \frac{1}{4}\textrm{sign}(0) \\
&= \frac{1}{4}\textrm{sign}(0) - 8\epsilon^3 + 8\epsilon^2 - \frac{\epsilon}{2} - \frac{1}{4},
\end{align}
where $\textrm{sign}(0)$ is an arbitrary value in $[-1,1]$ which is the subdifferential of $f(z) = \abs{z}$ at $0$. For any $\epsilon \in (2/25,1/4)$ 
\begin{equation}
\abs{- 8\epsilon^3 + 8\epsilon^2 - \frac{\epsilon}{2} - \frac{1}{4}} < \frac{1}{4} \implies 0 \in \frac{\partial \ell}{\partial w_1}
\end{equation}
Furthermore,
\begin{equation}
\begin{split}
\frac{\partial \ell}{\partial w_2} &= w_1(1 - 3\epsilon + 4\epsilon^2) + w_2 + b - (1-\epsilon) + \lambda\textrm{sign}(w_2) \\
&= \frac{1}{2} - 2\epsilon + \frac{1}{4} + \epsilon - (1-\epsilon) + \frac{1}{4}= 0,
\end{split}
\end{equation}
and
\begin{equation}
\begin{split}
\frac{\partial \ell}{\partial b} &= 2b + w_1 + w_2 - 1 \\
&= \frac{1}{2} + 2\epsilon + \frac{1}{2} - 2\epsilon - 1 = 0.
\end{split}
\end{equation}

This proves that $\hat{h}(X,A) = \left( \frac{1}{2} - 2\epsilon \right)A + \frac{1}{4} + \epsilon$ is the Bayes optimal  predictor in $\c{H}$ with respect to the squared loss. Furthermore, the random variable is supported on only two points: $\frac{1}{4} + \epsilon$ when $A = 0$ and $\frac{3}{4} - \epsilon$ when $A = 1$. It is clear that $\hat{h}$ is not independent of $A$ conditioned on $Y$. 

Since $\hat{h}$ is a deterministic function of $A$, the post hoc correction $\tilde{h}$, which must be independent of $A$ conditioned on $Y$, is forced to be independent of $A$, and consequently $Y$. Thus, $\tilde{h} \equiv 0.5$ is the best possible derived predictor, achieving square loss $1/4$.

Considering the class of 1-sparse linear predictors, the predictor $h(X,A) = (1-4\epsilon)X + 2\epsilon$, being conditionally independent of $A$ is non-discriminatory and achieves squared loss $2\epsilon - 4\epsilon^2$, upper bounding the loss of the optimal non-discriminatory 1-sparse linear predictor. Without regard for non-discrimination, the optimal hypothesis in the class with respect to the squared loss is $\hat{h}(X,A) = (1-2\epsilon)A + \epsilon$. This post hoc correction of this predictor suffers from the same issue as in the bounded norm case, resulting in a predictor that can have squared loss no better than $1/4$.

\section{Deferred Proofs From Section \ref{sec:detection}}\label{appendix:detection}
Recall the notation $\Gamma_{ya}(h)=\max_y|\gamma_{y0}(h)-\gamma_{y1}(h)|$ and $\Gamma_{ya}^S(h)=\max_y|\gamma^S_{y0}(h)-\gamma^S_{y1}(h)|$ from Definition~\ref{def:approxnondiscrimination}. To avoid clutter, we sometimes drop the dependence on $h$ for $\gamma_{ya}$ when $h$ is evident from the context. Also, recall that $S=\{(x_i,y_i,a_i):i\in[n]\}\sim\b{P}^n(X,Y,A)$ and $\P_{ya}=\b{P}(Y=y,A=a)$. \\
\subsection{Proof of Lemma~\ref{lem:detectiontest}} 
\textit{\noindent \paragraph{Lemma \ref{lem:detectiontest}} Given $n$ i.i.d.~samples $S$, $\forall \alpha\in(0,1),\delta\in(0,1/2)$, if $n > \frac{16\log{32/\delta}}{\alpha^2\min_{ya}\P_{ya}}$, then  with probability greater than $1-\delta$, $T$ satisfies,
\[T\left(\hat{Y}, S,\frac{\alpha}{2}\right) = \begin{cases} 0 & \textrm{if } \hat{Y} \textrm{ is 0-discriminatory on population} \\ 1 & \textrm{if } \hat{Y} \textrm{ is at least } \alpha \textrm{-discriminatory on population.} \end{cases}
\]}

\begin{proof} 
Recall that ${T(\hat{Y}, S,\alpha) = \mathbf{1}\left( \Gamma^S(\hat{Y}) > \alpha \right)}$. Let $\alpha_n>0$ be any parameter chosen to satisfy, 
\begin{equation}2\max_{ya}\sqrt{\frac{\log{32/\delta}}{n\P_{ya}}}<\alpha_n<\alpha-2\max_{ya}\sqrt{\frac{\log{32/\delta}}{n\P_{ya}}}.\label{eq:alphn}
\end{equation}
Then the following results readily follow from Lemma~\ref{lemma:bin_step1},
\begin{compactenum}[1.]
\item If $\hat{Y}$ is non-discriminatory, i.e. $\Gamma(\hat{Y})=0$, then 
\begin{equation*}
\begin{aligned}
\mathbb{P}\left( T(\hat{Y}, S,\alpha_n)  = 1 \right)
&= \mathbb{P}( \Gamma^S(\hat{Y}) > \alpha_n ) \leq \mathbb{P}\bigg( \Gamma^S(\hat{Y})> \Gamma(\hat{Y})+ 2\max_{ya}\sqrt{\frac{\log{32/\delta}}{n\P_{ya}}}\bigg)\le\frac{\delta}{2}.
\end{aligned}
\end{equation*}
\item Similarly, suppose is $\hat{Y}$ is at least $\alpha$-discriminatory on the population, i.e.  $\Gamma(\hat{Y})\ge \alpha$, then 
\begin{equation*}
\begin{aligned}
\mathbb{P}\left( T(\hat{Y}, S,\alpha_n)  = 0 \right)
&= \mathbb{P}\left( \Gamma^S(\hat{Y}) \le \alpha_n \right) \leq \mathbb{P}\bigg(  \Gamma^S(\hat{Y})\le \Gamma(\hat{Y})- 2\max_{ya}\sqrt{\frac{\log{32/\delta}}{n\P_{ya}}}\bigg)
\leq \frac{\delta}{2}.
\end{aligned}
\end{equation*}
\end{compactenum}
Thus, if $\frac{\alpha}{2}>  2\max_{ya}\sqrt{\frac{\log{32/\delta}}{n\P_{ya}}}$, then $\alpha_n=\frac{\alpha}{2}$ satisfies \eqref{eq:alphn} and  Lemma~\ref{lem:detectiontest} follows for $T(\hat{Y}, S,\frac{\alpha}{2})$.
\end{proof}

\subsection{Proof of Lemma~\ref{lemma:bin_step1}}
\textit{\noindent \paragraph{Lemma \ref{lemma:bin_step1}}  For $\delta\in(0,1/2)$ and a binary predictor $h$, if $n>\frac{8\log{8/\delta}}{\min_{ya}\P_{ya}}$, then
\begin{equation}
\begin{split}
&\b{P}\left(\left|\Gamma(h)-\Gamma^S(h)\right|> 2\max_{ya}\sqrt{\frac{\log{16/\delta}}{n\P_{ya}}}\right)\le \delta.
\end{split}
\end{equation} }
\begin{proof}
Recall that  
$n^S_{ya}=\sum_i\mathbf{1}({y_i=y,a_i=a})$. With slight abuse of notation, we define random variables  $S_{ya}=\{i:y_i=y,a_i=a\}$. \\
We then have $\gamma_{ya}^S(h)|S_{ya}=\frac{{\sum_{j\in S_{ya}}h(x_j,a_j)}}{n^S_{ya}}\sim \frac{1}{n_{ya}^S} \text{Binomial}(\gamma_{ya},n^S_{ya})$ with $\b{E}[\gamma_{ya}^S|S_{ya}]=\gamma_{ya}$.
\begin{flalign}
\nonumber\bP\left(|\gamma_{ya}^S-\gamma_{ya}|>t\right)&\overset{(a)}=\sum_{S_{ya}}\bP\left(|\gamma_{ya}^S-\gamma_{ya}|>t|S_{ya}\right)\bP\left(S_{ya}\right) &\\
\nonumber &\le \bP\left(n^S_{ya}<\frac{n\P_{ya}} 2\right)+\sum_{S_{ya}: n^S_{ya}\ge\frac{n\P_{ya}}2}\bP\left(|\gamma_{ya}^S-\gamma_{ya}|>t|S_{ya}\right)\bP\left(S_{ya}\right)&\\
\nonumber &\overset{(b)}\le  \exp{\big(- \frac{n\P_{ya}}8\big)}+\sum_{S_{ya}:n^S_{ya}\ge\frac{n\P_{ya}}2}2\exp{(-2t^2n^S_{ya})}\bP(S_{ya}) &\\
&\overset{(c)}{\le} \frac{\delta}{8}+2\exp{(-t^2n\P_{ya})},&
\label{eq:conc_cond_prob}
\end{flalign}
where in $(a)$ the summation is over all $2^n$ possible configurations of $S_{ya}\subset[n]$,  $(b)$ follow from Chernoff bound on $n^S_{ya}\sim\text{Binomial}(n,\P_{ya})$ and Hoeffding's bound on $\gamma_{ya}^S|S_{ya}\sim \frac{1}{n^S_{ya}}\text{Binomial}(n^S_{ya},\gamma_{ya})$, and $(c)$ follows from the condition on $n\P_{ya}$ in Lemma~\ref{lemma:bin_step1}.

Further, for $y\in\{0,1\}$, using a series of triangle inequality, $$\left||\gamma^S_{y0}-\gamma^S_{y1}|-|\gamma_{y0}-\gamma_{y1}|\right|\le|\gamma^S_{y0}-\gamma^S_{y1}-\gamma_{y0}+\gamma_{y1}|\le|\gamma^S_{y0}-\gamma^S_{y0}|+|\gamma^S_{y1}-\gamma_{y1}|,\text{ and }$$
\begin{flalign}
\nonumber\bP_S\left(\left||\gamma^S_{y0}-\gamma^S_{y1}|-|\gamma_{y0}-\gamma_{y1}|\right|> 2t\right) &
\le \bP_S\left(|\gamma^S_{y0}-\gamma_{y0}|+|\gamma^S_{y1}-\gamma_{y1}|>  2t\right)&\\
 \nonumber &\overset{(a)}\le\bP_S\Big(|\gamma^S_{y0}-\gamma_{y0}|> t\Big)+\bP_S\Big(|\gamma^S_{y1}-\gamma_{y1}|> t\Big)&\\
&\le\frac{\delta}{4}+4\exp\left(-t^2n\P_{ya}\right)\overset{(b)}\le \frac{\delta}{2},&
\end{flalign}
where $(a)$ follows from union bound, and $(b)$ follows from \eqref{eq:conc_cond_prob} using $t=\max_{a}\sqrt{\frac{\log{16/\delta}}{n\P_{ya}}}$. 
The lemma follows from collecting the failure probabilities for $y=0,1$.
\end{proof}

\section{Deferred Proofs From Section \ref{sec:binary}}\label{app:binary}
We use the notation $A\le _\delta B$ to denote that $A\le B$ holds with probability greater than $1-\delta$. Recall the notation $\Gamma_{ya}(h)=\max_y|\gamma_{y0}(h)-\gamma_{y1}(h)|$ and $\Gamma_{ya}^S(h)=\max_y|\gamma^S_{y0}(h)-\gamma^S_{y1}(h)|$ from Definition~\ref{def:approxnondiscrimination}. To avoid clutter, we sometimes drop the dependence on $h$ for $\gamma_{ya}$ when $h$ is evident from the context. Finally, in this section $C,C_1$ and $C_2$  denote absolute constants that are not necessarily the same at each occurrence.
\subsection{Proof of Lemma~\ref{lemma:BinaryStep1Guarantee}}\label{app:4_def}
The following intermediate lemma is used in proof of Lemma~\ref{lemma:BinaryStep1Guarantee}. 
\begin{lemma}\label{lemma:Heps}
Let  $\c{H}^{S_1}_{\alpha_{n}}=\{h\in\c{H}: \Gamma^{S_1}(h)\le \alpha_{n}\}$ denote the subset of hypothesis that satisfy the constraints in \eqref{eq:step1}.
If \,$\forall (y,a),\,n\P_{ya}>16\log{8/\delta}$ and $\alpha_{n}$ in \eqref{eq:step1} satisfies $\alpha_{n}\ge2\max_{ya}\sqrt{\frac{2\log{64/\delta}}{n\P_{ya}}}$, then  with probability  greater than $1-\frac{\delta}{4}$, $\Yfair\in\c{H}_{\alpha_n}^{S_1}$ for all $\Yfair\in\c{Q}(\c{L}(Y^*),0)\cap \c{H}$.
\end{lemma}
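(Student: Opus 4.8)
The plan is to show that $Y^*$, which satisfies $\Gamma(Y^*) = 0$ on the population, also satisfies the empirical constraint $\Gamma^{S_1}(Y^*) \le \alpha_n$ with high probability, so that it is a feasible point for the optimization \eqref{eq:step1}. The natural tool is the concentration bound of Lemma~\ref{lemma:bin_step1}, which controls $|\Gamma(h) - \Gamma^S(h)|$ for any \emph{fixed} binary predictor $h$. Since $Y^*$ is a single fixed predictor (it does not depend on the sample $S_1$), we may apply Lemma~\ref{lemma:bin_step1} directly to $h = Y^*$ on the sample $S_1$ of size $n/2$.

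First I would invoke Lemma~\ref{lemma:bin_step1} with failure probability $\delta/4$ in place of $\delta$ and sample size $n/2$: provided $(n/2)\P_{ya} > 8\log(32/\delta)$ for all $(y,a)$ — which follows from the hypothesis $n\P_{ya} > 16\log(8/\delta)$ after adjusting constants — we get that with probability at least $1-\delta/4$,
\[
\left|\Gamma(Y^*) - \Gamma^{S_1}(Y^*)\right| \le 2\max_{ya}\sqrt{\frac{\log(64/\delta)}{(n/2)\P_{ya}}} = 2\max_{ya}\sqrt{\frac{2\log(64/\delta)}{n\P_{ya}}}.
\]
Since $\Gamma(Y^*) = 0$ by assumption, this gives $\Gamma^{S_1}(Y^*) \le 2\max_{ya}\sqrt{2\log(64/\delta)/(n\P_{ya})}$. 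By the hypothesis on $\alpha_n$, namely $\alpha_n \ge 2\max_{ya}\sqrt{2\log(64/\delta)/(n\P_{ya})}$, we conclude $\Gamma^{S_1}(Y^*) \le \alpha_n$, hence $Y^* \in \c{H}^{S_1}_{\alpha_n}$. The same argument applies verbatim to any $Y^* \in \c{Q}(\c{L}(Y^*),0)\cap\c{H}$ since only $\Gamma(Y^*)=0$ was used.

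The only subtlety — and the one place to be careful — is the bookkeeping of constants when passing from the ambient sample size $n$ to the half-sample size $|S_1| = n/2$, and matching the $\log(8/\delta)$ threshold of Lemma~\ref{lemma:bin_step1} (applied at level $\delta/4$, so $\log(32/\delta)$) against the stated hypothesis $n\P_{ya} > 16\log(8/\delta)$; a crude bound $\log(32/\delta) \le 2\log(8/\delta)$ for $\delta < 1/2$ suffices. There is no union bound or uniform-convergence obstacle here because $Y^*$ is fixed independently of $S_1$; the whole content is a single scalar concentration inequality plus the choice of $\alpha_n$. I do not anticipate any real difficulty, only the routine constant-chasing just described.
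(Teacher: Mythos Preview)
Your proposal is correct and follows essentially the same approach as the paper: apply Lemma~\ref{lemma:bin_step1} to the fixed predictor $Y^*$ on the half-sample $S_1$ at confidence level $\delta/4$, use $\Gamma(Y^*)=0$, and compare to the assumed lower bound on $\alpha_n$. The paper's proof is the same chain of inequalities, and it likewise glosses over the exact constant-matching you flag (your bound $\log(32/\delta)\le 2\log(8/\delta)$ gives $16\log(32/\delta)\le 32\log(8/\delta)$, which is off by a factor of two from the stated hypothesis $n\P_{ya}>16\log(8/\delta)$, but the paper does not track this either).
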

\begin{proof}   Given $\Yfair \in\c{H}\cap \c{Q}(L^*,0)$, 
\begin{flalign}
\nonumber \b{P}(\Yfair\notin\c{H}_{\alpha_n}^{S_1})&=\b{P}(\Gamma^{S_1}(\Yfair)>\alpha_n)
\overset{(a)}\le \b{P}\Big(\Gamma^{S_1}(\Yfair)>2\max_{ya}\sqrt{\frac{2\log{64/\delta}}{n\P_{ya}}}\Big)&\\
&\overset{(b)}\le \b{P}\Big(\Gamma^{S_1}(\Yfair)>\Gamma(\Yfair)+2\max_{ya}\sqrt{\frac{2\log{64/\delta}}{n\P_{ya}}}\Big)&\\
&\overset{(c)}{\le} \frac{\delta}{4},
\end{flalign}
where $(a)$ follows from the condition on $\alpha_n$, $(b)$ follows from $\Gamma(Y^*)\!=0$ assumption, and $(c)$ follows from Lemma~\ref{lemma:bin_step1} as  $|S_1|=n/2$. 
\end{proof}

\textit{\noindent \paragraph{Lemma~\ref{lemma:BinaryStep1Guarantee}} 
Under the conditions in Theorem~\ref{theorem:UpperBounds}, if $\alpha_n>2\max_{ya}\sqrt{\frac{2\log{64/\delta}}{n\P_{ya}}}$, then w.p. greater than $1-\delta$, $\hat{Y}$ from Step $1$ satisfies 
$$\c{L}(\hat{Y}) \leq \c{L}(Y^*) + C_1\Da \text{, and }\Gamma(\hat{Y}) \le \alpha_n+ C_2\epsa.$$}

\begin{proof}
Recall that the training data for Step $1$ are denoted by  $S_1=\{(x_i,a_i,y_i):i\in[n/2]\}\sim\b{P}^{n/2}(X,A,Y)$ and $\P_{ya}=\b{P}(Y=y,A=a)$.  From Using Hoeffding's inequality  on the empirical $0$-$1$ loss $\c{L}^{S_1}(h)$, and using concentration results for $\Gamma^{S_1}(h)$ from Lemma~\ref{lemma:bin_step1}, respectively, the following holds for $\delta\in (0,1/2)$ and $\min_{ya}n\P_{ya}>16\log{8/\delta}$.
\begin{equation}
\big|\c{L}(h)-\c{L}^{S_1}(h)\big| \le_{\delta/4} \sqrt{\frac{\log{8/\delta}}{n}},\text{ and} \quad
|\Gamma(h)-\Gamma^{S_1}(h)|\le_{\delta/4} 2\max_{ya}\sqrt{\frac{2\log{64/\delta}}{n\P_{ya}}}.
\label{eq:conc}
\end{equation} 

Using \eqref{eq:conc} and the standard VC dimension uniform bound \citep{bousquet2004introduction},  the following holds with high probability for absolute constants $C_1$ and $C_2$, 
\begin{equation}
\begin{split}
 |\c{L}(\hat{Y})-\c{L}^{S_1}(\hat{Y})|&\le_{\delta/4} C_1\sqrt{\frac{VC(\c{H})+\log{1/\delta}}{n}} \text{,\quad and  }\\
|\Gamma(\hat{Y})-\Gamma^{S_1}(\hat{Y})|&\le_{\delta/4} C_2\max_{ya}\sqrt{\frac{VC(\c{H})+\log{1/\delta}}{n\P_{ya}}}.
 \end{split}
 \label{eq:gamma_conc}
 \end{equation}
Finally, from Lemma~\ref{lemma:Heps}, with probability greater than $1-\delta/4$, any $0$-discriminatory $\Yfair\in\c{H}$ is in the feasible set for Step $1$ in \eqref{eq:step1},  and thus from the optimality of $\hat{Y}$,  $\c{L}^{S_1}(\hat{Y})\le_{\delta/4}\c{L}^{S_1}(\Yfair)\le_{\delta/4} \c{L}(Y^*)+ C_1\sqrt{\frac{VC(\c{H})+\log{1/\delta}}{n}}$. Thus, 
\begin{flalign}
\nonumber&\c{L}(\hat{Y})\le_{\delta/4} \c{L}^{S_1}(\hat{Y}) +C_1\sqrt{\frac{VC(\c{H})+\log{1/\delta}}{n}}\le_{\delta/2} \c{L}(Y^*)+ 2C_1\sqrt{\frac{VC(\c{H})+\log{1/\delta}}{n}},&\\
&\Gamma(\hat{Y})\le_{{\delta}/4} \alpha_n +C_2\max_{ya}\sqrt{\frac{VC(\c{H})+\log{1/\delta}}{n\P_{ya}}}.&\
\label{eq:tmp1}
\end{flalign}
The lemma follows from  combining the failure probabilities in the above equation.
\end{proof}

\subsection{Proof of Lemma~\ref{steptwotopop}}
\textit{\noindent \paragraph{Lemma~\ref{steptwotopop}}
If $h$ is an $\alpha$-discriminatory binary predictor $h\in\c{Q}(\c{L}(h),\alpha)$, then the optimal 0-discriminatory derived predictor $\tilde{Y}^*(h)$  from \eqref{eq:post-hocalgorithm} using 0-1 loss satisfies $\tilde{Y}^*(h)\in\c{Q}(\c{L}(h)+\alpha,0)$.}

\begin{proof} 
The intuition is to conservatively bound the true and false positive rates of the non-discriminatory derived predictor using the class conditional rates for $h$. In the case of binary predictors, $\tilde{Y}$ being derived from $h$ is equivalent to requiring that 
\begin{equation}
\big(\gamma_{0a}(\tilde{Y}(h)),\gamma_{1a}(\tilde{Y}(h))\big) \in  \textrm{Conv}\left( (0,0), (1,1), (\gamma_{0a}(h), \gamma_{1a}(h)), (1 - \gamma_{0a}(h), 1 - \gamma_{1a}(h)) \right)
\label{eq:convhull}
\end{equation}
In the figure below, $\tilde{Y}^*(h)$ is the actual optimal derived non-discriminatory predictor, but we estimate it conservatively using the worse of the class conditional true and false positive rates of $h$:
\begin{figure}[H]
\begin{center}
\includegraphics[width=6.5cm]{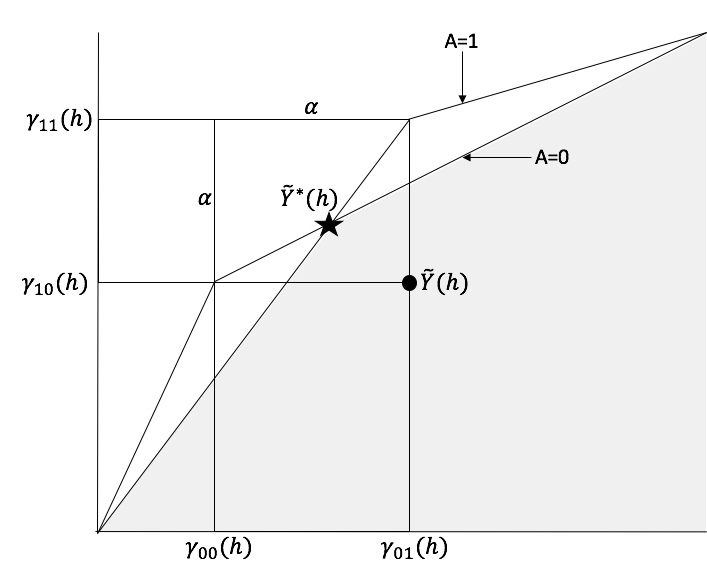}
\end{center}
\end{figure} 
Without loss of generality, assume $\gamma_{1a}(h) \geq 0.5$ and $\gamma_{0a}(h) \leq 0.5$ for all $a$ (the hypothesis  is at least as good as chance).  Consider the predictor $\tilde{Y}$ such that for both $a\in\{0,1\},$
\[ \big(\gamma_{0a}(\tilde{Y}),\gamma_{1a}(\tilde{Y})\big)= \left( \max(\gamma_{00}, \gamma_{01}),\min(\gamma_{10}, \gamma_{11}) \right)\in\textrm{Conv}\left( (0,0), (1,1), (\gamma_{0a}(h), \gamma_{1a}(h))\right) \] 
that is, $\tilde{Y}$ has the greater of the two false positive rates and lesser of the two true positive rates for both classes $A=1$ and $A = 0$. 
Additionally,  Lemma~\ref{steptwotopop} requires that $\forall y$, $|\gamma_{y1}(h)-\gamma_{y0}(h)|\le\alpha$. Thus, for $a\in\{0,1\}$, 
\begin{equation}
\begin{split}
 \gamma_{0a}(\tilde{Y}) - \gamma_{0a}(h) &=\max_{a'}\gamma_{0a'}(h) - \gamma_{0a}(h)  \le \alpha, \text{ and }\\
\gamma_{1a}(h) - \gamma_{1a}(\tilde{Y})&=\gamma_{1a}(h) - \min_{a'}\gamma_{1a'}(h)  \le \alpha. 
\end{split}
\end{equation}

Clearly, this choice of $\tilde{Y}$ is both non-discriminatory (as $\gamma_{ya}(\tilde{Y})$ is set independent of $a$ for all $y$), as well as derived (as $\gamma_{ya}(\tilde{Y})$ satisfy \eqref{eq:convhull}). Thus $\tilde{Y}$ is a feasible point for \eqref{eq:post-hocalgorithm}, and we have
\begin{equation}
\begin{aligned}\label{eq:proof18eq}
\mathbb{E}[\ell^{01}(\tilde{Y}^*(h))]&\le \mathbb{E}[\ell^{01}(\tilde{Y})] = \sum_{a\in\sset{0,1}} \P_{0a}{\gamma}_{0a}(\tilde{Y}) + \sum_{a\in\sset{0,1}}\P_{1a}(1-{\gamma}_{1a}(\tilde{Y}))\\
&\overset{(a)}\le\sum_{a\in\sset{0,1}}\P_{0a}\left(\gamma_{0a}(h)+\alpha\right) + \sum_{a\in\sset{0,1}}\P_{1a}\left(1-(\gamma_{1a}(h) - \alpha)\right) \\
&\leq \mathbb{E}[\ell^{01}(h)] +\alpha,
\end{aligned}
\end{equation}
where $(a)$ follows from \eqref{eq:proof18eq}.
\end{proof}

\removed{
Lemma~\ref{steptwotopop} is a consequence of the following  stronger lemma on the population correction of an approximately discriminatory predictor. Recall that $\P_{ya}=\b{P}(Y=y,A=a)$, and $\P_a=\b{P}(A=a)$. 
\begin{lemma}\label{steptwotopop-a}
If for some $\beta>0$, $h$ satisfies,  $\forall y\in\{0,1\}$ $|\gamma_{y1}-\gamma_{y0}|\le \min_a \frac{\beta \P_{a}}{2\P_{ya}}$, 
then the optimal 0-discriminatory derived predictor $\tilde{Y}^*(h)$  from \eqref{eq:post-hocalgorithm} using 0-1 loss satisfies $\tilde{Y}^*(h)\in\c{Q}(\c{L}(h)+\beta,0)$.
\end{lemma}
\begin{proof} 
The intuition is to conservatively bound the true and false positive rates of the non-discriminatory derived predictor using the class conditional rates for $h$. In the case of binary predictors, $\tilde{Y}$ being derived from $h$ is equivalent to requiring that 
\[
\big(\gamma_{0a}(\tilde{Y}(h)),\gamma_{1a}(\tilde{Y}(h))\big) \in  \textrm{Conv}\left( (0,0), (1,1), (\gamma_{0a}(h), \gamma_{1a}(h)), (1 - \gamma_{0a}(h), 1 - \gamma_{1a}(h)) \right)
\]
In the figure below, $\tilde{Y}^*(h)$ is the actual optimal derived non-discriminatory predictor, but we estimate it conservatively using the worse of the class conditional true and false positive rates of $h$:
\begin{figure}[H]
\begin{center}
\includegraphics[width=8cm]{PictureProofAlphaLoss.png}
\end{center}
\end{figure} 
\noindent
\noindent Without loss of generality, assume $\gamma_{1a}(h) \geq 0.5$ and $\gamma_{0a}(h) \leq 0.5$ for all $a$ (the hypothesis  is at least as good as chance).  Consider the predictor $\tilde{Y}$ such that $\forall a\in\{0,1\},$
\[ \big(\gamma_{0a}(\tilde{Y}),\gamma_{1a}(\tilde{Y})\big)= \left( \max(\gamma_{00}, \gamma_{01}),\min(\gamma_{10}, \gamma_{11}) \right)\in\textrm{Conv}\left( (0,0), (1,1), (\gamma_{0a}(h), \gamma_{1a}(h))\right) \] 
that is, $\tilde{Y}$ has the greater of the two false positive rates and lesser of the two true positive rates for both classes $A=1$ and $A = 0$. Clearly, this choice of $\tilde{Y}$ is both non-discriminatory and derived, thus it is a feasible point for \eqref{eq:post-hocalgorithm}.
Let $\tilde{\gamma}_{y}:=\gamma_{y1}(\tilde{Y})=\gamma_{y0}(\tilde{Y})$, we then have
\begin{equation}
\begin{aligned}\label{eq:proof18eq}
\mathbb{E}[\ell^{01}(\tilde{Y}^*(h))]&\le \mathbb{E}[\ell^{01}(\tilde{Y})]= \mathbb{P}(Y = 0)\tilde{\gamma}_{0} + \mathbb{P}(Y = 1)(1-\tilde{\gamma}_{1}) \\
&= \sum_{a\in\sset{0,1}} \mathbb{P}(Y = 0,A=a){\gamma}_{0a}(\tilde{Y}) + \sum_{a\in\sset{0,1}}\mathbb{P}(Y = 1,A=a)(1-{\gamma}_{1a}(\tilde{Y}))
\end{aligned}
\end{equation}

For $y\in\{0,1\}$, denote $\beta_y:= \min_a \frac{\beta \P_{a}}{2\P_{ya}}$. Lemma~\ref{steptwotopop-a} requires that $|\gamma_{y1}-\gamma_{y0}|\le\beta_y$. Thus, for $a\in\{0,1\}$, $ \gamma_{0a}(\tilde{Y}) - \gamma_{0a}(h) =\max_{a'}\gamma_{0a'}(h) - \gamma_{0a}(h)  \le \beta_0$ and  $\gamma_{1a}(h) - \gamma_{1a}(\tilde{Y})=\gamma_{1a}(h) - \min_{a'}\gamma_{1a'}(h)  \le \beta_1$. Thus, \eqref{eq:proof18eq} can be upper bounded with
\begin{equation}
\begin{aligned}
\mathbb{E}[\ell^{01}(\tilde{Y}^*(h))]&\le\sum_{a\in\sset{0,1}}\P_{0a}\left(\gamma_{0a}(h)+\beta_0\right) + \sum_{a\in\sset{0,1}}\P_{1a}\left(1-(\gamma_{1a}(h) - \beta_1)\right) \\
&\leq \mathbb{E}[\ell^{01}(h)] +\sum_{ya}\P_{ya} \beta_y\\
&\overset{(a)}{\le} \mathbb{E}[\ell^{01}(h)]+\sum_{ya}\frac{\beta \P_{a}}{2} = \c{L}(h)+\beta,
\end{aligned}
\end{equation}
where $(a)$ follows from $\beta_y:= \min_a \frac{\beta \P_{a}}{2\P_{ya}}$.

Further, if $h$ is $\alpha$-discriminatory, then noting that $\forall y,a$, $\P_a\ge \P_ya$, we have $\forall y$,  $|\gamma_{y1}(h)-\gamma_{y0}(h)|\le \alpha\le \min_a \frac{\alpha\P_a}{\P_{ya}}$. Lemma~\ref{steptwotopop} follows by using $\beta=\alpha$ in Lemma~\ref{steptwotopop-a}.
\end{proof}}

\subsection{Proof of Theorem~\ref{theorem:UpperBounds}}\label{app:ub}
The following supporting lemma on concentration of non-discrimination for randomized predictors is used in the proof of Theorem~\ref{theorem:UpperBounds}.
\begin{lemma} For $\delta\in(0,1/2)$ and $h\in\c{H}$, if $n\P_{ya}>16\log{8/\delta}$, For any randomized predictor $\tilde{Y}$ derived from $(\hat{Y},A)$, i.e.$\tilde{Y}\in\c{P}(\hat{Y})$, satisfies the following for $|S_2|=n/2$ iid sampels:
\begin{equation}
\big|\c{L}(\tilde{Y})-\c{L}^{S_2}(\tilde{Y})\big| \le_{\delta} \sqrt{\frac{\log{2/\delta}}{n}},\text{ and } 
|\Gamma(\tilde{Y})-\Gamma^{S_2}(\tilde{Y})|\le_{\delta} 2\max_{ya}\sqrt{\frac{2\log{16/\delta}}{n\P_{ya}}}.
\end{equation} 
\label{lemma:bin_step2}
\end{lemma}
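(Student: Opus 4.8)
The plan is to exploit how small the derived-predictor class $\c{P}(\hat{Y})$ really is. Conditioned on the Step~$1$ sample $S_1$, the predictor $\hat{Y}\in\c{H}$ is a fixed deterministic function, and since $S_2$ is independent of $S_1$ we may treat $\hat{Y}$ as fixed throughout and work only with the randomness of $S_2$ (the resulting bounds then hold for every realization of $S_1$, hence unconditionally). Every $\tilde{Y}\in\c{P}(\hat{Y})$ is specified by the four numbers $\tilde{p}_{\hat{y}a}\in[0,1]$, and because a derived predictor is, given $(\hat{Y},A)$, independent of $(X,Y)$, all the relevant functionals are \emph{affine} in $(\tilde{p}_{\hat{y}a})$. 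Once this is exploited, the non-discrimination part reduces by a simple contraction to the single-predictor concentration already proven in Lemma~\ref{lemma:bin_step1}, and the loss part reduces to Hoeffding over finitely many deterministic predictors, both with only absolute constants.

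For the non-discrimination bound, I would first write, for each $y,a$,
\[
\gamma_{ya}(\tilde{Y}) = \tilde{p}_{0a}\bigl(1-\gamma_{ya}(\hat{Y})\bigr)+\tilde{p}_{1a}\gamma_{ya}(\hat{Y}),\qquad \gamma^{S_2}_{ya}(\tilde{Y}) = \tilde{p}_{0a}\bigl(1-\gamma^{S_2}_{ya}(\hat{Y})\bigr)+\tilde{p}_{1a}\gamma^{S_2}_{ya}(\hat{Y}),
\]
so that $|\gamma_{ya}(\tilde{Y})-\gamma^{S_2}_{ya}(\tilde{Y})| = |\tilde{p}_{1a}-\tilde{p}_{0a}|\,|\gamma_{ya}(\hat{Y})-\gamma^{S_2}_{ya}(\hat{Y})|\le |\gamma_{ya}(\hat{Y})-\gamma^{S_2}_{ya}(\hat{Y})|$, and then, by the same triangle inequalities used in the proof of Lemma~\ref{lemma:bin_step1}, $|\Gamma(\tilde{Y})-\Gamma^{S_2}(\tilde{Y})|\le\max_{y}\bigl(|\gamma_{y0}(\hat{Y})-\gamma^{S_2}_{y0}(\hat{Y})|+|\gamma_{y1}(\hat{Y})-\gamma^{S_2}_{y1}(\hat{Y})|\bigr)$, \emph{uniformly} over $\tilde{Y}\in\c{P}(\hat{Y})$. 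Applying the per-cell concentration \eqref{eq:conc_cond_prob} to the fixed predictor $\hat{Y}$ with $|S_2|=n/2$ in place of $n$ (the hypothesis $n\P_{ya}>16\log(8/\delta)$ gives exactly the required $|S_2|\P_{ya}>8\log(8/\delta)$, which also makes $n^{S_2}_{ya}>0$ with high probability), taking $t=\max_{ya}\sqrt{2\log(16/\delta)/(n\P_{ya})}$ and a union bound over the four pairs $(y,a)$, bounds this quantity by $2\max_{ya}\sqrt{2\log(16/\delta)/(n\P_{ya})}$ with probability at least $1-\delta$.

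For the loss bound, the same parameterization shows $\b{E}_{\tilde{Y}}\ell^{01}(\tilde{Y}(\hat{y},a),y)=\tilde{p}_{\hat{y}a}\mathbf{1}(y=0)+(1-\tilde{p}_{\hat{y}a})\mathbf{1}(y=1)\in[0,1]$, so $\c{L}(\tilde{Y})$, $\c{L}^{S_2}(\tilde{Y})$, and hence their difference, are affine in $(\tilde{p}_{\hat{y}a})\in[0,1]^4$; since $[0,1]^4$ is the convex hull of the at most $16$ deterministic predictors $f$ derived from $(\hat{Y},A)$ and $|\cdot|$ is convex, $\sup_{\tilde{Y}\in\c{P}(\hat{Y})}|\c{L}(\tilde{Y})-\c{L}^{S_2}(\tilde{Y})| = \max_{f}|\c{L}(f)-\c{L}^{S_2}(f)|$. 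For each fixed $f$ the summands $\ell^{01}(f(\hat{Y}(x_i,a_i),a_i),y_i)$, $i\in S_2$, are i.i.d.\ in $[0,1]$, so Hoeffding plus a union bound over these finitely many $f$ gives $\sup_{\tilde{Y}\in\c{P}(\hat{Y})}|\c{L}(\tilde{Y})-\c{L}^{S_2}(\tilde{Y})|\le\sqrt{\log(32/\delta)/(2|S_2|)}=\sqrt{\log(32/\delta)/n}$ with probability at least $1-\delta$; the fixed factor $16$ inside the logarithm is absorbed into the absolute constants of Theorem~\ref{theorem:UpperBounds} (so one may write the bound as $\sqrt{\log(2/\delta)/n}$ up to this constant). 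Finally I would combine the two events by a union bound (each at confidence $1-\delta/2$) to get both inequalities simultaneously.

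The only real content — and the step I expect to be the main obstacle to state cleanly — is recognizing that $\c{P}(\hat{Y})$ is a constant-dimensional affine family, so that a uniform bound over it costs only an absolute constant rather than anything depending on $\hat{Y}$ or on $VC(\c{H})$. Everything else is bookkeeping: the non-discrimination part is then just the contraction $|\gamma_{ya}(\tilde{Y})-\gamma^{S_2}_{ya}(\tilde{Y})|\le|\gamma_{ya}(\hat{Y})-\gamma^{S_2}_{ya}(\hat{Y})|$ fed into Lemma~\ref{lemma:bin_step1}, and the loss part is Hoeffding over $16$ functions. This is precisely what lets the final non-discrimination guarantee for $\tilde{Y}$ in Theorem~\ref{theorem:UpperBounds} shed the $VC(\c{H})$ dependence present in Lemma~\ref{lemma:BinaryStep1Guarantee}.
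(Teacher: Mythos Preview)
Your proposal is correct, but it takes a genuinely different route from the paper. The lemma as stated is \emph{pointwise}: for any single fixed $\tilde{Y}\in\c{P}(\hat{Y})$, the two concentration bounds hold with probability $\ge 1-\delta$. The paper's proof is accordingly very short: since $\c{L}^{S_2}(\tilde{Y})=\frac{2}{n}\sum_{i\in S_2}\b{E}_{\tilde{Y}}\ell^{01}(\tilde{Y}(\hat{y}_i,a_i),y_i)$ is an average of $n/2$ i.i.d.\ $[0,1]$-valued random variables with mean $\c{L}(\tilde{Y})$, Hoeffding gives the loss bound directly with the stated constant $\sqrt{\log(2/\delta)/n}$; and for $\Gamma$, one simply observes that $\gamma^{S_2}_{ya}(\tilde{Y})\mid S_{2_{ya}}$ is again a mean of bounded i.i.d.\ variables (rather than a binomial), so the proof of Lemma~\ref{lemma:bin_step1} goes through verbatim with the generic Hoeffding inequality replacing the binomial version. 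Uniformity over $\c{P}(\hat{Y})$ is obtained only later, in the proof of Theorem~\ref{theorem:UpperBounds}, by a separate VC argument on the finite class $\c{H}_{\hat{Y},A}$.

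What you do instead is prove the \emph{uniform} statement directly, by exploiting the affine parameterization of $\c{P}(\hat{Y})$. Your contraction $|\gamma_{ya}(\tilde{Y})-\gamma^{S_2}_{ya}(\tilde{Y})|=|\tilde{p}_{1a}-\tilde{p}_{0a}|\,|\gamma_{ya}(\hat{Y})-\gamma^{S_2}_{ya}(\hat{Y})|$ is correct and gives the non-discrimination bound uniformly over all $\tilde{Y}$ for free, with exactly the stated constant. Your convex-hull reduction for the loss is also correct but yields $\sqrt{\log(32/\delta)/n}$ rather than $\sqrt{\log(2/\delta)/n}$; this is the price of uniformity and, as you note, is harmless for Theorem~\ref{theorem:UpperBounds}. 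So your argument is more elaborate than needed for the lemma itself, but it has the advantage of making transparent \emph{why} the $VC(\c{H})$ term disappears in Step~2 --- the paper defers that insight to a separate VC computation, whereas you get it structurally from the contraction. If you want to match the paper's constants exactly, just drop the uniformity and apply Hoeffding once to the fixed $\tilde{Y}$.
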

\begin{proof} The proof essentially follows the same arguments that were used  for \eqref{eq:conc} and Lemma~\ref{lemma:bin_step1}.

For a randomized predictor $\tilde{Y}$,  $$\c{L}^{S_2}(\tilde{Y})-\c{L}(\tilde{Y})=\frac{2}{n}\sum_{i\in S_2} \b{E}_{\tilde{Y}} \ell(\tilde{Y}(\hat{y}_i,a_i),y_i)-\b{E}_{X,A,Y}\b{E}_{\tilde{Y}}\ell(\tilde{Y},Y)$$
Here $\c{L}^{S_2}(\tilde{Y})-\c{L}(\tilde{Y})$ is merely a sum of $n/2$ independent and $[0,1]$ bounded random variables and Hoeffdings bound can be applied to get the required concentration  on $\c{L}^{S_2}$.

Similarly, for any randomized predictor $\tilde{Y}$, the conditional random variable $\gamma_{ya}^{S_2}(\tilde{Y})|S_{2_{ya}}=\frac{{\sum_{j\in S_{2_{ya}}}\b{E}_{\tilde{Y}}\tilde{Y}(\hat{y}_j,a_j)}}{n^{S_2}_{ya}}$ is a sum of $[0,1]$ bounded random variables with mean $\b{E}[\gamma_{ya}^{S_2}(\tilde{Y})|S_{2_{ya}}]=\gamma_{ya}(\tilde{Y})$, the proof of Lemma~\ref{lemma:bin_step1} can be repeated verbatim for the randomized prediction where instead of the Hoeffdings' bound on Binomial random variables, we use the identical Hoeffdings' bound for $[0,1]$ bounded random variables.
\end{proof}

\thmub*
\begin{proof} We begin with the following result from from Lemma~\ref{lemma:bin_step2} which shows the concentration of loss and discrimination in radomized derived predictors: if $n\P_{ya}>16\log{8/\delta}$, for any randomized predictor $\tilde{h}$ derived from $(\hat{Y},A)$, i.e. $\tilde{h}\in\c{P}(\hat{Y})$, satisfies the following:
\begin{equation}
\big|\c{L}(\tilde{h})-\c{L}^{S_2}(\tilde{h})\big| \le_{\delta/4} \sqrt{\frac{\log{8/\delta}}{n}},\text{ and } 
|\Gamma(\tilde{h})-\Gamma^{S_2}(\tilde{h})|\le_{\delta/4} 2\max_{ya}\sqrt{\frac{2\log{64/\delta}}{n\P_{ya}}}.
\label{eq:conc2}
\end{equation}

\paragraph{VC dimension of $\c{P}(\hat{Y})$: }
If $|\hat{\c{Y}}|$ and $|\c{A}|$ are finite, consider a finite  hypothesis class denoted by $\c{H}_{\hat{Y},A}$, that includes all deterministic mappings from $(\hat{Y},A)$ to binary values $\{0,1\}$.  If $\hat{Y}$ and ${A}$ are both binary, then there are $4^2=16$ such mappings  $\c{H}_{\hat{Y},A}=\{\tilde{h}:\{0,1\}\times \{0,1\}\to \{0,1\}\}$. 

Further, recall that  the feasible set of (randomized) binary predictors derived from $\bP(\hat{Y},Y,A)$ is denoted by $\c{P}(\hat{Y})$, and  any $\tilde{Y}\in\c{P}(\hat{Y})$ is completely specified by four parameters, $\{\tilde{p}_{\hat{y},a}(\tilde{Y})=\bP(\tilde{Y}=1|\hat{Y}=\hat{y},A=a):\hat{y},a\in\{0,1\}\}$. With the above definition of $\c{H}_{\hat{Y},A}$, any such randomized derived predictor $\tilde{Y}\in \c{P}(\hat{Y})$  derived from $(\hat{Y},{A})$ is in the convex hull of $\c{H}_{\hat{Y},A}$.  This implies, $VC(\c{P}(\hat{Y}))=VC(\text{conv}(\c{H}_{\hat{Y},A}))=\log{16}$ which is a constant. 

Thus, for $\forall \tilde{Y}\in\c{P}(\hat{Y})$ estimated from Step $2$ in \eqref{eq:step2},  using the standard VC dimension uniform bound over $\c{P}(\hat{Y})$ \citep{bousquet2004introduction} along with \eqref{eq:conc2} we have the following, 
\begin{flalign}
\c{L}(\tilde{Y})\le_{\delta/4} \c{L}^{S_2}(\tilde{Y}) +C_1\sqrt{\frac{\log{1/\delta}}{n}}
\text{, and  }\Gamma(\tilde{Y})\le_{\delta/4} \tilde{\alpha}_n +C_2\max_{ya}\sqrt{\frac{\log{1/\delta}}{n\P_{ya}}}.
\label{eq:ubstep2_1}
\end{flalign}

\paragraph{Upper bound on $\c{L}^{S_2}(\tilde{Y})$: }
For any derived  $\tilde{Y}^*\in\c{P}(\hat{Y})$ that is $0$-discriminatory, using \eqref{eq:conc2} and the identical arguments as that of Lemma~\ref{lemma:Heps}, we have the following:
\begin{flalign*}
 \b{P}(\Gamma^{S_2}(\tilde{Y}^*)>\tilde{\alpha}_n)
\overset{(a)}\le \b{P}\Big(\Gamma^{S_2}(\tilde{Y}^*)>2\max_{ya}\sqrt{\frac{2\log{64/\delta}}{n\P_{ya}}}\Big)\overset{(b)}{\le} \frac{\delta}{4},
\end{flalign*}
where $(a)$ follows from the condition on $\tilde{\alpha}_n$ and $(b)$ from Lemma~\ref{lemma:bin_step2}. 

From the optimality of $\c{L}^{S_2}(\tilde{Y})$ and Lemma~\ref{lemma:bin_step2}, 
$\forall \tilde{Y}^*\in\c{Q}(\c{L}(\tilde{Y}^*),0)\cap \c{P}(\hat{Y})$ we have,
\begin{equation}
\c{L}^{S_2}(\tilde{Y})\le_{\delta/4} \c{L}^{S_2}(\tilde{Y}^*)\le_{\delta/4} \c{L}(\tilde{Y}^*)+C_1\sqrt{\frac{\log{8/\delta}}{n}}.
\label{eq:upstep2}
\end{equation}

\paragraph{Upper bound on $\c{L}(\tilde{Y}^*)$: }
 The rest of the proof involves obtaining an upper bound for $\c{L}(\tilde{Y}^*)$ using Lemma~\ref{steptwotopop}. 
 Recall from Lemma~\ref{steptwotopop} that if $h$ is an $\alpha$-discriminatory binary predictor, the optimum non-discriminatory derived predictor $\tilde{Y}^*(h)$ given by \eqref{eq:post-hocalgorithm} satisfies, $\tilde{Y}^*(h)\in\c{Q}(\c{L}(h)+\alpha,0)$.

 Thus,   for $h=\hat{Y}$, a derived non-discriminatory predictor $\tilde{Y}^*(\hat{Y})$ obtained from  \eqref{eq:post-hocalgorithm} satisfies, 
\begin{equation}
\begin{split}
\c{L}(\tilde{Y}^*(\hat{Y}))\overset{(a)}\le_\delta \c{L}(Y^*)+\alpha_n+C_3\epsa,
\end{split}
\label{eq:upstep2_3}
\end{equation}
 where in $(a)$ $Y^*\in\c{H}$ is any non-discriminatory predictor from the original hypothesis class $\c{H}$ and the inequality follows 
  from combining Lemma~\ref{lemma:BinaryStep1Guarantee} and Lemma~\ref{steptwotopop} as with probability atleast $1-\delta$,   $\hat{Y}$ is atmost $\alpha=\Gamma(\hat{Y}) \le\alpha_n+ C_2\epsa$ discriminatory.
  
  Combining  \eqref{eq:ubstep2_1}, \eqref{eq:upstep2}, and \eqref{eq:upstep2_3}, with probability atleast $1-2\delta$,
  \begin{equation}
  \begin{split}
  \c{L}(\tilde{Y})&\le\c{L}(Y^*)+\alpha_n+C_1\epsa,\text{ and }\\
  \Gamma(\tilde{Y})&\le\tilde{\alpha}_n +C_2\max_{ya}\sqrt{\frac{\log{1/\delta}}{n\P_{ya}}}
  \end{split}
  \end{equation}
  Theorem~\ref{theorem:UpperBounds} follows from appropriate 
choice of $\alpha_n,\tilde{\alpha}_n$  and rescaling $\delta$. 
  \end{proof}
\removed{Now for a stronger upper bound on $\c{L}(\tilde{Y}^*)$, we use the concentration result  on $|\gamma_{y0}-\gamma_{y1}|$ from \eqref{eq:gamma_conc} and Lemma~\ref{steptwotopop-a}.
From  \eqref{eq:gamma_conc}, we use the following concentration result  on $|\gamma_{y0}-\gamma_{y1}|$, 
\begin{equation}
\begin{split}
\forall y,&\big||\gamma_{y1}(h)-\gamma_{y0}(h)|-|\gamma^S_{y1}(h)-\gamma^S_{y0}(h)|\big|\le_{\delta/4} 2\max_{a}C_2^\prime \sqrt{\frac{VC(\c{H})+\log{1/\delta}}{n\P_{ya}}}\\
&\quad\quad\quad\overset{(a)}\le\min_{a^\prime} \left(\max_a C_2^\prime \sqrt{\frac{VC(\c{H})+\log{1/\delta}}{n\P_{ya}}}\right) \sqrt{\frac{\P_{ya^\prime}}{\P_{a^\prime}}}{\frac{\P_{a^\prime}}{\P_{ya^\prime}}}\\
&\quad\quad\quad\overset{(b)}\le\min_{a^\prime} \left(\max_a C_2^\prime \sqrt{\frac{VC(\c{H})+\log{1/\delta}}{n\P_{a}}} \right){\frac{\P_{a^\prime}}{\P_{ya^\prime}}},
\end{split}
\end{equation}
where $(a)$ follows as $\min_{a^\prime} \sqrt{\frac{\P_{a^\prime}}{\P_{ya^\prime}}}\ge 1$, and $(b)$ {\color{red} TODO}
}

\subsection{Proof of Theorem \ref{theorem:Step1LowerBound}}
Let the marginal distribution over $(A,Y)$ be given by $p = \min_{a,y} \mathbb{P}(A = a, Y = y)$. Since the definiton of fairness is invariant to re-labelling of $A,Y$, assume without loss of generality that $p$ corresponds to $A = 1, Y = 1$. For $\alpha \in (0,1/2)$, the distribution $\mathcal{D}$ over $(X,A,Y) \in \sset{0,1}^n \times \sset{0,1} \times \sset{0,1}$ is described by
\begin{equation}
\begin{aligned}
&\bP(X_1 = y\ |\ Y = y) &&= 1 - \alpha \\
&\bP(X_i = 0\ |\ Y = 0, A = 0) &&= 1 && \textrm{ for } i = 2,3,...,n \\
&\bP(X_i = 1\ |\ Y = 1, A = 0) &&= 1 && \textrm{ for } i = 2,3,...,n \\
&\bP(X_i = 0\ |\ Y = 0, A = 1) &&= 1 && \textrm{ for } i = 2,3,...,n \\
&\bP(X_i = 1\ |\ Y = 1, A = 1) &&= 1 - \alpha && \textrm{ for } i = 2,3,...,n \\
\end{aligned}
\end{equation}
Consider the following hypothesis class $\mathcal{H} = \sset{h_i}_{i=1}^n$ with $h_i(X,A) = X_i$.
The hypothesis $h_1$ has 0-1 loss $\c{L}_{01}(h_1) = \bP(X_1 \neq Y) = \alpha$ and is exactly non-discriminatory since $X_1 \perp A\ |\ Y$ by construction. For every other $i = 2,3,...,n$, the 0-1 loss of $h_i$ is the same: 
\begin{equation}
\c{L}_{01}(h_i) = \sum_{y}\sum_{a} \bP(X_i = 1 - y\ |\ Y = y, A = a)\bP(Y = y, A = a) = p\alpha
\end{equation}
however, for these hypotheses $\abs{\bP(h_i = 1 | Y = 1, A = 1) - \bP(h_i = 1 | Y = 1, A = 0)} = \alpha$ so $h_i$ is $\alpha$-discriminatory. 

We will now show that on a sample $S$ of size $m$, the empirical risk minimizer subject to an approximate non-discrimination constraint, $\hat{h}$, will be $h_i$ for $i \neq 1$ with probability $0.5$. Hence, the first step alone cannot assure with probability better than $0.5$ a classifier that is better than $\alpha$-discriminatory.

First, we note that the predictions of $h_i$ and $h_j$ are independent for $i \neq j$ since $X_i$ and $X_j$ are independent. Therefore, the number of errors made by each classifier $h_i$ on the sample $S$ are independent and
\begin{equation}
\begin{aligned}
\mathbb{P}(\mathcal{L}_{01}^S(h_1) = 0) = (1-\alpha)^m \qquad\qquad \bP(\c{L}_{01}^S(h_i) = 0) = (1-p\alpha)^m \quad\textrm{for } i = 2,3,...,n
\end{aligned}
\end{equation}
Since a classifier that makes zero errors on $S$ is automatically non-discriminatory on $S$, if $h_1$ makes at least $1$ mistake on $S$ and some $h_i$ does not make any errors, then $h_1$ will be the optimum of \eqref{eq:step1}. This event occurs with probability:
\begin{align}
\bP\left(\c{L}_{01}^S(h_1) > 0 \wedge \exists i\ \c{L}_{01}^S(h_i) = 0\right) &= \left(1 - (1-\alpha)^m\right)\left(1 - \bP\left(\forall i > 1\ \c{L}_{01}^S(h_i) > 0\right) \right) \\
&= \left(1 - (1-\alpha)^m\right)\left( 1 - \prod_{i=2}^n \left(1 - (1-p\alpha)^m\right) \right) \\
&= \left(1 - (1-\alpha)^m\right)\left(1 - \left(1 - (1-p\alpha)^m\right)^{n-1}\right)
\end{align}
From here, we use that
\begin{align}
\forall k \in \mathbb{N}\ \forall x \in [0,1]\quad (1-x)^k \leq \frac{1}{1+kx}
\end{align}
Thus, since $1-p\alpha,\alpha \in [0,1]$:
\begin{align}
\bP\left(\c{L}_{01}^S(h_1) > 0 \wedge \exists i\ \c{L}_{01}^S(h_i) = 0\right)
&= \left(1 - (1-\alpha)^m\right)\left(1 - \left(1 - (1-p\alpha)^m\right)^{n-1}\right) \\ 
&\geq \left( 1 - \frac{1}{1+m\alpha}\right)\left(1 - \frac{1}{1 + (n-1)(1-p\alpha)^m} \right) \\
&= \frac{m\alpha}{1+m\alpha} - \left( 1 - \frac{1}{1+m\alpha}\right)\frac{1}{1 + (n-1)(1-p\alpha)^m} \label{eq:step1lowerboundtwoterms}
\end{align}
This expression is is greater than $1/2$ if the first term is at least $2/3$ and the second term is at most $1/6$. Thus
\begin{equation}
\frac{m\alpha}{1+m\alpha} \geq \frac{2}{3} \iff \alpha \geq \frac{2}{m}
\end{equation}
and
\begin{equation}
\begin{aligned}
&\left( 1 - \frac{1}{1+m\alpha}\right)\frac{1}{1 + (n-1)(1-p\alpha)^m} \leq \frac{1}{6} \\
&\impliedby \frac{1}{1 + (n-1)(1-p\alpha)^m} \leq \frac{1}{6} \iff \log\frac{1}{1-p\alpha} \leq \frac{\log \frac{n-1}{5}}{m}
\end{aligned}
\end{equation}
Since $-\log(1-x) < \frac{x}{1-x}$ for $x \in (0,1)$, and $p \leq \frac{1}{4}$, the expression \eqref{eq:step1lowerboundtwoterms} is at least $1/2$ when 
\begin{equation}
\frac{2}{m} \leq \alpha \leq \frac{3\log \frac{n}{5}}{4pm}
\end{equation}

Therefore, when $\alpha = \frac{3\log \frac{n}{5}}{4pm}$, with probability $0.5$ $h_1$ has non-zero error on $S$ and a different predictor has zero error.
We conclude that there exists a distribution and hypothesis class such that with probability $0.5$, the hypothesis returned by the first step is $\frac{3\log \frac{n}{5}}{4pm}$-discriminatory.

\section{Proof of Theorem \ref{thm:hardness}} \label{appendix:hardness}
Let $\mathcal{A}$ be an algorithm that takes as inputs a hypothesis class $\mathcal{H}$, a distribution $\tilde{\mathcal{D}}$ over $(\tilde{X}, \tilde{A}, \tilde{Y})$ with $\tilde{A} \in \sset{0,1}$ and $\tilde{Y} \in \sset{-1,+1}$, an accuracy parameter $\epsilon > 0$, and a non-discrimination parameter $\alpha > 0$ and returns a predictor $f = \mathcal{A}(\tilde{\mathcal{D}}, \epsilon, \alpha)$ such that with probability $1-\zeta$
\begin{gather}
\mathcal{L}_{\tilde{\mathcal{D}}}^{\textrm{hinge}}\left(f\right) \leq \min_{h \in \mathcal{H}_{\textrm{0-disc}}(\tilde{\mathcal{D}})} \mathcal{L}_{\tilde{\mathcal{D}}}^{\textrm{hinge}}(h) + \epsilon \label{eq:definitionAlg}\\
\abs{\mathbb{P}_{\tilde{\mathcal{D}}}\left(f \geq 0\ \middle|\ \tilde{Y} = y, \tilde{A} = 0 \right) - \mathbb{P}_{\tilde{\mathcal{D}}}\left(f \geq 0\ \middle|\ \tilde{Y} = y, \tilde{A} = 1 \right)} \leq \alpha\qquad\textrm{for } y = -1,+1 \nonumber
\end{gather}
The possibly randomized predictor $f$ need not be in the hypothesis class $\mathcal{H}$, but it is being compared against the best predictor in $\mathcal{H}$ whose sign is non-discriminatory. 

We will show that such an algorithm can be used to improperly weakly learn \textsc{Halfspace} which, subject to the complexity assumption that refuting random K-XOR formulas is hard, was shown to be computationally hard by \cite{daniely2015complexity}. We conclude that $\mathcal{A}$ must be computationally hard to compute.

The \textsc{Halfspace} problem is to take a distribution $\mathcal{D}$ over $(X,Y)$ with $X \in \mathbb{R}^d$ and $Y \in \sset{-1,+1}$, and find the linear predictor 
\begin{equation} \label{eq:halfspacehstar} 
h^*(x) = \textrm{sign}({w^*}^Tx) \qquad\textrm{where}\qquad w^* = \argmin{w \in \mathbb{R}^d}\ \underset{x,y \sim \mathcal{D}}{\mathbb{E}}\left[ \textrm{sign}(w^Tx) \neq y \right]
\end{equation}
The proof of hardenss of the \textsc{Halfspace} problem was shown using a distribution over the unit hypercube in $d$-dimensions, thus we will assume that $\mathcal{D}$ is a bounded distribution. We assume access to the distribution $\mathcal{D}$, knowledge of $\mathcal{L}_{\mathcal{D}}^{01}(h^*)$, and for now, access to the joint distribution of $(h^*(X), Y)$:
\begin{equation}
\begin{aligned}
&\eta_{--} = \mathbb{P}_{\mathcal{D}}(h^*(X) = -1, Y = -1)
\qquad&&\eta_{-+} = \mathbb{P}_{\mathcal{D}}(h^*(X) = -1, Y = +1) \\
&\eta_{+-} = \mathbb{P}_{\mathcal{D}}(h^*(X) = +1, Y = -1) 
\qquad&&\eta_{++} = \mathbb{P}_{\mathcal{D}}(h^*(X) = +1, Y = +1)
\end{aligned}
\end{equation}
however, we will show later that it is not necessary to know the $\eta$'s. Since it is always possible to get 0-1 loss at most $\frac{1}{2}$ with a \textsc{Halfspace} predictor, we assume that $\eta_{++} + \eta_{--} \geq \eta_{+-} + \eta_{-+} = \mathcal{L}_{\mathcal{D}}^{01}(h^*)$. From the distribution $\mathcal{D}$ we construct a new distribution $\tilde{\mathcal{D}}$ over $(\tilde{X},\tilde{A},\tilde{Y})$ with $\tilde{X} \in \mathbb{R}^{d+1}$, $\tilde{A} \in \sset{0,1}$, and $\tilde{Y} \in \sset{-1,+1}$ in the following manner:
\begin{equation}
\begin{aligned}
&\mathbb{P}_{\tilde{\mathcal{D}}}(\tilde{A} = 0) &&= 1-\delta \\
&\mathbb{P}_{\tilde{\mathcal{D}}}(\tilde{A} = 1) &&= \delta \\
&\mathbb{P}_{\tilde{\mathcal{D}}}(\tilde{X} = -e_1, \tilde{Y} = -1\ |\ \tilde{A} = 0) &&= \eta_{--} \\
&\mathbb{P}_{\tilde{\mathcal{D}}}(\tilde{X} = -e_1, \tilde{Y} = +1\ |\ \tilde{A} = 0) &&= \eta_{-+} \\
&\mathbb{P}_{\tilde{\mathcal{D}}}(\tilde{X} = e_1, \tilde{Y} = -1\ |\ \tilde{A} = 0) &&= \eta_{+-} \\
&\mathbb{P}_{\tilde{\mathcal{D}}}(\tilde{X} = e_1, \tilde{Y} = +1\ |\ \tilde{A} = 0) &&= \eta_{++} \\
&\mathbb{P}_{\tilde{\mathcal{D}}}(\tilde{X} = [0, x], \tilde{Y} = y\ |\ \tilde{A} = 1) &&= \mathbb{P}_{\mathcal{D}}(X = x, Y = y)\qquad \forall x,y
\end{aligned}
\end{equation}
where $e_1$ is the first standard basis vector in $\mathbb{R}^{d+1}$. In other words, when $\tilde{A} = 1$ the distribution $\tilde{\mathcal{D}}$ is identical to $\mathcal{D}$ besides a zero appended to the beginning of $X$. When $\tilde{A} = 0$, $\tilde{\mathcal{D}}$ is supported on two points $-e_1$ and $e_1$. 

We will apply the algorithm $\mathcal{A}$ to the distribution $\tilde{\mathcal{D}}$ with parameters $\epsilon$ and $\alpha$ to be determined later. In this case $\mathcal{H}$ is the class of linear predictors so the hinge loss of $f$ on $\tilde{\mathcal{D}}$ must be competitve with the hinge loss of the best linear predictor whose sign is non-discriminatory.

Using the following lemmas, we show that the output of $\mathcal{A}$ must have small hinge loss on $\tilde{\mathcal{D}}$, that this output can then be modified so that it has small 0-1 loss on $\tilde{\mathcal{D}}$, and finally that it can be further modified to achieve small 0-1 loss on $\mathcal{D}$. The proofs are deferred to the end of this discussion.

\begin{lemma} \label{lem:hardnessexistsgoodhinge}
There exists a linear predictor $h$ whose sign is $0$-discrminatory such that 
\[ \mathcal{L}_{\tilde{\mathcal{D}}}^{\textrm{hinge}}(h) = 2(1-\delta)\mathcal{L}_{\mathcal{D}}^{01}(h^*) + 2\delta \]
\end{lemma}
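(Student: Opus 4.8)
The plan is to exhibit one explicit linear predictor on $\tilde{\mathcal{D}}$ whose sign is $0$-discriminatory and whose hinge loss can be computed exactly. The key structural fact about $\tilde{\mathcal{D}}$ is that the first coordinate carries exactly the information $h^*(X)$ on the $\tilde A=0$ population (it equals $\pm e_1$ according to whether $h^*(X)=\pm 1$), while on the $\tilde A=1$ population the first coordinate is identically zero and the remaining $d$ coordinates are $X$. So I would take the predictor whose two pieces decouple: writing $\tilde x=(\tilde x_0,\tilde x_{1:d})\in\mathbb{R}^{d+1}$, set $h(\tilde x)=\tilde x_0+t\,\langle w^*,\tilde x_{1:d}\rangle$ for a scalar $t>0$ to be chosen at the end. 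On the support of $\tilde A=0$ the $w^*$-term vanishes (since $\tilde x_{1:d}=0$ on $\pm e_1$), and on the support of $\tilde A=1$ the $\tilde x_0$-term vanishes.

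First I would check that $\mathrm{sign}(h)$ is $0$-discriminatory. On $\tilde A=0$, $h(\pm e_1)=\pm 1$, so $\{h(\tilde X)\ge 0\}=\{\tilde X=e_1\}$, which by the definition of $\tilde{\mathcal{D}}$ has conditional probability $\eta_{+y}/(\eta_{+y}+\eta_{-y})=\mathbb{P}_{\mathcal{D}}(h^*(X)=+1\mid Y=y)$ given $\tilde Y=y$. On $\tilde A=1$, $h([0,x])=t\langle w^*,x\rangle$, so $\{h(\tilde X)\ge 0\}=\{\langle w^*,X\rangle\ge 0\}=\{h^*(X)=+1\}$ (we may take $w^*$ so that $\mathcal{D}$ has no mass on $\{\langle w^*,x\rangle=0\}$, or note the tie-breaking is harmless for the construction), which again has conditional probability $\mathbb{P}_{\mathcal{D}}(h^*(X)=+1\mid Y=y)$ given $\tilde Y=y$. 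The two conditional acceptance rates agree for $y=\pm 1$, for every $t>0$.

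Next I would compute $\mathcal{L}_{\tilde{\mathcal{D}}}^{\textrm{hinge}}(h)$ by conditioning on $\tilde A$. On $\tilde A=0$ the four atoms $(e_1,+1),(e_1,-1),(-e_1,+1),(-e_1,-1)$ have signed margins $+1,-1,-1,+1$ against $h$, hence hinge losses $0,2,2,0$, so the conditional hinge loss is $2(\eta_{+-}+\eta_{-+})=2\,\mathcal{L}_{\mathcal{D}}^{01}(h^*)$. On $\tilde A=1$ the conditional hinge loss is $g(t):=\mathbb{E}_{(X,Y)\sim\mathcal{D}}\big[\max(0,\,1-t\,Y\langle w^*,X\rangle)\big]$. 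The function $g$ is convex, and continuous because $\mathcal{D}$ is bounded; moreover $g(0)=1$ and $g(t)\to\infty$ as $t\to\infty$ whenever $\mathbb{P}_{\mathcal{D}}(Y\langle w^*,X\rangle<0)>0$, i.e.\ whenever $\mathcal{L}_{\mathcal{D}}^{01}(h^*)>0$ (the only regime relevant to the reduction). By the intermediate value theorem there is $t^*>0$ with $g(t^*)=2$; fixing $t=t^*$ gives
\[
\mathcal{L}_{\tilde{\mathcal{D}}}^{\textrm{hinge}}(h)=(1-\delta)\cdot 2\,\mathcal{L}_{\mathcal{D}}^{01}(h^*)+\delta\cdot 2=2(1-\delta)\mathcal{L}_{\mathcal{D}}^{01}(h^*)+2\delta .
\]

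The only delicate point is this last calibration: getting the $\tilde A=1$ contribution to be \emph{exactly} $2$ rather than merely at most $2$. This is precisely why we keep the scale of $w^*$ free and invoke continuity of $g$, and it is also why the degenerate case $\mathcal{L}_{\mathcal{D}}^{01}(h^*)=0$—which lies outside the hard regime for \textsc{Halfspace}—must be excluded or handled separately. Everything else is a direct, short computation once the decoupled predictor is written down.
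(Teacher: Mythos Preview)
Your construction is correct and uses essentially the same linear predictor as the paper: $h(\tilde x)=\tilde x_0+c\,\langle w^*,\tilde x_{1:d}\rangle$, with the non-discrimination verification carried out the same way (matching $\mathbb{P}_{\mathcal D}(h^*(X)=+1\mid Y=y)$ on both $\tilde A$-groups). The one substantive difference is how the $\tilde A=1$ contribution is handled. The paper simply fixes $c=1/(\|w^*\|_2\sqrt d)$ so that $h\in[-1,1]$ on the $\tilde A=1$ support and then bounds the conditional hinge loss there trivially by $2$, obtaining only the inequality $\mathcal L_{\tilde{\mathcal D}}^{\textrm{hinge}}(h)\le 2(1-\delta)\mathcal L_{\mathcal D}^{01}(h^*)+2\delta$, which is all the downstream reduction needs. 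You instead leave the scale $t$ free and invoke the intermediate value theorem on $g(t)$ to hit the value $2$ exactly, thereby matching the lemma's stated \emph{equality}; this is a bit more work and requires the non-degeneracy $\mathcal L_{\mathcal D}^{01}(h^*)>0$ that you flag. Both arguments are valid; yours is more faithful to the lemma as written, while the paper's is shorter and sufficient for the application.
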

\noindent By Lemma \ref{lem:hardnessexistsgoodhinge} and the defintion of $f$ from \eqref{eq:definitionAlg},
\begin{equation}
\mathcal{L}_{\tilde{\mathcal{D}}}^{\textrm{hinge}}\left(f\right) \leq \min_{h \in \mathcal{H}_{\textrm{0-disc}}(\tilde{\mathcal{D}})} \mathcal{L}_{\tilde{\mathcal{D}}}^{\textrm{hinge}}(h) + \epsilon \leq 2(1-\delta)\mathcal{L}_{\mathcal{D}}^{01}(h^*) + 2\delta + \epsilon
\end{equation}
and the sign of $f$ is $\alpha$-discriminatory. Next,
\begin{lemma} \label{lem:hardnessfixf}
The predictor $f$ can be efficiently modified to yield a new predictor $f'$ whose sign is
is $\alpha$-discriminatory such that 
\[ \mathcal{L}_{\tilde{\mathcal{D}}}^{01}(f') \leq (1-\delta)\mathcal{L}_{\mathcal{D}}^{01}(h^*) + 2\delta + \epsilon \]
\end{lemma}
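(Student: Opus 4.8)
The plan is to leave $f$ unchanged on the region $\{\tilde{A}=1\}$ and to overwrite it on $\{\tilde{A}=0\}$ by the Bayes-optimal 0-1 predictor of the conditional law there. Conditioned on $\tilde{A}=0$, $\tilde{\mathcal{D}}$ is supported on the two points $-e_1$ and $e_1$, with $\mathbb{P}_{\tilde{\mathcal{D}}}(\tilde{X}=e_1,\tilde{Y}=+1\mid\tilde{A}=0)=\eta_{++}$ and the analogous identities for the other three atoms. I would let $g^{\star}$ be the Bayes predictor on $\{-e_1,e_1\}$ --- it predicts $+1$ at $e_1$ iff $\eta_{++}\ge\eta_{+-}$ and $+1$ at $-e_1$ iff $\eta_{-+}\ge\eta_{--}$ --- and set $f':=g^{\star}$ on $\{\tilde{A}=0\}$ and $f':=f$ on $\{\tilde{A}=1\}$. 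This modification is efficient.

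For the loss bound I would decompose $\mathcal{L}_{\tilde{\mathcal{D}}}^{01}(f')$ on the value of $\tilde{A}$. The conditional 0-1 loss of $g^{\star}$ given $\tilde{A}=0$ is the Bayes loss $\min(\eta_{++},\eta_{+-})+\min(\eta_{--},\eta_{-+})\le\eta_{+-}+\eta_{-+}=\mathcal{L}_{\mathcal{D}}^{01}(h^*)$, and the conditional loss given $\tilde{A}=1$ is at most $1$. Since $\mathbb{P}_{\tilde{\mathcal{D}}}(\tilde{A}=1)=\delta$, this already gives $\mathcal{L}_{\tilde{\mathcal{D}}}^{01}(f')\le(1-\delta)\mathcal{L}_{\mathcal{D}}^{01}(h^*)+\delta\le(1-\delta)\mathcal{L}_{\mathcal{D}}^{01}(h^*)+2\delta+\epsilon$.

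The substantive part is checking that $\mathrm{sign}(f')$ is still $\alpha$-discriminatory. Since $f'=f$ on $\{\tilde{A}=1\}$, the quantity $\beta_y:=\mathbb{P}(f'\ge0\mid\tilde{Y}=y,\tilde{A}=1)$ is unchanged from $f$; on $\{\tilde{A}=0\}$ we have $\mathbb{P}(f'\ge0\mid\tilde{Y}=y,\tilde{A}=0)=p^{\star}_y$, the sign statistics of $g^{\star}$. So I need $|p^{\star}_y-\beta_y|\le\alpha$ for $y=\pm1$. Because $\mathrm{sign}(f)$ is $\alpha$-discriminatory I already have $|\bar p_y-\beta_y|\le\alpha$, where $\bar p_y:=\mathbb{P}(f\ge0\mid\tilde{Y}=y,\tilde{A}=0)$, so it suffices to argue that the \emph{original} $f$ already classifies the two atoms exactly the way $g^{\star}$ does, i.e.\ $\bar p_y=p^{\star}_y$. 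This is where the hinge guarantee is spent: by Lemma~\ref{lem:hardnessexistsgoodhinge} and non-negativity of the conditional hinge loss on $\{\tilde{A}=1\}$, the conditional hinge loss of $f$ given $\tilde{A}=0$ is at most $2\mathcal{L}_{\mathcal{D}}^{01}(h^*)+(2\delta+\epsilon)/(1-\delta)$, which in the only regime the outer reduction invokes it --- $\mathcal{L}_{\mathcal{D}}^{01}(h^*)$ well below $1/2$, in fact $<1/10$ --- is small in absolute terms; since the per-atom hinge loss is piecewise linear in the prediction value with slope of magnitude $|\eta_{++}-\eta_{+-}|$ at $e_1$ and $|\eta_{-+}-\eta_{--}|$ at $-e_1$, and its minimum equals twice the Bayes loss, a predictor whose hinge loss is this close to optimal must put essentially all of its mass on the Bayes-optimal side at each atom, so $\bar p_y$ and $p^{\star}_y$ agree up to a quantity controlled by $(2\delta+\epsilon)/(1-\delta)$.

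I expect this last step to be the main obstacle: making near-optimality of $f$'s hinge loss on the two atoms imply, quantitatively and uniformly over all admissible $(\eta_{++},\eta_{+-},\eta_{-+},\eta_{--})$, that $\mathrm{sign}(f)$ agrees with $g^{\star}$ at both atoms. Since $\eta_{++}+\eta_{--}=1-\mathcal{L}_{\mathcal{D}}^{01}(h^*)$ is near $1$, at least one of $\eta_{++},\eta_{--}$ is bounded away from $0$; I would split into cases according to which of the four weights are bounded below and, in each, use the definite slope of the per-atom hinge loss to lower-bound the hinge cost of a wrong-side choice, then pick $\delta$ a small enough absolute constant (compatible with $\epsilon,\alpha<1/8$) that the available excess-hinge budget cannot pay for such a choice. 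One would also track here the small gap between the discrimination level one can literally guarantee and the $\alpha$ of the statement, absorbing it into the slack that the subsequent steps of the reduction leave. The construction and the loss bound themselves are routine.
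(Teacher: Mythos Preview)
Your construction has a genuine gap in the non-discrimination step, and the hinge-loss argument you sketch cannot close it. Concretely, take $\eta_{++}=0.8$, $\eta_{+-}=0.1$, $\eta_{-+}=0$, $\eta_{--}=0.1$ (so $\mathcal{L}_{\mathcal{D}}^{01}(h^*)=0.1$) and $\delta=1/16$, $\epsilon=1/8$. The predictor $f\equiv 1$ is $0$-discriminatory and has total hinge loss $0.4$, which is below the bound $2(1-\delta)\mathcal{L}_{\mathcal{D}}^{01}(h^*)+2\delta+\epsilon=0.4375$ from Lemma~\ref{lem:hardnessexistsgoodhinge}; so $f\equiv 1$ is a legitimate output of $\mathcal{A}$. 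Your Bayes rule $g^\star$ has $g^\star(e_1)=+1$ but $g^\star(-e_1)=-1$, so after your modification $p^\star_{-1}=\mathbb{P}(g^\star\ge 0\mid\tilde Y=-1,\tilde A=0)=0.5$ while $\beta_{-1}=1$, giving $|p^\star_{-1}-\beta_{-1}|=0.5>\alpha$. The underlying issue is that the per-atom hinge slope you invoke is $|\eta_{-+}-\eta_{--}|$, which can be arbitrarily small (here $0.1$) even when the \emph{conditional} sign statistics $p^\star_y$ and $\bar p_y$ differ by a constant; the excess-hinge budget you have is never small enough to force agreement uniformly over the $\eta$'s.

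The paper avoids this entirely by a different idea: it does \emph{not} change the sign distribution of $f$ on $\{\tilde A=0\}$ at all. Instead it replaces $f$ there by a randomized predictor taking values in $\{-1,0,-0,+1\}$ with the \emph{same} sign probabilities as $f$, so $\alpha$-discrimination is preserved for free. The point of pushing the magnitudes to $\{0,1\}$ is that the gap between hinge and 0-1 loss on $\{\tilde A=0\}$ then equals exactly $\eta_{-+}+\eta_{+-}$, which after subtracting from the hinge bound of Lemma~\ref{lem:hardnessexistsgoodhinge} yields the claimed 0-1 bound. In short: the hinge guarantee is spent on the \emph{loss} side (via a hinge--versus--0-1 gap calculation), not on the \emph{non-discrimination} side as you attempt.
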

Finally,
\begin{lemma} \label{lem:01lossessimilar}
The predictor $f''(x) = f'([0,x],1)$ achieves $\c{L}_{\c{D}}^{01}(f'') \leq \c{L}_{\tilde{\c{D}}}^{01}(f') + \alpha(1-\delta)$.
\end{lemma}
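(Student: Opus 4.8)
The plan is to compare $f''$ on $\c{D}$ against the behaviour of $f'$ on the sub-population $\tilde A = 1$ of $\tilde{\c{D}}$. By construction, conditioned on $\tilde A = 1$ the distribution $\tilde{\c{D}}$ is exactly $\c{D}$ with a zero coordinate prepended to $X$, and $f''(x) = f'([0,x],1)$ is defined precisely to replay $f'$ on that conditional law; hence $\c{L}_{\c{D}}^{01}(f'') = \c{L}_{\tilde{\c{D}}}^{01}(f' \mid \tilde A = 1)$. Writing $\c{L}_a := \c{L}_{\tilde{\c{D}}}^{01}(f' \mid \tilde A = a)$ and recalling $\bP_{\tilde{\c{D}}}(\tilde A = 1) = \delta$, the law of total probability gives $\c{L}_{\tilde{\c{D}}}^{01}(f') = (1-\delta)\c{L}_0 + \delta \c{L}_1$. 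So the lemma follows once we establish $\c{L}_1 \le \c{L}_0 + \alpha$: then $\c{L}_{\c{D}}^{01}(f'') = \c{L}_1 = (1-\delta)\c{L}_1 + \delta \c{L}_1 \le (1-\delta)(\c{L}_0 + \alpha) + \delta \c{L}_1 = \c{L}_{\tilde{\c{D}}}^{01}(f') + \alpha(1-\delta)$.

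To bound $\c{L}_1 - \c{L}_0$, introduce the group-conditional positive rates $\gamma_{ya} := \bP_{\tilde{\c{D}}}(f' \ge 0 \mid \tilde Y = y, \tilde A = a)$ of $f'$ (with expectations taken over any internal randomness of $f'$ as well); by Lemma~\ref{lem:hardnessfixf} these satisfy $\abs{\gamma_{y0} - \gamma_{y1}} \le \alpha$ for $y = \pm 1$. The key observation is that the conditional marginal of $\tilde Y$ is the same in both groups: for $\tilde A = 1$ this is immediate from $\bP_{\tilde{\c{D}}}(\tilde X = [0,x], \tilde Y = y \mid \tilde A = 1) = \bP_{\c{D}}(X = x, Y = y)$, and for $\tilde A = 0$ it follows from the definition of the $\eta$'s, e.g.\ $\bP_{\tilde{\c{D}}}(\tilde Y = -1 \mid \tilde A = 0) = \eta_{--} + \eta_{+-} = \bP_{\c{D}}(Y = -1)$. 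Writing $q_y := \bP_{\c{D}}(Y = y)$, the conditional $0$-$1$ loss is therefore $\c{L}_a = q_{+1}(1 - \gamma_{+1,a}) + q_{-1}\gamma_{-1,a}$ for each $a$, so that
\[ \c{L}_1 - \c{L}_0 = q_{+1}(\gamma_{+1,0} - \gamma_{+1,1}) + q_{-1}(\gamma_{-1,1} - \gamma_{-1,0}) \le (q_{+1} + q_{-1})\,\alpha = \alpha, \]
which is exactly the inequality needed above, and combining with the first paragraph finishes the proof.

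There is no deep obstacle here: the argument is essentially bookkeeping. The one point that must be handled with care is verifying that the conditional law of $\tilde Y$ does not depend on $\tilde A$ — this is precisely what lets the $\alpha$-discrimination guarantee (a statement about positive rates) be converted into a bound on the difference of $0$-$1$ losses — together with being consistent about the $f' \ge 0$ thresholding convention and allowing $f'$, hence $f''$, to be randomized.
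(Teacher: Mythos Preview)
Your proof is correct and follows essentially the same approach as the paper: both use that the conditional marginal of $\tilde Y$ given $\tilde A$ does not depend on $\tilde A$ (equal to the $Y$-marginal under $\c{D}$), combine this with the $\alpha$-discrimination bound on the group-conditional positive rates, and conclude by the law of total probability. Your organization via the intermediate inequality $\c{L}_1 \le \c{L}_0 + \alpha$ is slightly cleaner than the paper's direct expansion of $\c{L}_{\tilde{\c{D}}}^{01}(f')$, but the underlying computation is identical.
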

The predictor $f''$ described in Lemma \ref{lem:01lossessimilar} thus has 0-1 loss on $\mathcal{D}$
\begin{equation} \label{eq:hardness01loss}
\mathcal{L}_{\mathcal{D}}^{01}(f'') \leq (1-\delta)(\mathcal{L}_{\mathcal{D}}^{01}(h^*) + \alpha) + 2\delta + \epsilon
\end{equation}

Theorem 1.3 from \cite{daniely2015complexity} proves that there is no algorithm running in time polynomial in the dimension $d$ that can return a predictor achieving 0-1 error $\leq \frac{1}{2} - d^{-c}$ with high probability for a constant $c > 0$ for an arbitrary distribution, even with the knowledge that $\mathcal{L}_{\mathcal{D}}^{01}(h^*) \leq L^*$ for $L^* < 1/2$.
Thus, $\mathcal{A}(\tilde{D},\epsilon,\alpha)$ cannot run in time polynomial in the dimension $d$ for any parameters $\mathcal{L}_{\mathcal{D}}^{01}(h^*)$, $\epsilon$, $\alpha$, and $\delta$ such that \eqref{eq:hardness01loss} is greater than $\frac{1}{2} - d^{-c}$ for any $c > 0$. 
For $\epsilon,\alpha < \frac{1}{8}$, and setting $\delta = \frac{1}{16}$, \eqref{eq:hardness01loss} shows that
\begin{equation}
\mathcal{L}_{\mathcal{D}}^{01}(f'') \leq \frac{15}{16}\mathcal{L}_{\mathcal{D}}^{01}(h^*) + \frac{47}{128}
\end{equation}
For any $L^* < \frac{1}{10}$ this is at most $\frac{1}{2} - \frac{1}{40}$ and does not depend on the dimension. 

In this proof we assumed knowledge of the parameters $\eta$ which describe the conditional error rates of $h^*$. If a polynomial time algorithm for $\mathcal{A}$ existed, then it would be possible to perform two-dimensional grid search over the $\eta$. Calls made to $\mathcal{A}$ with the incorrect values of $\eta$ might result in very inaccurate or discriminatory predictors, but using an estimate of $\eta$ up to $\mathcal{O}(\alpha)$ accuracy is sufficient to approximate the \textsc{Halfspaces} solution using $\mathcal{A}$. Thus at most $\mathcal{O}(\log^2(1/\alpha))$ calls to the polynomial time algorithm would be needed. Therefore, in order for $\mathcal{A}$ to guarantee for an arbitrary $\tilde{\mathcal{D}}$ that its output would have excess hinge loss at most $\frac{1}{8}$ and its sign would be at most $\frac{1}{8}$-discriminatory, it must run in time super-polynomial in the dimension in the worst case.

\subsection{Deferred proofs}
\begin{proof}[Proof of Lemma \ref{lem:hardnessexistsgoodhinge}]
Define $h(X,A) = \textrm{sign}\left(\begin{bmatrix} 1 \\ w^* \end{bmatrix}^TX \right)$. Then
$h(-e_1,0) = -1$, $h(e_1,0) = 1$, and $h([0,x],1) = h^*(x)$, where $h^*$ is as defined in \eqref{eq:halfspacehstar}. Since the sign function is invariant to scaling, $\mathcal{L}_{\mathcal{D}}^{01}(h^*) = \mathcal{L}_{\mathcal{D}}^{01}(ch^*)$ for any $c > 0$. 

Theorem 1.3 in \cite{daniely2015complexity} involves a distribution $\mathcal{D}$ that is supported on the unit hypercube in $\mathbb{R}^d$, thus the predictor $\frac{h^*}{\|w^*\|_2\sqrt{d}} \in [-1,1]$ with probability 1, and has the same 0-1 loss as $h^*$.

By the definition of $\tilde{\mathcal{D}}$, $\eta_{+-}$, and $\eta_{-+}$:
\begin{equation}
\begin{aligned}
\mathbb{P}_{\tilde{\mathcal{D}}}(h \geq 0\ |\ \tilde{Y} = -1, \tilde{A} = 0 ) 
&= \frac{\eta_{+-}}{\mathbb{P}_{\tilde{\mathcal{D}}}(\tilde{Y} = -1\ |\ \tilde{A} = 0)} \\
&= \frac{\eta_{+-}}{\mathbb{P}_{\mathcal{D}}(Y = -1)} \\
&= \mathbb{P}_{\mathcal{D}}(h^* \geq 0\ |\ Y = -1) \\
&= \mathbb{P}_{\tilde{\mathcal{D}}}(h\phantom{^*} \geq 0\ |\ \tilde{Y} = -1, \tilde{A} = 1 )
\end{aligned}
\end{equation}
\begin{equation}
\begin{aligned}
\mathbb{P}_{\tilde{\mathcal{D}}}(h < 0\ |\ \tilde{Y} = 1, \tilde{A} = 0 ) 
&= \frac{\eta_{-+}}{\mathbb{P}_{\tilde{\mathcal{D}}}(\tilde{Y} = 1\ |\ \tilde{A} = 0)} \\
&= \frac{\eta_{-+}}{\mathbb{P}_{\mathcal{D}}(Y = 1)} \\
&= \mathbb{P}_{\mathcal{D}}(h^* < 0\ |\ Y = 1) \\
&= \mathbb{P}_{\tilde{\mathcal{D}}}(h\phantom{^*} < 0\ |\ \tilde{Y} = 1, \tilde{A} = 1 )
\end{aligned}
\end{equation}
Therefore, $h$ is $0$-discriminatory at threshold $0$. Also
\begin{align}
\mathcal{L}_{\tilde{\mathcal{D}}}^{\textrm{hinge}}(h)\
=\qquad &[1 + h(x_1)]_+\mathbb{P}_{\tilde{\mathcal{D}}}(\tilde{X} = x_1, \tilde{A} = 0, \tilde{Y} = -1) \\
+&[1 - h(x_0)]_+\mathbb{P}_{\tilde{\mathcal{D}}}(\tilde{X} = x_0, \tilde{A} = 0, \tilde{Y} = \phantom{-}1) \nonumber\\
+&\int_X[1 + h(x)]_+\mathbb{P}_{\tilde{\mathcal{D}}}(\tilde{X} = x, \tilde{A} = 1, \tilde{Y} = -1)dx \nonumber\\
+&\int_X[1 - h(x)]_+\mathbb{P}_{\tilde{\mathcal{D}}}(\tilde{X} = x, \tilde{A} = 1, \tilde{Y} = \phantom{-}1)dx \nonumber\\
\leq\qquad &2\eta_{+-}\mathbb{P}_{\tilde{\mathcal{D}}}(\tilde{A} = 0) + 2\eta_{-+}\mathbb{P}_{\tilde{\mathcal{D}}}(\tilde{A} = 0) \\
+& 2\int_X\left( \mathbb{P}_{\tilde{\mathcal{D}}}(\tilde{X} = x, \tilde{A} = 1, \tilde{Y} = -1) + \mathbb{P}_{\tilde{\mathcal{D}}}(\tilde{X} = x, \tilde{A} = 1, \tilde{Y} = 1) \right)dx \nonumber\\
=\qquad &2\mathbb{P}_{\tilde{\mathcal{D}}}(\tilde{A} = 0)(\eta_{+-} + \eta_{-+}) + 2\mathbb{P}_{\tilde{\mathcal{D}}}(\tilde{A} = 1) \\
=\qquad &2(1-\delta)(\eta_{-+} + \eta_{+-}) + 2\delta
\end{align}
\end{proof}

\begin{proof}[Proof of Lemma \ref{lem:hardnessfixf}]
Since only the sign of $f$ is required to be $\alpha$-discriminatory, we can modify the magnitude of its predictions without affecting its level of non-discrimination. Therefore, we first truncate the output of $f$ to lie in the range $[-1,1]$, which can only reduce the hinge loss. 

Ignoring for the moment that the sign of $f$ must be $\alpha$-discriminatory, we would like to define $f'$ so that $f'(-e_1,0) = -1$ and $f'(e_1,0) = 1$ with probability 1. In this case, the hinge loss when $\tilde{A} = 0$ is exactly $2(\eta_{+-} + \eta_{-+})$. With that being said, any modification to $f$ that changes the distribution of the \emph{sign} of the predictor risks rendering it more than $\alpha$-discriminatory. With this in mind, we construct $f'$ such that
\begin{equation}
\begin{aligned}
f'(-e_1,0) &= \begin{cases} -1 & \textrm{w.p. } \mathbb{P}(f(-e_1,0) < 0) \\ 0 & \textrm{w.p. } \mathbb{P}(f(-e_1,0) \geq 0) \end{cases}\\
f'(e_1,0) &= \begin{cases} 1 & \textrm{w.p. } \mathbb{P}(f(e_1,0) \geq 0) \\ -0 & \textrm{w.p. } \mathbb{P}(f(e_1,0) < 0) \end{cases} \\
f'([0,x],1) &= f([0,x],1)
\end{aligned}
\end{equation}
where $-0$ is a negative number of arbitrarily small magnitude. Constructed this way, the distribution of the sign of $f'$ conditioned on $\tilde{A}$ is identical to that of $f$, meaning that the sign of $f'$ is $\alpha$-discriminatory.

The hinge loss of $f'$ is an upper bound on the 0-1 loss, and in order to show that $f'$ achieves small 0-1 loss, we will show that the hinge loss is a \emph{loose} upper bound. The construction of $\tilde{\mathcal{D}}$ and the predictions of $f'$ conditioned on $\tilde{A} = 0$ creates a substantial gap between the losses.
\includegraphics[width=3in]{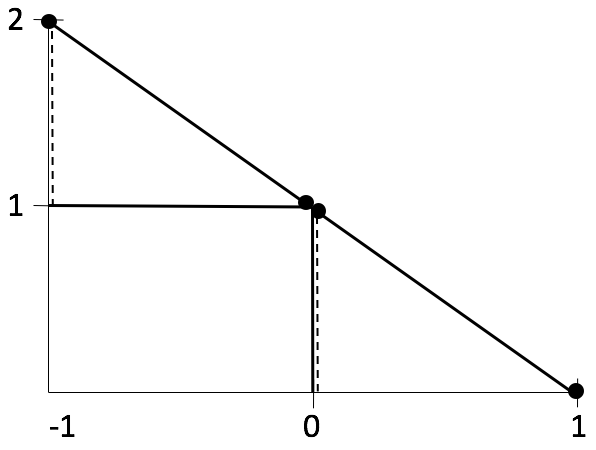}

Notice that when $f'$ makes a prediction of magnitude 1 that has the correct sign, both the hinge loss and the 0-1 loss evaluate to $0$. Similarly, when $f'$ makes a prediction of magnitude 0 with the incorrect sign, both losses are $1$. Thus in each of these cases, the hinge loss is equivalent to the 0-1 loss.

However, if $f'$ makes a prediction of magnitude 1 with the incorrect sign, the hinge loss is $2$ but the 0-1 loss is only $1$, and when $f'$ makes a prediction of magnitude 0 with the correct sign, the hinge loss is $1$ but the 0-1 loss is $0$. Consequently, in each of these cases there is a gap of $1$ between the hinge and 0-1 losses. Thus,
\begin{align} \label{eq:expectedgap}
\mathbb{E}\left[ \ell^{\textrm{hinge}}(f') - \ell^{01}(f') \, \middle|\, \tilde{A} = 0 \right] 
&= \mathbb{P}\left( |f'| = 1, \textrm{sign}(f') \neq \tilde{Y} \, \middle|\, \tilde{A} = 0 \right) \\
&+ \mathbb{P}\left( |f'| = 0, \textrm{sign}(f') = \tilde{Y} \, \middle|\, \tilde{A} = 0 \right) \nonumber
\end{align}
Considering each term separately:
\begin{align}
\mathbb{P}&\left( |f'| = 1, \textrm{sign}(f') \neq \tilde{Y} \ \middle|\ \tilde{A} = 0 \right) \nonumber\\
=&\ \mathbb{P}\left( f'(-e_1,0) = -1, \tilde{Y} = 1 \ \middle|\ \tilde{X} = -e_1, \tilde{A} = 0 \right)\mathbb{P}\left( \tilde{X} = -e_1 \ \middle|\ \tilde{A} = 0 \right) \\
&+ \mathbb{P}\left( f'(e_1,0) = 1, \tilde{Y} = -1 \ \middle|\ \tilde{X} = e_1, \tilde{A} = 0 \right)\mathbb{P}\left( \tilde{X} = e_1 \ \middle|\ \tilde{A} = 0 \right) \nonumber\\
=&\ \mathbb{P}\left( f'(-e_1,0) = -1 \right)\mathbb{P}\left( \tilde{X} = -e_1, \tilde{Y} = 1 \ \middle|\ \tilde{A} = 0 \right) \label{eq:predsindepY}\\
&+ \mathbb{P}\left( f'(e_1,0) = 1 \right)\mathbb{P}\left( \tilde{X} = e_1, \tilde{Y} = -1 \ \middle|\ \tilde{A} = 0 \right) \nonumber\\
=&\ \mathbb{P}\left( f(-e_1,0) < 0 \right)\eta_{-+}
+ \mathbb{P}\left( f(e_1,0) \geq 0 \right)\eta_{+-} \label{eq:expectedgap1}
\end{align}
With \eqref{eq:predsindepY} following from the fact that conditioned on $\tilde{X}$ and $\tilde{A} = 0$, $f'(\tilde{X},0)$ is a random variable that is independent of the value of $\tilde{Y}$. Similarly,
\begin{align}
&\mathbb{P}\left( |f'(\tilde{X},0)| = 0, \textrm{sign}(f'(\tilde{X},0)) = \tilde{Y} \ \middle|\ \tilde{A} = 0 \right) \nonumber\\
=&\ \mathbb{P}\left( f'(-e_1,0) = 0, \tilde{Y} = 1 \ \middle|\ \tilde{X} = -e_1, \tilde{A} = 0 \right)\mathbb{P}\left( \tilde{X} = -e_1 \ \middle|\ \tilde{A} = 0 \right) \\
&+ \mathbb{P}\left( f'(e_1,0) = -0, \tilde{Y} = -1 \ \middle|\ \tilde{X} = e_1, \tilde{A} = 0 \right)\mathbb{P}\left( \tilde{X} = e_1 \ \middle|\ \tilde{A} = 0 \right) \nonumber\\
=&\ \mathbb{P}\left( f'(-e_1,0) = 0 \right)\mathbb{P}\left( \tilde{X} = -e_1, \tilde{Y} = 1 \ \middle|\ \tilde{A} = 0 \right) \\
&+ \mathbb{P}\left( f'(e_1,0) = -0 \right)\mathbb{P}\left( \tilde{X} = e_1, \tilde{Y} = -1 \ \middle|\ \tilde{A} = 0 \right) \nonumber\\
=&\ \mathbb{P}\left( f(-e_1,0) \geq 0 \right)\eta_{-+} 
+ \mathbb{P}\left( f(e_1,0) < 0 \right)\eta_{+-}\label{eq:expectedgap2}
\end{align}
Combining \eqref{eq:expectedgap} with \eqref{eq:expectedgap1} and \eqref{eq:expectedgap2}, we see that
\begin{align}
\mathbb{E}\left[ \ell^{\textrm{hinge}}(f') - \ell^{01}(f') \ \middle|\ \tilde{A} = 0 \right] 
=&\ \mathbb{P}\left( f(-e_1,0) < 0 \right)\eta_{-+} + \mathbb{P}\left( f(e_1,0) \geq 0 \right)\eta_{+-} \\
&+ \mathbb{P}\left( f(-e_1,0) \geq 0 \right)\eta_{-+} + \mathbb{P}\left( f(e_1,0) < 0 \right)\eta_{+-} \nonumber\\
=&\ \eta_{-+} + \eta_{+-} \label{eq:thegap}
\end{align}
The 0-1 loss of $f'$ can be decomposed as
\begin{equation}
\mathcal{L}_{\tilde{\mathcal{D}}}^{\textrm{01}}(f') = \mathbb{P}(\tilde{A} = 0)\mathbb{E}\left[ \ell^{\textrm{01}}(f)\ \middle|\ \tilde{A} = 0 \right] + \mathbb{P}(\tilde{A} = 1)\mathbb{E}\left[ \ell^{\textrm{01}}(f)\ \middle|\ \tilde{A} = 1 \right]
\end{equation}
By \eqref{eq:thegap},
\begin{equation}
\mathbb{P}(\tilde{A} = 0)\mathbb{E}\left[ \ell^{\textrm{01}}(f)\ \middle|\ \tilde{A} = 0 \right] 
= (1- \delta)\left(\mathbb{E}\left[ \ell^{\textrm{hinge}}(f)\ \middle|\ \tilde{A} = 0 \right] - \eta_{-+} - \eta_{+-} \right)
\end{equation}
and since the hinge loss is always an upper bound on the 0-1 loss
\begin{equation}
\mathbb{P}(\tilde{A} = 1) \mathbb{E}\left[ \ell^{\textrm{01}}(f)\ \middle|\ \tilde{A} = 1 \right]
\leq \delta\mathbb{E}\left[ \ell^{\textrm{hinge}}(f)\ \middle|\ \tilde{A} = 1 \right]
\end{equation}
From Lemma \ref{lem:hardnessexistsgoodhinge}, the hinge loss of $f'$ (which is at most the hinge loss of $f$) is upper bounded by $2(1-\delta)(\eta_{-+} + \eta_{+-}) + 2\delta + \epsilon$. Thus,
\begin{align}
\mathcal{L}_{\tilde{\mathcal{D}}}^{\textrm{01}}(f') 
&\leq (1- \delta)\left(\mathbb{E}\left[ \ell^{\textrm{hinge}}(f)\ \middle|\ \tilde{A} = 0 \right] - \eta_{-+} - \eta_{+-} \right) 
+ \delta\mathbb{E}\left[ \ell^{\textrm{hinge}}(f)\ \middle|\ \tilde{A} = 1 \right] \\
&= \mathcal{L}_{\tilde{\mathcal{D}}}^{\textrm{hinge}}(f') - (1-\delta)(\eta_{-+} + \eta_{+-}) \\
&\leq (1-\delta)(\eta_{-+} + \eta_{+-}) + 2\delta + \epsilon
\end{align}
\end{proof}

\begin{proof}[Proof of Lemma \ref{lem:01lossessimilar}]
Because $f'$ is $\alpha$-discriminatory at threshold $0$
\begin{equation}
\begin{aligned}
\abs{\mathbb{P}_{\tilde{\c{D}}}(f' \geq 0\ |\ \tilde{Y} = -1, \tilde{A} = 0) - \mathbb{P}_{\tilde{\c{D}}}(f' \geq 0\ |\ \tilde{Y} = -1, \tilde{A} = 1)} &\leq \alpha \\
\abs{\mathbb{P}_{\tilde{\c{D}}}(f' < 0\ |\ \tilde{Y} = 1,\phantom{-} \tilde{A} = 0) - \mathbb{P}_{\tilde{\c{D}}}(f' < 0\ |\ \tilde{Y} = 1,\phantom{-} \tilde{A} = 1)} &\leq \alpha
\end{aligned}
\end{equation}
Let $f''(x) = f'([0,x], 1)$, then
\begin{align}
\mathcal{L}_{\tilde{\c{D}}}^{01}(h) 
=&\ \mathbb{P}_{\tilde{\c{D}}}(\tilde{Y} = -1, \tilde{A} = 0)\mathbb{P}_{\tilde{\c{D}}}(f' \geq 0\ |\ \tilde{Y} = -1, \tilde{A} = 0) \\
 &+\mathbb{P}_{\tilde{\c{D}}}(\tilde{Y} = 1,\phantom{-} \tilde{A} = 0)\mathbb{P}_{\tilde{\c{D}}}(f' < 0\ |\ \tilde{Y} = 1,\phantom{-} \tilde{A} = 0) \nonumber\\ 
&+\mathbb{P}_{\tilde{\c{D}}}(\tilde{Y} = -1, \tilde{A} = 1)\mathbb{P}_{\tilde{\c{D}}}(f' \geq 0\ |\ \tilde{Y} = -1, \tilde{A} = 1) \nonumber\\
 &+\mathbb{P}_{\tilde{\c{D}}}(\tilde{Y} = 1,\phantom{-} \tilde{A} = 1)\mathbb{P}_{\tilde{\c{D}}}(f' < 0\ |\ \tilde{Y} = 1,\phantom{-} \tilde{A} = 1) \nonumber\\
\geq&\ \mathbb{P}_{\tilde{\c{D}}}(\tilde{Y} = -1, \tilde{A} = 0)\left(\mathbb{P}_{\tilde{\c{D}}}(f' \geq 0\ |\ \tilde{Y} = -1, \tilde{A} = 1) - \alpha \right) \\
&+\mathbb{P}_{\tilde{\c{D}}}(\tilde{Y} = 1,\phantom{-} \tilde{A} = 0)\left( \mathbb{P}_{\tilde{\c{D}}}(f' < 0\ |\ \tilde{Y} = 1,\phantom{-} \tilde{A} = 1) - \alpha \right) \nonumber\\ 
&+\mathbb{P}_{\tilde{\c{D}}}(\tilde{Y} = -1, \tilde{A} = 1)\mathbb{P}_{\tilde{\c{D}}}(f' \geq 0\ |\ \tilde{Y} = -1, \tilde{A} = 1) \nonumber\\
&+\mathbb{P}_{\tilde{\c{D}}}(\tilde{Y} = 1,\phantom{-} \tilde{A} = 1)\mathbb{P}_{\tilde{\c{D}}}(f' < 0\ |\ \tilde{Y} = 1,\phantom{-} \tilde{A} = 1) \nonumber\\
=&\ \left(\mathbb{P}_{\tilde{\c{D}}}(\tilde{Y} = -1, \tilde{A} = 0) + \mathbb{P}_{\tilde{\c{D}}}(\tilde{Y} = -1, \tilde{A} = 1)\right) \mathbb{P}_{\tilde{\c{D}}}(f' \geq 0\ |\ \tilde{Y} = -1, \tilde{A} = 1) \\
&+\left(\mathbb{P}_{\tilde{\c{D}}}(\tilde{Y} = 1,\phantom{-} \tilde{A} = 0) + \mathbb{P}_{\tilde{\c{D}}}(\tilde{Y} = 1,\phantom{-} \tilde{A} = 1) \right)\mathbb{P}_{\tilde{\c{D}}}(f' < 0\ |\ \tilde{Y} = 1,\phantom{-} \tilde{A} = 1) \nonumber\\
&+\left(\mathbb{P}_{\tilde{\c{D}}}(\tilde{Y} = -1, \tilde{A} = 0) + \mathbb{P}_{\tilde{\c{D}}}(\tilde{Y} = 1,\phantom{-} \tilde{A} = 0) \right)(-\alpha) \nonumber\\
=&\ \mathbb{P}_{\tilde{\c{D}}}(\tilde{Y} = -1) \mathbb{P}_{\tilde{\c{D}}}(f' \geq 0\ |\ \tilde{Y} = -1, \tilde{A} = 1) \\
& + \mathbb{P}_{\tilde{\c{D}}}(\tilde{Y} = 1)\phantom{-}\mathbb{P}_{\tilde{\c{D}}}(f' < 0\ |\ \tilde{Y} = 1,\phantom{-} \tilde{A} = 1) - \alpha\mathbb{P}_{\tilde{\c{D}}}(\tilde{A} = 0) \\
=&\ \mathbb{P}_{\c{D}}(Y = -1) \mathbb{P}_{\c{D}}(f'' \geq 0\ |\ Y = -1) \\
&+ \mathbb{P}_{\c{D}}(Y = 1)\mathbb{P}_{\c{D}}(f'' < 0\ |\ Y = 1) - \alpha(1-\delta) \nonumber\\
=&\ \c{L}_{\c{D}}^{01}(f'') - \alpha(1-\delta)
\end{align}
The lemma follows immediately.
\end{proof}

\section{Proofs for Section \ref{sec:2ndorder} - Relaxing non-discrimination} \label{appendix:2ndorder}

We use $\Cov{\cdot}$ to denote covariances involving a vector and we reserve $\cov{\cdot}$ for scalar covariances. We start with recalling some facts about Gaussian random variables.

\begin{proposition}
If $(U,V,W)$ are jointly Gaussian, then:
\begin{itemize}
 \item \textbf{Conditional expectation} $\E[U|V]$ is linear in $V$ and is given by:
    \[
        \E[U|V] = \E[U] + \Cov{U,V} \Cov{V}^{-1} (V-\E[V]).
    \]
 \item \textbf{Conditional covariance} $\Cov{U,V|W}$ does not depend on $W$ and it is always equal to:
    \[
        \Cov{U,V|W} = \Cov{U,V}-\Cov{U,W}\Cov{W}^{-1}\Cov{W,V}.
    \]
\end{itemize}
\end{proposition}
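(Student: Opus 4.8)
The plan is to reduce both assertions to one structural fact about jointly Gaussian vectors, namely that a zero cross-covariance block forces independence, combined with the fact that an affine image of a Gaussian vector is again Gaussian. Concretely, I would use the ``Gaussian residual'' form of the linear least-squares projection: subtract off the natural linear predictor, check the residual is uncorrelated with the conditioning variable, deduce (by joint Gaussianity) that it is in fact independent of it, and conclude that its conditional law coincides with its unconditional law.

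First, for the conditional expectation, I would set
\[
    \tilde U := U - \E[U] - \Cov{U,V}\,\Cov{V}^{-1}\,(V-\E[V]),
\]
where, if $\Cov{V}$ is singular, the inverse is read as a pseudoinverse and one restricts to the support of $V$ (I would remark on this but not dwell on it). A one-line bilinearity computation gives $\Cov{\tilde U,V} = \Cov{U,V} - \Cov{U,V}\,\Cov{V}^{-1}\,\Cov{V} = 0$. Since $(\tilde U,V)$ is an affine function of the jointly Gaussian pair $(U,V)$, it is jointly Gaussian, hence $\tilde U \independent V$, so $\E[\tilde U \mid V] = \E[\tilde U] = 0$. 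Rearranging yields $\E[U\mid V] = \E[U] + \Cov{U,V}\,\Cov{V}^{-1}\,(V-\E[V])$, which is in particular affine in $V$.

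Next, for the conditional covariance, I would run the analogous argument with respect to $W$, defining
\[
    \tilde U := U - \E[U] - \Cov{U,W}\,\Cov{W}^{-1}\,(W-\E[W]), \qquad \tilde V := V - \E[V] - \Cov{V,W}\,\Cov{W}^{-1}\,(W-\E[W]).
\]
As above, $\Cov{\tilde U,W} = \Cov{\tilde V,W} = 0$, and $(\tilde U,\tilde V,W)$ is jointly Gaussian, so $(\tilde U,\tilde V)\independent W$. Writing $U = \tilde U + g(W)$ and $V = \tilde V + h(W)$ with $g,h$ affine in $W$, conditioning on $W=w$ makes $g(w),h(w)$ constants, so $\Cov{U,V\mid W} = \Cov{\tilde U,\tilde V\mid W} = \Cov{\tilde U,\tilde V}$, the last step by the independence just established (which also shows the quantity is independent of $w$). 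Finally, expanding $\Cov{\tilde U,\tilde V}$ by bilinearity, the three terms involving $W$ combine as $-\Cov{U,W}\Cov{W}^{-1}\Cov{W,V} - \Cov{U,W}\Cov{W}^{-1}\Cov{W,V} + \Cov{U,W}\Cov{W}^{-1}\Cov{W}\Cov{W}^{-1}\Cov{W,V}$, where the last cancels one copy of the middle, leaving $\Cov{\tilde U,\tilde V} = \Cov{U,V} - \Cov{U,W}\Cov{W}^{-1}\Cov{W,V}$, as claimed.

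The only genuinely delicate point — and the one I would state carefully rather than gloss over — is the passage from ``uncorrelated'' to ``independent'' for jointly Gaussian variables, which I would either cite as standard or justify in one line by factoring the joint characteristic function once the off-diagonal covariance block vanishes, together with the minor bookkeeping when $\Cov{V}$ or $\Cov{W}$ fails to be invertible. Everything else is routine covariance-matrix algebra.
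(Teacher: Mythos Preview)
Your argument is correct and is the standard textbook derivation. Note, however, that the paper does not actually prove this proposition: it is stated in the appendix as a recalled fact about Gaussian variables (``We start with recalling some facts about Gaussian random variables'') and then used without proof in the subsequent arguments. So there is nothing to compare against --- your residual-plus-independence approach is exactly the classical one and would serve perfectly well as the missing justification.
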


\subsection{Proof of Theorem \ref{thm:gaussian}}

First, let us show the second claim. Let $R$ be any linear predictor. By linearity, it follows that $(R,A,Y)$ are jointly Gaussian. By the conditional covariance formula, we have:
\[
    \cov{R A|Y}=\cov{R A} - \cov{R Y}\cov{Y A}/\var{Y}.
\]
If $R$ satisfies equalized correlations, then the right-hand side here is $0$. It follows that $R$ and $A$ are uncorrelated conditionally on $Y$. But since they are jointly Gaussian, they are also independent conditionally on $Y$. Therefore $R$ also satisfies equalized odds non-discrimination. The converse also holds: if $R$ satisfies equalized odds, then $R$ and $A$ are uncorrelated given $Y$, and therefore equalized correlations is satified.

Now let us move back to the main claim. Assume, without loss of generality, that all variables are centered. Let us first find the optimal (a priori not necessarily linear) predictor that satisfies the relaxed second-moment non-discrimination criterion. In particular, the Lagrangian to minimize may be written as:
\[
    \E[(R-Y)^2]-\lambda\left(\var{Y}\E[RA]-\cov{AY}\E[RY]\right),
\]
But just like in the unconstrained least squares problem, we may apply the law of total expectatons to condition the loss and the $R$- terms in the second-moment non-discrimination constraint to be conditioned on $X$ and $A$. Thus the optimum is achieved for each $X,A$ by minimizing the following Lagrangian:
\[
    \E[(R-Y)^2|X,A]-\lambda\left(\var{Y}AR-\cov{AY}\E[Y|X,A]R\right),
\]
or equivalently
\[
    R^2-2\E[Y|X,A]R-\lambda\left(\var{Y}AR-\cov{AY}\E[Y|X,A]R\right).
\]
It follows that the optimal $R$ is a linear function of $A$, $\E[Y|X,A]$ and $\lambda$. $\lambda$ is determined over the statistics of the problem, and therefore it is a constant that does not depend on specific values of $X$ and $A$, and $\E[Y|X,A]$ in the Gaussian setting is linear. It thus follows that the optimum, let's call it $R_\circ$, is a linear function of $X$ and $A$. It minimizes the expected square loss subject to a relaxed non-discrimination criterion, therefore it is not larger than the optimizer under the stricter constraint. Conversely, by linearity it does also satisfy the stricter constraint, and is thus no smaller than the optimizer under that constraint (recall that we didn't start out by imposing linearity). Therefore $R_\circ$ is precisely the squared loss optimal equalized odds non-discriminatory predictor. 

\subsection{Optimal equalized correlations linear predictor} \label{proof:opt-eq-cor-lin}

We write the proofs more generally for a vector-valued protected attribute $A$, and the scalar case follows directly. In this case $\uv$ is a matrix and $\alpha$ is a vector mixing the columns of $\uv$ (so the correction has the term $v\alpha$). First note that condition \eqref{eq:equalized-correlations} translates into a linear constraint on $\weight$ in the least-squares problem of \eqref{eq:opt-eq-cor-lin}. Using the bilinearity of the covariance, this constraint is equivalent to:
\[
    \weight\tp \Cov{\XA,A}-\weight\tp\Cov{\XA,Y}~\Cov{Y,A}/\var{Y}\equiv\weight\tp\uv=0.
\]
We can now write the cost function with a vector of Lagrange multipliers:
\[
    J(\weight,\lambda)=\E[(Y-\weight\tp \XA)^2]+ \weight\tp \uv\lambda,
\]
whose gradient is
\[
    \nabla_\weight J = 2\Cov{\XA,\XA}\weight -2\Cov{\XA,Y}+\uv\lambda.
\]
Setting this to zero, the optimality conditions give us the claimed functional form. The vector $\alpha$ can then be obtained by enforcing the constraint:
\[
    \weight\tp \uv = \left(\Cov{Y,\XA} -\alpha\tp~\uv\tp \right)\Cov{\XA,\XA}^{-1}\uv=0
\]
and thus
\[
  \uv\tp\Cov{\XA,\XA}^{-1}\uv\alpha=\uv\tp\Cov{\XA,\XA}^{-1}\Cov{\XA,Y}.
\]

\subsection{Proof of Theorem \ref{thm:no-gap}}
First, let us rewrite $R^\star$ as:
\begin{eqnarray*}
    R^\star
        &=&{\weight^\star}\tp\XA\\
        &=&{\widehat\weight}\tp\XA - \alpha\tp \uv\tp\Cov{\XA,\XA}^{-1} \XA\\
        &=&\widehat R - \alpha\tp \uv\tp\Cov{\XA,\XA}^{-1} \XA\\
\end{eqnarray*}
Next, recall that
\[
    \uv = \Cov{\XA,A} - \Cov{\XA,Y}\Cov{Y,A}/\var{Y},
\]
and since
\[
    \Cov{\XA,\XA} = \smatrix{\Cov{X,\XA} \\ \Cov{A,\XA}}\ \Rightarrow\ \Cov{A,\XA} \Cov{\XA,\XA}^{-1} = \smatrix{ 0_{\dim(A)\times\dim(X)} & I_{\dim(A)}},
\]
we have
\[
    \uv \tp \Cov{\XA,\XA}^{-1}= \smatrix{0 & I} - \Cov{A,Y}{\widehat\weight}\tp/\var{Y}.
\]
Therefore 
\[
    \uv\tp\Cov{\XA,\XA}^{-1}\XA= A-\frac{\widehat R}{\var{Y}}\Cov{A,Y}
\]
and can be derived from $(\widehat R,A,Y)$. Then multiplying from the right by $\uv$ and using the bilinearity of the covariance, we get the terms in $\alpha$:
\[
    \uv \tp \Cov{\XA,\XA}^{-1}\uv=\Cov{A,A} -\tfrac{1}{\var{Y}}\Cov{A,Y}\Cov{Y,A}-   \tfrac{1}{\var{Y}}\Cov{A,Y}\left(\Cov{\widehat R,A}-\tfrac{1}{\var{Y}}\Cov{\widehat R,Y}\Cov{Y,A}\right),
\]
\[
    \textrm{and}\  \uv \tp \Cov{\XA,\XA}^{-1}\Cov{\XA,Y}=\Cov{A,Y} - \tfrac{1}{\var{Y}}\Cov{A,Y}\Cov{\widehat R,Y}.
\]
This shows that $\alpha$ also derives from $(\widehat R,A,Y)$ as stated. To simplify the expression of $\alpha$ to the one claimed, note that we have $\Cov{\widehat R,A}=\Cov{Y,A}$. This is because $\widehat R$ is the squared loss optimal linear predictor of $Y$ given $A$ and $X$, and thus $\widehat R-Y$ is uncorrelated with any linear function of $A$ and $X$, and in particular $A$. This completes the proof.

\end{document}